\documentclass[10pt,oneside]{article}

\usepackage[utf8]{inputenc} %
\usepackage[T1]{fontenc}    %
\usepackage{url}            %
\usepackage{booktabs}       %
\usepackage{amsfonts}       %
\usepackage{nicefrac}       %
\usepackage{microtype}      %

\usepackage{lmodern}
\usepackage{times}

\usepackage{amssymb,amsmath,amsthm}
\usepackage{bbold}
\usepackage[short]{optidef}

\usepackage{framed,multirow,multicol}

\usepackage{xcolor,xspace}
\usepackage{lscape}
\usepackage{subfigure,graphicx,epsfig,tikz,caption}
\usepackage[normalem]{ulem}
\usepackage{enumerate}
\usepackage{verbatim}
\usepackage{makeidx,latexsym}
\usepackage[colorlinks=true,citecolor=blue]{hyperref}%

\usepackage{bm}
\usepackage{stmaryrd}
\usepackage{lscape}

\usepackage[english]{babel}
\usepackage{bbm}
\usepackage{pgfplots}

\usepackage{parskip}
\usepackage[letterpaper,margin=1.1in]{geometry}

\usepackage[numbers,sort&compress,square,comma]{natbib}

\usepackage[utf8]{inputenc} %
\usepackage[T1]{fontenc}    %
\usepackage{hyperref}       %
\usepackage{url}            %
\usepackage{booktabs}       %
\usepackage{amsfonts}       %
\usepackage{nicefrac}       %
\usepackage{microtype}      %
\usepackage{xcolor}         %

\usepackage{algorithm,algpseudocode}
\usepackage{amssymb,amsmath,amsthm}
\newtheorem{theorem}{Theorem}[section]
\newtheorem{proposition}[theorem]{Proposition}

\newtheorem{lemma}[theorem]{Lemma}

\newcommand{\scale}{{S}}
\newcommand{\barn}{\bar{N}}

\newcommand{\I}{\mathcal{I}}
\newcommand{\J}{\mathcal{J}}
\newcommand{\R}{\mathcal{U}}
\newcommand{\RP}{\mathcal{P}}
\newcommand{\Tc}{T_{\text{max}}}
\newcommand{\Ts}{\tau}
\newcommand{\imax}{i^*}
\newcommand{\procname}{get\_job\_class}%
\newcommand{\pnalg}{{\sc PN $c\mu$ rule}}

\title{Scheduling jobs with stochastic holding costs}

\author{
	Dabeen Lee\thanks{Department of Industrial and Systems Engineering, KAIST, Daejeon 34141, Republic of Korea, \url{dabeenl@kaist.ac.kr}}
	\and
	Milan Vojnovic\thanks{Department of Statistics, London School of Economics, London, United Kingdom, \url{m.vojnovic@lse.ac.uk}}
}	

\date{\today}

\begin{document}

\maketitle

\begin{abstract}
	
	We study a single-server scheduling problem for the objective of minimizing the expected cumulative holding cost incurred by jobs, where parameters defining stochastic job holding costs are unknown to the scheduler. We consider a general setting allowing for different job classes, where jobs of the same class have statistically identical holding costs and service times, with an arbitrary number of jobs across classes. In each time step, the server can process a job and observes random holding costs of the jobs that are yet to be completed. We consider a learning-based $c\mu$ rule scheduling which starts with a preemption period of fixed duration, serving as a learning phase, and having gathered data about jobs, it switches to nonpreemptive scheduling. Our algorithms are designed to handle instances with large and small gaps in mean job holding costs and achieve near-optimal performance guarantees. The performance of algorithms is evaluated by regret, where the benchmark is the minimum possible total holding cost attained by the $c\mu$ rule scheduling policy when the parameters of jobs are known. We show regret lower bounds and algorithms that achieve nearly matching regret upper bounds. Our numerical results demonstrate the efficacy of our algorithms and show that our regret analysis is nearly tight.
\end{abstract}

\tableofcontents

\section{Introduction}

We consider a scheduling problem for jobs with stochastic holding costs which is described as follows: given a set of jobs, each %
incurring random cost over time steps until its completion with unknown mean value, make scheduling decisions of which job to process in each time step, with the objective of minimizing the expected total cumulative cost. Here, we need an algorithm that seamlessly integrates learning of mean job holding costs and scheduling.

The problem of scheduling jobs with time-varying holding costs arises in several different applications. In online social media platforms, content moderation requires scheduling of content review jobs, which have holding costs driven by the number of accumulated views as views of harmful content items represent a community-integrity cost \cite{fb}.
In data processing platforms, complex jobs are processed whose characteristics are often unknown in advance, but as %
the system learns more about the jobs' features, it may flexibly adjust scheduling decisions to serve jobs with high priority first \cite{decima}.
Another application is in optimizing energy consumption of servers in data centers, where a job waiting to be served uses energy-consuming resources \cite{power,energy-survey}. %
In emergency medical departments, patients undergo triage while being treated, and schedules for serving patients are flexibly adjusted depending on their conditions~\cite{stillman,triage-two-classes}. Note that patients' conditions may get worse while waiting, which corresponds to holding costs in our problem. For aircraft maintenance, diagnosing the conditions of parts and applying the required measures to repair them are conducted in a combined way~\cite{diagnostic}.

\begin{figure}[t!]
	\begin{center}
		\includegraphics[width=0.4\linewidth]{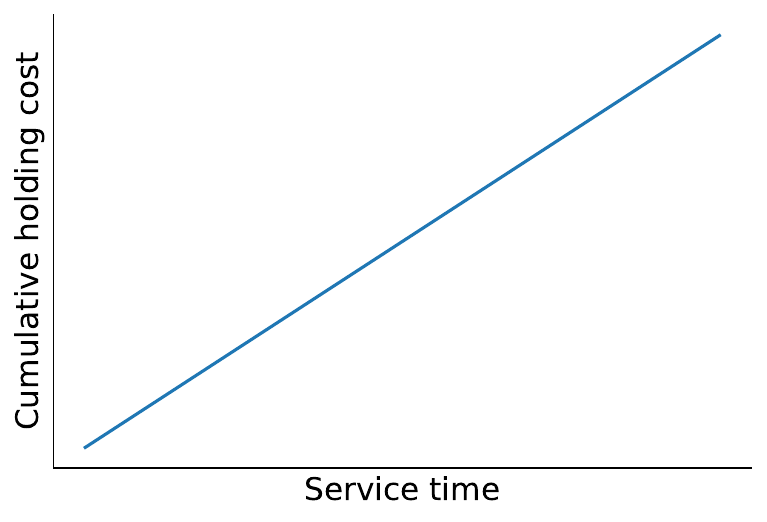}\hspace*{1.5cm}
		\includegraphics[width=0.4\linewidth]{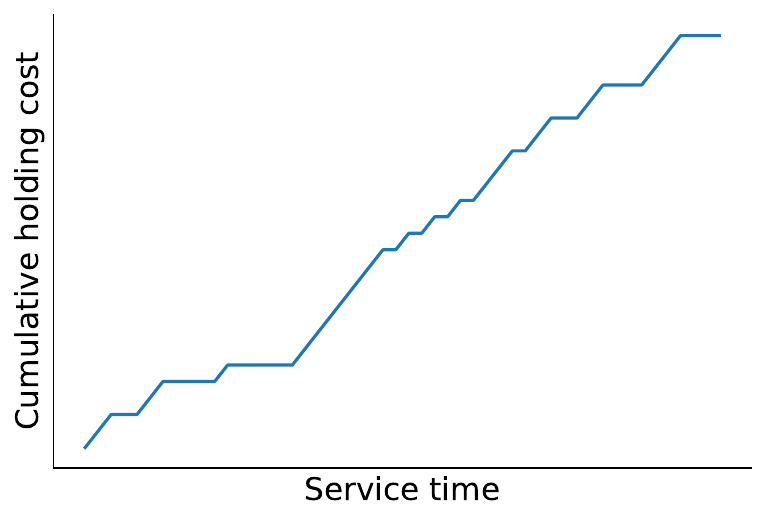}\hspace*{0.5cm}
		\caption{An illustration of cumulative job holding cost: (left) deterministic, (right) stochastic.}\label{fig:illustration}
	\end{center}
\end{figure}

We study a single server scheduling system where jobs incur independent holding costs, with each job having a time-varying holding cost according to a stochastic process with independent and identically distributed increments with unknown mean value, and independent service times with known mean values. Recent works, e.g. \cite{learning-cmu-rule}, started investigating queuing system control policies under uncertainty about jobs' mean service time parameters, where job holding costs are deterministic (linear) functions of stochastic job waiting times. In our problem setting, job holding costs are stochastic in a different way in that job holding costs themselves are according to some exogenous stochastic processes. For the aforementioned application scenarios, it is natural to model a job's holding costs by a stochastic process. As the first step towards understanding the case of stochastic job holding costs, we consider a single-server scheduling for a given set of jobs. An illustration of deterministic and stochastic job holding costs is shown in Figure~\ref{fig:illustration}. On the one hand, classic queuing system literature assumes stochastic job service times and deterministic job holding costs, which are proportional to job waiting times. On the other hand, we study systems where job service times are either deterministic or stochastic, and job holding costs are stochastic.

We consider a setting in which each job belongs to a job class with all jobs of the same class having identical mean holding costs and mean service times. The classes of jobs are known to the scheduler, and the information about job classes can be leveraged by the scheduler for learning mean job holding costs and making scheduling decisions. The number of distinct job classes is allowed to be arbitrary and so is the distribution of jobs over distinct job classes. Our framework also covers the case when the scheduler has no access to the information about job classes as a special case where jobs are of distinct classes. In some situations in practice, information about job classes can be available to the scheduler. For example, in online social media platforms, the scheduler may have access to features of content such as information about the author, content, and other content-item specific information.   

We consider service systems where jobs can be served preemptively, meaning that the scheduler can switch the server from serving the job that is currently being served to serving another job before the current job is completed. This is unlike non-preemptive scheduling where the server must complete serving an assigned job before switching to serving another job. Preemption allows the scheduler to adjust decisions at any timescale based on gathered observations. Because the parameters of stochastic job holding cost processes are unknown to the scheduler, the main challenge is to efficiently learn these parameters in order to realize a near-accurate priority ranking of jobs for minimizing the total accumulated cost.

We consider a learning-based $c\mu$ rule scheduling policy, under which jobs are selected according to the $c\mu$ index estimates obtained from observed data, first according to preemptive scheduling and then according to non-preemptive scheduling discipline. We show theoretical results on regret defined as the difference between the expected total holding cost achieved by an algorithm and the expected total cost achieved by the $c\mu$-rule scheduling policy when the marginal holding costs and mean service time parameters are known. We show a worst-case regret bound in terms of the total number of jobs and a scaling factor for mean job service times. We may think of this scaling factor to represent the rate at which information about stochastic job holding costs is observed by the scheduler. We show lower bounds on regret for any algorithm, which show that our regret upper bounds are nearly optimal.

Previous works \cite{learning-cmu-rule,scheduling-testing,triage-two-classes} 
considered the problem of minimizing the expected total holding cost under different assumptions about uncertainties, either assuming that marginal holding costs of jobs are deterministic and known and mean job service times are unknown, or that both marginal holding costs of jobs and mean service times are a-priori unknown and become known after testing a job.

\subsection*{Related work}

The scheduling problem asking to minimize the sum of weighted completion times for a given set of jobs, with weights $c_i$ and service times $1/\mu_i$, was studied in the seminal paper by Smith \cite{smith}, showing that serving jobs in decreasing order of indices $c_i\mu_i$ is optimal. This policy is often referred to as the \emph{Smith's rule} or $c\mu$ rule. This policy corresponds to the weighted shortest processing time first (WSPT) policy in the literature on machine scheduling. The Smith's rule is also optimal for the objective of minimizing the expected sum of weighted completion times when job service times are random with mean values $1/\mu_i$. We refer the reader to \cite{pinedo} for a comprehensive coverage of various results.

Serving jobs by using $c_i\mu_i$ as the priority index is known to be an optimal scheduling policy for \emph{multi-class, single-server queuing systems}, with arbitrary job arrivals and random independent, geometrically distributed job service times with mean values $1/\mu_i$ \cite{cmu-rule-revisited}. %
A generalized $c\mu$ rule is known to be asymptotically optimal for convex job holding cost functions in a heavy-traffic regime, which corresponds to using a dynamic index defined as the product of the current marginal job holding cost and the job service rate \cite{M95}. This generalized $c\mu$ rule is also known to be asymptotically optimal in a heavy-traffic limit for \emph{multi-server} queuing systems under a certain resource polling condition \cite{MS04}.

The work discussed above on the performance of Smith's or $c\mu$ rule assumes that the marginal job holding costs and the mean job service times are known parameters to the scheduler. Only some recent work considered the performance of these rules when some of these parameters are unknown. In the line of work on \emph{scheduling with testing} \cite{scheduling-testing,triage-two-classes}, marginal job holding costs and the mean job service times are a-priori unknown, but their values for a job become known by \emph{testing} this job. The question there is about how to allocate the single server to processing or testing activities, which cannot be done simultaneously. The optimal policy combines testing the jobs up to certain time and serving the jobs according to the $c\mu$ rule policy thereafter. In \cite{learning-cmu-rule}, a multi-class queuing system is considered under %
assumption that %
marginal job holding costs are known and %
mean job service times are unknown to the scheduler. The authors established that using the empirical $c\mu$ rule in the single-server case guarantees a finite regret with respect to the $c\mu$ rule with known parameters as a benchmark. Similar result is established for the multi-server case by using the empirical $c\mu$ rule combined with an exploration mechanism. %

Our work is related to \emph{permutation} or \emph{learning to rank} problems, e.g. see \cite{Fogel15,ltr} and the references therein, where the goal is to find a linear order of items based on some observed information about individual items, or relations among them. Indeed, the objective of our problem can be seen as finding a permutation $\pi$ that minimizes the cost function $\sum_{i=1}^N i c_{\pi(i)}$, for the special case of identical mean processing time parameter values. For example, we may interpret $c_i$ as a measure of dissimilarity between item $i$ and a reference item, and the goal is to sort items in decreasing order of these dissimilarity indices. The precise objective is defined for a sequential learning setting where irrevocable ranking decisions for items need to be made over time and the cost in each time step is the sum of dissimilarity indices of items which are still to be ranked. 

Although this paper focuses on the objective of minimizing the total cumulative holding cost and equivalently the sum of weighted completion times, there are other types of scheduling problems where the goal is to control the queue length or to maximize the total throughput. Several works considered such scheduling problems under uncertain system parameters and developed algorithms that serve jobs while learning the uncertain parameters. For example, \cite{KriSenJohSha2016Learning,KriSenJohSha2020Learning} proposed a multi-armed bandit framework to model multi-server queuing systems where the servers' mean service rates are unknown, and they analyzed the notion of \emph{queue regret} defined as the difference between the queue lengths obtained by their algorithm and the optimal queue lengths. In~\cite{adaptive-matching}, jobs have unknown types, the posterior distributions of which are updated while attempting to serve them, and the goal is to maximize the system's throughput.

\subsection*{Summary of contributions}

We present an algorithm based on the \emph{empirical $c\mu$ rule}, that is, the $c\mu$ rule applied by using the current sample mean estimates of the mean job holding costs. Since the ranking of jobs based on the $\hat c_{i,t}\mu_i$ values, where $\hat c_{i,t}$ denotes the empirical mean of job $i$'s holding cost in time step $t$, may change over time, it is natural to consider two types of the empirical $c\mu$ rule, preemptive and nonpreemptive. Under the \emph{preemptive} empirical $c\mu$ rule, the server selects a job in every time step from the set of jobs which are not yet completed. In contrast, under the \emph{nonpreemptive} version, once a job is selected in a certain time step, the server has to commit to serving this job until its completion, and then, it may select the next job based on the empirical $c\mu$ rule. The preemptive empirical $c\mu$ rule works well for instances with large gaps between the jobs' mean holding costs, whereas the nonpreemptive one is better for cases where the jobs' mean holding costs are close. We show that if either preemptive or nonpreemptive scheduling is used exclusively, the expected regret can grow linearly in the scaling parameter $\scale$ of job service times in the worst case. The preemptive case may result in undesired delays especially for jobs with similar mean holding cost parameters, whereas the nonpreemptive case may suffer from early commitment to a job with low priority.

Our policy, the \emph{preemptive-then-nonpreemptive} empirical $c\mu$ rule, is a combination of the preemptive and nonpreemptive empirical $c\mu$ rules. This variant of empirical $c\mu$ rule has a fixed length of preemption phase followed by nonpreemptive scheduling of jobs. The preemption period is long enough to separate jobs with large gaps in their mean holding costs, while it is not too long so that we can control delay costs from the preemption phase to be small, thereby avoiding undesired delays from continuous preemption and the risk of early commitment.

In Section~\ref{sec:deterministic}, we give a theoretical analysis of our algorithm for the case of deterministic service times.  We prove that the expected regret of our empirical $c\mu$ rule is sublinear in the scaling factor $\scale$ and subquadratic in the total number of jobs $N$. We also show that this is near-optimal by providing a lower bound on the expected regret of any algorithm, which has the same scaling in $\scale$, and a small gap in terms of the dependence on $N$, when the largest job class has at most $cN$ jobs for some constant $0< c < 1$. For the case when the largest job class has $N - o(N)$ jobs, there is a substantial gap with respect to the dependence on $N$ between our upper and lower bounds. For this case, we propose a refined algorithm, which augments our empirical $c\mu$ rule with a prioritization of the largest job class. We show that this refined algorithm has the expected regret that is near-optimal with respect to both the dependence on $S$ and $N$.

In Section~\ref{sec:extensions}, we consider various extensions including allowing for mean job service times to be non-identical across job classes, instance-dependent regret upper bounds, and stochastic job service times. Our analysis shows that when the service time of each job is stochastic and geometrically distributed, the expected regret of our algorithm is also sublinear in $\scale$ and subquadratic in $N$.

Our regret bounds in Sections~\ref{sec:deterministic} and~\ref{sec:extensions} are obtained based on an equivalent representation of the expected regret that decomposes the regret into the delay costs due to preemption and the regret terms incurred by choosing a low priority job while there exists another job that has priority over the low priority one. For upper bounds, the key part is to argue that even if our algorithm chooses a lower priority job, the gap between the job and the job with the highest priority is not too large. For lower bounds, we consider problem instances where two classes of jobs are statistically so close that any algorithm cannot avoid making suboptimal selection of jobs.

Finally, in Section~\ref{sec:experiments}, we presents results of numerical experiments that demonstrate the performance of our algorithms and validate our theoretical results.

\section{Problem formulation}\label{sec:problem}

We consider a discrete-time single-server scheduling system with one or more job classes. Let $\I=\{1,\ldots, I\}$ denote the set of job classes, where $I\geq 1$. All jobs are present in the system from the beginning, and we assume no further job arrivals. For each class $i\in \I$, we denote by $N_i$ the number of jobs of class $i$ and let $\J_i$ denote the set of jobs of class $i$ at the beginning. Let $N=\sum_{i\in\I}N_i$ denote the total number of jobs to be served, and $\J = \cup_{i\in \I} \J_i$. Notice that it suffices to consider $1\leq I \leq N$ where $I = N$ corresponds to the case when jobs are of distinct classes.  

Every job incurs a random holding cost until its completion according to a stochastic process with independent and identically distributed (i.i.d.) increments with a sub-Gaussian distribution. A random variable $X$ with mean $c$ is sub-Gaussian with parameter $\sigma$ if $\mathbb{E}[X^{\lambda(X-c)}]\leq\exp(\sigma^2\lambda^2/2)$ for all $\lambda\in \mathbb{R}$. %
The class of sub-Gaussian distributions accommodate different parametric distributions, e.g. Bernoulli and Poisson distributions, which are suitable for modeling stochastic holding costs. %
The mean holding costs per unit time of jobs of different classes are of values $c_1, \ldots, c_I$, which are unknown to the scheduler. We assume that the values of $c_1, \ldots, c_I$ are in a bounded interval. Note that if $X$ is sub-Gaussian with parameter $\sigma$, then for any $\sigma^\prime\geq \sigma$, it is sub-Gaussian also with parameter $\sigma^\prime$. Moreover, if $X$ is sub-Gaussian with parameter $\sigma$, then $X/\beta$ is sub-Gaussian with parameter $\sigma/\beta$ for any $\beta>0$. As the total holding cost depends linearly on $c_1,\ldots, c_I$, we assume that $c_i\in[0,1]$ and $\sigma_i=1$ for $i\in\I$ without loss of generality.

The number of service time steps to complete a job of class $i\in\I$ is assumed to be deterministic of value $\scale/\mu_i$ for each $i$, where $\scale$ is a scaling parameter. The larger the value of $\scale$, the larger the number of observations of stochastic costs for each job. Note that a large value of $\scale$ does not necessarily mean that the mean job service times are large in real time. The scaling parameter $S$ may reflect the frequency of scheduling decisions and the rate at which holding costs change in real time. In addition to the case of deterministic job service times, we also consider the case of stochastic job service times, assumed to be according to geometric distributions, which is a standard case studied in the queueing systems literature.

We analyze the performance of our scheduling policy against the minimum (expected) cumulative holding cost that can be achieved when the decision-maker has complete knowledge about the jobs' mean holding costs. The famous $c\mu$ \emph{rule}, which sequentially processes jobs in the decreasing order of their $c_i\mu_i$ values, is known to guarantee the minimum cumulative holding cost, so we use this as our benchmark. Assuming $c_1\mu_1\geq c_2\mu_2\geq\cdots\geq c_I\mu_I$, it is optimal to serve the $N_1$ jobs of class 1 first, the $N_2$ jobs of class 2 next, and so on. Note again that the $c\mu$ rule can be implemented only when the values of $c_1,\ldots,c_I$ are fully known. One can measure the performance of a scheduling algorithm based on partial information about the jobs' mean holding costs by analyzing the following notion of \emph{regret}. Given a (randomized) scheduling policy $\pi$, %
the cumulative holding cost under $\pi$ up to time $T\geq 1$ is given by
$$\sum_{t=1}^T\sum_{i\in\I}\sum_{n\in\J_{i,t}^\pi}X_{n,t}$$
where $\J_{i,t}^\pi$ is the set of remaining jobs in class $i$ at time $t$ under policy $\pi$ and $X_{n,t}$ is the random holding cost incurred by job $n$ at time $t$. Note that the cumulative holding cost depends on the randomness in the holding costs of jobs and the scheduling policy $\pi$ that determines $\J_{i,t}^\pi$ for $i\in\I$ and $t\geq 1$. 
Then we define the \emph{expected regret} of  scheduling policy $\pi$ up to time $T\geq 1$ as
$$\text{Regret}^\pi(T):=\mathbb{E}\left[\sum_{t=1}^T\sum_{i\in\I}\sum_{n\in\J_{i,t}^\pi}X_{n,t}\right]- C^*(T)$$
where $C^{*}(T)$ denotes the expected cumulative holding cost under the $c\mu$ rule.

Recall that  $\mathbb{E}\left[X_{n,t}\right]=c_i$ for any job $n\in \J_{i}$ at any time $t$. Moreover, the distribution of $X_{n,t}$ is determined once $\left\{\J_{i,t}^\pi\right\}_{i\in\I,t\in[T]}$ is fixed. Then the expectation of the cumulative holding cost conditioned on $\left\{\J_{i,t}^\pi\right\}_{i\in\I,t\in[T]}$ can be expressed in terms of the mean holding costs of jobs. We denote by $C^\pi(T)$ the conditional expectation under policy $\pi$ for time $T\geq 1$, so we have
$$C^\pi(T):=\mathbb{E}\left[\sum_{t=1}^T\sum_{i\in\I}\sum_{n\in\J_{i,t}^\pi}X_{n,t}\mid \left\{\J_{i,t}^\pi\right\}_{i\in \I, t\in[T]}\right]=\sum_{t=1}^T\sum_{i\in\mathcal{I}}c_i |\J_{i,t}^\pi|$$
where the second equality follows from $\mathbb{E}\left[X_{n,t}\right]=c_i$ for any job $n\in \J_{i,t}^\pi$ and $t\geq 1$.
Here, as $\J_{i,t}^\pi$'s are random, $C^\pi(T)$ is also a random variable. Nevertheless, by the law of iterated expectations, the expectation of the cumulative holding cost is given by $\mathbb{E}\left[ C^\pi(T)\right]$. Based on this, we obtain the following equivalent definition of the expected regret.
$$\text{Regret}^\pi(T)=\mathbb{E}\left[C^\pi(T)\right]- C^*(T).$$
Under the $c\mu$ rule, the number of remaining class $i$ jobs at time $t$ under the $c\mu$ rule, denoted $\J_{i,t}^*$, is deterministic as the service time of each job is deterministic. This implies that the expectation of the cumulative holding cost under the $c\mu$ rule, for which we introduced the notation $C^*(T)$, is given by $\sum_{t=1}^T \sum_{i\in\I}c_i|\J_{i,t}^*|$.

In this paper, we are interested in the regret at a time step at which all jobs have been served. Any work conserving policy, not letting the server idle whenever there is a job waiting to be served, completes all jobs after precisely $\Tc:=\sum_{i\in\I}N_iS/\mu_i$ time steps. After $\Tc$, as there is no job waiting to be served, $\text{Regret}^{\pi}(T)$ remain the same as  $\text{Regret}^{\pi}(\Tc)$. For this reason, we focus on characterizing $\text{Regret}^{\pi}(\Tc)$. In particular, we provide strong upper and lower bounds on $\text{Regret}^{\pi}(\Tc)$, for which it is sufficient to obtain bounds on $C^\pi:=C^\pi(\Tc)$ and $R^\pi:=C^\pi-C^*(\Tc)$ because $\text{Regret}^{\pi}(\Tc)=\mathbb{E}\left[C^\pi\right]-C^*(\Tc)=\mathbb{E}\left[R^\pi\right]$. Throughout the paper, we refer to $R^\pi$ as the \emph{regret at completion}.

The regret at completion is directly related to the jobs' \emph{completion times}. The completion time of a job is basically the number of time slots in which the job remains in the system.
Note that the term $C^{\pi}$ can be expressed in terms of jobs' completion times as follows:
$$C^{\pi}=\sum_{i\in\I}\sum_{n\in\J_i} c_i T_{i,n}^\pi$$
where $T_{i,n}^\pi$ is the completion time of job $n$ of class $i$ served under $\pi$. %
Under the $c\mu$ rule, the $n$th job of class $i$ stays in the system for $n\scale/\mu_i+\sum_{j\in[i-1]}N_j\scale/\mu_j$ time steps. %
Moreover, $C^{*}:=C^*(\Tc)$ is given by 
\begin{equation}\label{min-cost}
	C^*=\sum_{i\in\I}\sum_{n\in\J_i}c_i\left(\frac{n\scale}{\mu_i}+\sum_{j=1}^{i-1}\frac{N_j\scale}{\mu_j}\right)=\sum_{i\in\I}c_i\left(\frac{N_i(N_i-1)}{2\mu_i}+\sum_{j=1}^{i-1}\frac{N_iN_j}{\mu_j}\right)\scale,
\end{equation}
which is equal to the minimum expected cumulative holding cost.
When $c_i$'s and $\mu_i$'s are fixed, %
$$C^*=O\left(\sum_{i\in\I}N_i^2S+\sum_{j=1}^{i-1}2N_iN_jS\right)=O(N^2\scale).$$
As $C^*=O(N^2\scale)$, our goal is to construct a scheduling policy $\pi$ under which $\text{Regret}^\pi(\Tc)$, the expected regret at completion under $\pi$, is sublinear in the scaling parameter $S$ and subquadratic in the number of jobs $N$. To do so, we focus on bounding the regret at completion under $\pi$, given by $R^\pi$, based on  $\text{Regret}^\pi(\Tc)=\mathbb{E}\left[R^\pi\right]$.

\section{Algorithms and regret bounds}\label{sec:deterministic}

In this section we present our algorithms and bounds on the expected regret. We first show an algorithm and establish upper bounds on the expected regret for this algorithm in Section~\ref{sec:uniform}. This algorithm selects jobs according to the empirical $c\mu$-rule, first selecting jobs preemptively and then switching to serving jobs non-preemptively. We then establish a general lower bound on the expected regret in Section~\ref{sec:lower-bound}. This lower bound identifies cases when the upper bound on the expected regret of the algorithm is nearly optimal and where there can be a substantial gap. To address the latter case, we present a refined algorithm in Section~\ref{sec:refined}. This refined algorithm extends the simple algorithm with giving priority to large job classes. 

The key concepts that underlie the design of our algorithms are the use of the empirical $c\mu$-rule for selecting jobs, switching from preeemptive to non-preeemptive scheduling of jobs, and, finally, giving priority to large class of jobs. We discuss these concepts next. 

\paragraph{Empirical $c\mu$ rule} Our algorithm is a learning variant of the well-known $c\mu$ rule. For each class $i\in\I$, an empirical estimate of value $c_i\mu_i$ is computed over time. Then, every time the algorithm decides which job to serve, a class with the current highest value is chosen. Initially, there is one or more jobs in each class, thus, one or more samples from each class's holding cost distribution are observed. As the number of remaining jobs in a class decreases, there are fewer observations for later time slots. Let $N_{i,t}$ denote the number of class $i$ jobs that exist in time slot $t$, and let $H_{i,t}$ be the total cumulative holding cost by the jobs of class $i$ up to time slot $t$. Note that $\sum_{s=1}^tN_{i,s}$ is the total number of realized i.i.d. random cost values for class $i$ and that $H_{i,t}$ is the sum of the random costs over all existing jobs of class $i$ through the first $t$ time slots. Then, the empirical estimate of class $i$'s mean holding cost at time $t$ is given by
$$
\hat c_{i,t}:=\frac{H_{i,t}}{\sum_{s=1}^tN_{i,s}}.
$$
Note that $\hat c_{i,t}\mu_i$ is an estimator for $c_i\mu_i$. Following the $c\mu$ rule choosing a job from a class in $\text{argmax}_{i\in\I}c_i\mu_i$, we serve a job from some class $i$ maximizing $\hat c_{i,t}\mu_i$.

\paragraph{%
	Preemptive and nonpreemptive scheduling} The next important component of our algorithms %
is deciding whether to serve jobs preemptively or non-preemptively. We consider settings where after providing a unit service to a job at time $t$, the server may switch to serving a different job at time $t+1$ even before the former job is completed. Our algorithms allow for preemption for some number of initial time slots, and then switches to non-preemptively schedules jobs, meaning that the server does not preempt until the current job finishes. Let us consider some problem instances to explain how the idea of combining preemptive and non-preemptive scheduling works for minimizing the regret.

Consider the simple example of two job classes, each with one job and unit mean service time, and job holding costs $c_1\geq c_2$. As each class has just one job and the total number of jobs is two, we say that job $i$ is of class $i$ for $i=1,2$. By the $c\mu$ rule, processing job 1 first and job 2 next is optimal, and the minimum expected cost is $c_1\scale+ c_2 \cdot 2\scale$. Recall that the empirical $c\mu$ rule selects whichever job that has a higher index while the preemptive and nonpreemptive scheduling differ in how frequently such selections are made. We will explain that restricting to preemption indefinitely or scheduling without preemption both fail in some instances.

The preemptive empirical $c\mu$ rule is more flexible in that scheduling decisions may be adjusted in every time slot as the estimators of $c_1$ and $c_2$ are updated. This is indeed favorable when $c_1$ is much greater than $c_2$, in which case, the empirical estimate of $c_1$ would get significantly larger than that of $c_2$ soon. However, we can imagine a situation where $c_1$ and $c_2$ are so close that the empirical estimates of $c_1$ and $c_2$ are almost identical for the entire duration of processing the jobs. Under this scenario, the two jobs are chosen with almost equal probabilities, in which case, they are completed around the same time. For example, job~1 stays in the system for $2\scale$ time periods, while job~2 remains for $2\scale-1$ time steps. Then the regret is $c_1\scale-c_2$, which may be linear in $\scale$.

The issue is that both jobs may remain in the system and incur holding costs for the entire duration of service $2\scale$. In contrast, one job leaves the system after $\scale$ time steps under the optimal policy. Therefore, there is an incentive in completing one job early instead of keeping both jobs longer. 

To avoid the aforementioned issue, we could instead consider the nonpreemptive version that selects a job in the beginning and commits to it. However, when preemption is not allowed, there is a high chance of committing to a suboptimal job. Under the nonpreemptive version, the probability of job 2 being selected first is at least $(1-c_1)c_2$, and therefore, the expected regret of this policy is at least $(1-c_1)c_2\cdot (c_1-c_2)\scale$ as $(c_1-c_2)\scale = (c_1\cdot 2\scale + c_2\scale)-(c_1 T+c_2\cdot 2\scale)$. When $c_1-c_2=\Omega(1)$, the expected regret is linear in $\scale$. Hence, the nonpreemptive version may suffer from undesired early commitment. 

\paragraph{Prioritizing jobs from a large class} 

The last key component of our algorithmic development is prioritizing jobs from a large class. To motivate the underlying idea, we consider a problem class with two classes, one of which includes all but one job. To make it concrete, we consider the setting where $I=2$, $N_1=N-1$, and $N_2=1$. We set the mean holding cost of one class to $1/2+\varepsilon$ and that of the other class to $1/2-\varepsilon$ for some $\varepsilon>0$. We choose a class uniformly at random for the one having mean holding cost $1/2+\varepsilon$. Assume that $\varepsilon$ is too small that no practical algorithm can decide which class is of higher mean holding cost in the first time slot.

Then we imagine an algorithm that follows the empirical $c\mu$ rule for the first time slot but follows the optimal policy afterward. Since $1/2+\varepsilon$ and $1/2-\varepsilon$ are so close, the algorithm makes a mistake in the first time slot with probability almost $1/2$. In particular, the algorithm selects class $2$ with probability around $1/2$ when $c_1=1/2+\varepsilon$ and $c_2=1/2-\varepsilon$, which occurs with probability $1/2$. This implies that when $c_1=1/2+\varepsilon$ and $c_2=1/2-\varepsilon$, the algorithm incurs a regret of $c_1(N-1)$ with probability at least $1/3$, and therefore, the expected regret of the algorithm is $\Omega(N)$. This is striking in that a single mistake from the empirical $c\mu$ rule results in a regret that grows linearly in $N$.

Next we consider another algorithm for the same class of instances. Instead of choosing a class based on the empirical $c\mu$ rule, the second algorithm always chooses to serve a job from class $1$ in the first time slot. The algorithm also follows the optimal policy from the second time slot and onward. Note that the algorithm reduces to the optimal $c\mu$ rule when $c_1=1/2+\varepsilon$ and $c_2=1/2-\varepsilon$, while it makes a mistake every time when $c_1=1/2-\varepsilon$ and $c_2=1/2+\varepsilon$. However, even when $c_1=1/2-\varepsilon$ and $c_2=1/2+\varepsilon$, its regret is only $c_2$. Therefore, the expected regret of the second algorithm is only $O(1)$.

The main takeaway here is that making a mistake when the large class has the priority, which corresponds to serving the small class, results in a significantly worse regret than making a mistake when the small class has the priority. Hence, it is reasonable to give extra prioritization to the large class at the expense of incurring some regret from the case when the small class is of higher cost. This is the underlying intuition for refining our algorithm to be more careful about the unbalanced case by giving extra priority to the largest class of jobs. To be more specific, our algorithm serves a job from the largest class until it figures out that another class has a significantly higher cost than the largest class.

\subsection{Preemptive-then-nonpreemptive empirical $c\mu$ rule}
\label{sec:uniform}

Our algorithm, which we call the preemptive-then-nonpreemptive empirical $c\mu$ rule (in short, \pnalg), is a combination of preemptive scheduling and nonpreemptive scheduling. The algorithm starts in a preemption phase in which the server may try different classes of jobs while learning the mean holding costs of classes, thereby circumventing the early commitment issue. The number of preemptions is limited, which allows avoiding the issue of unnecessary delays. Pseudo-code of our algorithm is given in Algorithm~\ref{preempt-then-nonpreemptive}.

\begin{algorithm}[h!]
	\caption{Preemptive-then-nonpreemptive empirical $c\mu$ rule (\pnalg)}\label{preempt-then-nonpreemptive}
	\begin{algorithmic}
		\State Input: $\Ts%
		$, $I$, $(\mu_i, i\in \I)$, $(N_i, i\in \I)$ %
		\State Initialize $\R \leftarrow \I$ \textit{\quad // $\R$ is the set of unfinished job classes}
		
		\State $n^* \leftarrow \mathsf{null}$ %
		\For{$t=1,\ldots,\Tc$}

		\If{$t\leq \Ts+1$ or $n^*=\mathsf{null}$}
		\State $n^*\leftarrow \text{a job from some class in $\arg\max_{i\in \R}\hat c_{i,t}\mu_i$.}$
		\EndIf
		
		\State Serve job $n^*$
		\If{job $n^*$ has been completed}
		\If{the class of job $n^*$ has no remaining job}
		\State $\R\leftarrow \R\setminus\{\text{the class of job $n^*$}\}$
		\EndIf
		\State $n^*\leftarrow\mathsf{null}$
		\EndIf
		
		\EndFor
	\end{algorithmic}
\end{algorithm}

This algorithm's performance heavily depends on the length of the preemption phase, denoted $\Ts$. We will decide the value of $\Ts$ to be strictly less than the minimum service time of jobs, and as a result, no job finishes during the preemption phase. %

In this section, we focus on the case when $\mu_1=\mu_2=\cdots=\mu_I=\mu$, in which all jobs have the same service time. Here we may assume that $\mu=1$, because for otherwise, we can replace $\scale/\mu$ by $\scale$. 

Recall that $N_i$ is the initial number of jobs of class $i$ for $i\in\I$ and $N=\sum_{i\in\I}N_i$. Then the following result provides an upper bound on the expected regret of Algorithm~\ref{preempt-then-nonpreemptive}. 

\begin{theorem}
	\label{thm:uniform-ub-1}
	The expected regret of Algorithm~\ref{preempt-then-nonpreemptive} is 
	\begin{equation}\label{regret:uniform-ub-1}
		O\left(\max\left\{N\scale^{2/3}(\log N\scale)^{1/3},\ N^{3/2}\scale^{1/2}(\log N\scale)^{1/2}\right\}\right).
	\end{equation}
\end{theorem}

Recall that the minimum expected cumulative holding cost, attainable by the $c\mu$ rule, is $O(N^2\scale)$. Algorithm~\ref{preempt-then-nonpreemptive} indeed achieves a regret that is sublinear in the scaling parameter $S$ and subquadratic in the total number of jobs $N$.

The regret bound in Theorem~\ref{thm:uniform-ub-1} is a worst-case bound. For any given number of job classes $I$ and the total number of jobs $N$, the bound allows for arbitrary initial distribution of jobs over job classes. For the case when the algorithm has no access to information about job classes, each job can be thought to belong to a distinct class, and in this case the total number of jobs corresponds to the total number of distinct classes.

While deferring proof of Theorem~\ref{thm:uniform-ub-1} to the appendix, we sketch proof ideas here. Theorem~\ref{thm:uniform-ub-1} is a consequence of the following lemma characterizing a regret upper bound that depends on additional parameters $I$,  $N_{\min}=\min_{i\in \I} N_i$, and $\Ts$ where $I$ is the number of classes, $N_{\min}$ is the minimum number of jobs in a class, and $\Ts$ is the length of the preemption phase.

\begin{lemma}
	\label{lemma:uniform-ub-1}
	The expected regret of Algorithm~\ref{preempt-then-nonpreemptive} is 
	\begin{equation}\label{regret:uniform-ub-1'}
		O\left(N \Ts + \frac{N\scale(\log N\scale)^{1/2}}{N_{\min}^{1/2}(\Ts+1)^{1/2}}+\min\left\{IN,I^{1/2}N{(\log N)^{1/2}},\frac{N^{3/2}}{N_{\min}^{1/2}}\right\}\scale^{1/2}({\log N\scale})^{1/2}\right).
	\end{equation}
\end{lemma}
As the regret upper bound given by Lemma~\ref{lemma:uniform-ub-1} has three terms, the expected regret of Algorithm~\ref{preempt-then-nonpreemptive} consists of three parts. %
The first part is due to delays caused by serving jobs from low priority classes, classes with low $c_i$ values, during the preemption phase. Here, the preemption phase may have length 0, i.e., $\Ts=0$. In this case, all jobs are processed without preemption, and in particular, the first job for nonpreemptive serving is selected based on $N_i$ observed cost values for each class $i\in \I$ at the beginning of the first time slot. The second part of the regret corresponds to the risk of suboptimal selection of the first job for nonpreemptive serving right after the preemption phase. The last part is the regret incurred from choosing a suboptimal sequence of the rest of jobs for nonpreemptive serving. Note that the bound~\eqref{regret:uniform-ub-1'} has terms with $\log NS$ factors. The $\log NS$ factors arise from estimating the mean job holding costs, for which we use a Hoeffding's bound \cite{hoeffding}.

The proof of Lemma~\ref{lemma:uniform-ub-1} and our regret analysis are based on a key technical lemma that provides a representation for the expected regret, by which we can decompose the regret to different terms that correspond to individual jobs. In particular, the server may switch between jobs during the preemption phase, which would result in jobs getting delayed. The representation given by the lemma captures this. Moreover, serving a job with a smaller mean holding cost before a job with a higher cost would contribute to the regret, and the lemma elucidates how the regret value depends on the gap between the mean holding costs of the two jobs.

Let us add more technical details as to how the three terms stated in Lemma~\ref{lemma:uniform-ub-1} appear in the regret analysis of Algorithm~\ref{preempt-then-nonpreemptive}. To explain the first term $N\tau$, we take a job from the lowest priority class. Under an optimal scheduling policy, this job is served after the jobs from the other classes are completed. Imagine a situation where the job from the lowest priority class gets served for the entire duration of the preemption phase but the job is taken back to the queue until all the other jobs are finished. Under this situation, all but this job are delayed for $\tau$ time units, which results in $O(N\tau)$ regret. Hence, $O(N\tau)$ is a worst-case bound on the regret incurred from the preemption phase.

As mentioned above, the second regret term is incurred while completing the job chosen right after the preemption phase. The job is selected from a class $i$ in $\arg\max_{i\in\I}\hat c_{i,\tau+1}$, but there can be other class $j$ such that $c_j > c_i$, meaning that class $j$ takes priority over class $i$. Here, the gap $c_j - c_i$ being strictly positive indicates that the choice of class $i$ is suboptimal, and this would contribute to regret. In fact, the regret term depends on the gap $c_j-c_i$, and therefore, we need an upper bound on $c_j-c_i$ to provide an upper bound on the regret term. We do this by constructing a confidence interval for the mean holding cost of each class based on a Hoeffding's bound. When selecting the job right after the preemption phase, each class $i\in\I$ collects $N_i(\tau+1)$ samples from its cost distribution, and $N_i(\tau+1)$ is greater than or equal to $N_{\min}(\tau+1)$. By Hoeffding's bound, the true mean holding cost $c_i$ of class $i$ belongs to a confidence interval around its empirical estimate $\hat c_{i,\tau+1}$ of radius $O((\log NS)^{1/2}/N_{\min}^{1/2}(\tau+1)^{1/2})$ with high probability. It follows that if $\hat c_{j,\tau+1}\leq \hat c_{i,\tau+1}$, then $c_j- c_i=O((\log NS)^{1/2}/N_{\min}^{1/2}(\tau+1)^{1/2})$. Lastly, all initial $N$ jobs remain in the system until finishing the first job, which takes up to $\scale$ time steps.

The third regret term is incurred after the first job is finished until completing the rest of jobs. The regret analysis is also based on bounding the value of $c_j-c_i$ for two classes $i$ and $j$ such that $\hat c_{j,t}\leq \hat c_{i,t}$ where $t$ is the moment when a job for nonpreemptive serving is selected. Until finishing the first job, each class $i\in\I$ collects $O(N_i\scale)$ samples from its cost distribution, so the radius of the confidecne interval of $c_i$ is $O((\log NS)^{1/2}/ N_{\min}^{1/2} S^{1/2})$. On the other hand, as some jobs from a class get completed and leave the system, the number of jobs in the class decreases. Hence, we need to carefully keep track of the number of samples obtained from the cost distrubution.

What remains is to decide the value of $\Ts$, that is, the length of the preemption phase. Note that $\Ts$ appears in the first two terms of the regret upper bound in Lemma~\ref{lemma:uniform-ub-1}. Setting
$$\Ts=\Theta\left(N_{\min}^{-1/3} \scale^{2/3}\left(\log N\scale\right)^{1/3}\right)$$
asymptotically minimizes the regret upper bound in Lemma~\ref{lemma:uniform-ub-1}. More precisely, we set the length $\Ts$ as follows:
\begin{equation}
	\label{eq:Ts-choice}
	\Ts=
	\begin{cases}
		\lfloor N_{\min}^{-1/3} \scale^{2/3}\left(\log N\scale\right)^{1/3}\rfloor, &\text{if $\scale > N_{\min}^{-1/3} \scale^{2/3}\left(\log (N\scale)\right)^{1/3}$} \\
		S-1, & \text{ otherwise}.
	\end{cases}
\end{equation}

Note that $\scale > N_{\min}^{-1/3} \scale^{2/3}\left(\log N\scale\right)^{1/3}$ is equivalent to $N_{\min} \scale > \log(N\scale)$.  Intuitively, for fixed values of the number of jobs over classes, for any large enough value of $\scale$, $\Ts$ is set to be roughly proportional to $\scale^{2/3}$ ignoring the logarithmic term and the rounding to an integer value. Otherwise, $\Ts$ is set to value $\scale-1$. Lastly, note that $\Ts<\scale$ under our choice in~\eqref{eq:Ts-choice}. Note that the condition $N_{\min} \scale \leq \log(N\scale)$ captures situations where $N$ is much larger than $\scale$, e.g., $N\geq 2^{N_{\min}S}$. Hence, roughly speaking, the value of $\tau$ is chosen depending on whether $N$ is much greater than $S$ or not.

Based on Lemma~\ref{lemma:uniform-ub-1}, with our choice of $\Ts$ given in~\eqref{eq:Ts-choice}, we can argue that the expected regret of Algorithm~\ref{preempt-then-nonpreemptive} is
\begin{equation}\label{regret:uniform-ub-1''}
O\left(\max\left\{\frac{N}{N_{\min}^{1/3}}\scale^{2/3}(\log N\scale)^{1/3},\ \min\left\{IN,I^{1/2}N{(\log N)^{1/2}},\frac{N^{3/2}}{N_{\min}^{1/2}}\right\}\scale^{1/2}({\log N\scale})^{1/2}\right\}\right).
\end{equation}
Since $N_{\min}\geq 1$ and $I\leq N$,~\eqref{regret:uniform-ub-1''} gives rise to the regret upper bound in Theorem~\ref{thm:uniform-ub-1}. The complete proof of Theorem~\ref{thm:uniform-ub-1} is given in the appendix.

\subsection{Regret lower bound}
\label{sec:lower-bound}

In this section we provide a lower bound on the expected regret of any scheduling policy. This lower bound establishes near optimality of the upper bound in Theorem~\ref{thm:uniform-ub-1} in the case of balanced job classes with respect to the number of jobs per class. The lower bound also covers the case of unbalanced job classes with respect to the number of jobs per class, for which there can be a large gap between the lower bound and the upper bound of Theorem~\ref{thm:uniform-ub-1} with respect on the dependence on $N$. The cases of balanced and unbalanced job classes are formally defined in the following. In Section~\ref{sec:refined}, we propose a refined algorithm that nearly achieves the lower bound for both cases.   

We denote with $\barn$ the number of jobs of all job classes except for excluding a job class with the largest number of jobs, i.e. $\barn := N-\max_{i\in\I}N_i$. Let $\imax$ be some class in $\arg\max_{i\in\I}N_i$. Then $\barn = N-N_{\imax}$ is equal to the number of jobs outside the class $\imax$. We can distinguish two cases with respect to the value of $\barn$: (a) \emph{balanced case} 
under which $\barn = \Omega(N)$ and (b) \emph{unbalanced case} 
under which $\barn = o(N)$. In the former case, the largest number of jobs of a class is at most a constant fraction of the total number of jobs. In the latter case, all but a diminishing small fraction of jobs are of the same class. The lower bound in the following theorem applies to all cases.

\begin{theorem}
\label{thm:uniform-lb}
For any (randomized) scheduling algorithm, there is a family of instances under which the expected regret is
\begin{equation}\label{regret:uniform-lb}
	\Omega\left(\max\left\{ \barn^{2/3} \scale^{2/3}, \ N^{1/2}\barn^{1/2}\scale^{1/2}\right\}\right),
\end{equation}
where the expectation is over the random choice of an instance, the randomness in holding costs, and the algorithm.
\end{theorem}

In the balanced case, the regret lower bound in Theorem~\ref{thm:uniform-lb} is equivalent to 
\begin{equation}
\Omega\left(\max\left\{ N^{2/3} \scale^{2/3}, \ N\scale^{1/2}\right\}\right).
\label{equ:lb1}
\end{equation}
Note that this lower bound nearly matches the upper bound in Theorem~\ref{regret:uniform-ub-1}. Ignoring the logarithmic factors in the upper bound, the upper and lower bounds have the same dependence on parameter $\scale$ while there exists some small gap in the dependence on parameter $N$. 

In fact, when the distribution of $N$ jobs over  classes is balanced, the expected regret of Algorithm~\ref{preempt-then-nonpreemptive} matches the lower bound in Theorem~\ref{thm:uniform-lb}. To elaborate, we have $N_{\min}= N/\kappa$ for some $\kappa>0$. Then, as~\eqref{regret:uniform-ub-1''} is an upper bound on the expected regret, it gives rise to an upper bound
$$\tilde O\left(\max\left\{\kappa^{1/3} N^{2/3} \scale^{2/3},\ \kappa^{1/2} N\scale^{1/2}\right\}\right).$$
Hence, if $\kappa$ is bounded by a fixed constant, $N_{\min}$ is a constant fraction of $N$ which means that each class has at least a constant fraction of the $N$ jobs. Moreover, in such a case, the bound reduces to (\ref{equ:lb1}), as desired. Moreover, if the number of classes is bounded by a fixed constant, the expected regret of Algorithm~\ref{preempt-then-nonpreemptive} nearly matches the lower bound. Since $I$ is some constant and $N_{\min}\geq 1$,~\eqref{regret:uniform-ub-1''} reduces to
$$O\left(\max\left\{N \scale^{2/3}(\log NS)^{1/3},\ N\scale^{1/2}(\log NS)^{1/2}\right\}\right),$$
whose second term equals the second term of~\eqref{regret:uniform-lb} up to a logarithmic factor.

To prove that the expected regret of any (randomized) scheduling algorithm has a lower bound given in Theorem~\ref{thm:uniform-lb}, we show that $\Omega\left(\barn^{2/3}\scale^{2/3}\right)$ and $\Omega\left(N^{1/2}\barn^{1/2}\scale^{1/2}\right)$ are two lower bounds on the expected regret. To explain our proof strategy, let us take some nonempty sets $\I_1$ and $\I_2$ partitioning $\I$, the set of all classes. Assume that
$M_1 \geq M_2$ where $M_1:=\sum_{i\in \I_1}N_i$ and $M_2:=\sum_{i\in \I_2}N_i$. Then we set the mean holding cost of each class $i\in \I_1$ to $c_i=1/2$ while for some $\epsilon>0$, for each $j\in \I_2$, we set $c_j=(1+\epsilon)/2$ with probability $1/2$ and $c_j=(1-\epsilon)/2$ otherwise. Note that the mean holding cost of a job from $\I_2$ is greater than that of a job from $\I_1$ with probability $1/2$ and is smaller than that with probability $1/2$. At the same time, if $\epsilon$ is sufficiently small, it is difficult to determine which of $\I_1$ and $\I_2$ has a higher mean holding cost than the other. We formally argue this using the notion of Kullback–Leibler (KL) divergence. Based on this, we show that the expected regret of any (randomized) scheduling algorithm is bounded below by $\Omega\left(M_2^{2/3}\scale^{2/3}\right)$ under some mild condition. We also prove that the expected regret is bounded below by $\Omega\left(M_1^{1/2}M_2^{1/2}\scale^{1/2}\right)$. These two lower bounds hold true for any partition $\I_1, \I_2$ of $\I$. In particular, we can argue that there always exist a partition $\I_1, \I_2$ such that $M_1=\Omega(\barn)$ and a partition $\I_1, \I_2$ such that $M_1M_2=\Omega(N\barn)$. This in turn gives us the desired lower bound on the expected regret.

\subsection{Refined algorithm}
\label{sec:refined}

In this section, we present and analyse an algorithm that guarantees a better scaling of regret with the number of jobs than the simple %
\pnalg, in the unbalanced case when all but a diminishing fraction of jobs are of the same class.

Recall that the lower bound for the regret in Theorem~\ref{thm:uniform-lb} depends on $N$, $\bar{N}$ and $\scale$, where $\bar{N}=N-\max_{i\in\I} N_i$. In the balanced case, i.e. when $\barn=\Omega(N)$, the lower bound becomes $\Omega\left(\max\left\{ N^{2/3} \scale^{2/3}, N\scale\right\}\right)$. In this case, the upper bound on regret of Algorithm~\ref{preempt-then-nonpreemptive} in Theorem~\ref{thm:uniform-ub-1} nearly matches the lower bound. However, in the unbalanced case, i.e. when $\barn=o(N)$, in which case $\max_{i\in \I} N_i=N-o(N)$, there can be a substantial gap between the upper bound in Theorem~\ref{thm:uniform-ub-1} and the lower bound.

For example, consider scenarios when one class takes all but a small number of jobs so that the distribution of the jobs over classes is extremely unbalanced such that $\barn$ is a fixed constant. In this case, the lower bound in Theorem~\ref{thm:uniform-lb} reduces to $\Omega\left(\max\left\{ \scale^{2/3}, N^{1/2}\scale^{1/2}\right\}\right)$. This resulting lower bound has a gap of factor $N$ from the upper bound in Theorem~\ref{thm:uniform-ub-1}. In fact, for Algorithm~\ref{preempt-then-nonpreemptive}, it seems hard to avoid the large $N$ factors from the regret upper bound $O\left(\max\left\{N\scale^{2/3}, N^{3/2}\scale^{1/2}\right\}\right)$, as described by the following two examples.

\paragraph{Example 1} Consider an instance $\mathcal{P}_1$ with $I=2$, $N_1 = N-1$ and $N_2=1$. Then we have $\barn=N-N_1=N_2=1$. Let $c_1=1/8$ and $c_2=(1-\epsilon)/8$. We set  $\epsilon=\sqrt{\ln 2/\scale}$ if $\scale\leq 3$, and $\epsilon=S^{-1/3}\sqrt{\ln 2}$, otherwise. Under this instance, we can argue that the following holds.

\begin{proposition}\label{prop:ex1}
Under instance $\mathcal{P}_1$ with $N\geq 8$ and $\scale\geq1$, the expected regret of Algorithm~\ref{preempt-then-nonpreemptive} is $\Omega( N\scale^{2/3})$.
\end{proposition}

The main reason for Algorithm~\ref{preempt-then-nonpreemptive} incurring a regret of $\Omega(N\scale^{2/3})$ under instance $\mathcal{P}_1$ is as follows. The $N-1$ jobs in class 1 have a higher mean holding cost than the job of class 2, so processing the class 2 job while a class 1 job is waiting incurs a delay cost from the class 1 job. In fact, we can argue that, since the gap between $c_1$ and $c_2$ is small, the empirical estimate of class 2's mean holding cost $\hat c_{2,t}$ is higher than that of class 1, $\hat c_{1,t}$, for a constant fraction of times during the preemption phase in expectation. Hence, Algorithm~\ref{preempt-then-nonpreemptive} spends a constant fraction on the preemption phase serving the class 2 job in expectation, which costs delay costs from the $N-1$ class 1 jobs.

\paragraph{Example 2} Let $0<\delta<1$ be some fixed constant. Consider an instance $\mathcal{P}_2$ with $I=\lceil N^{\delta}\rceil+1$, $N_1=N-\lceil N^{\delta}\rceil$, and $N_i=1$ for $2\leq i\leq I$. Here, $\bar N=\lceil N^{\delta}\rceil = o(N)$. Let $c_1=1/8$ and $c_i=(1-\epsilon)/8$ for all $2\leq i\leq I$. Under this instance, we can argue that the following holds.

\begin{proposition}\label{prop:ex2}
Under instance $\mathcal{P}_2$ with $N\geq 8^{1/(1-\delta)}$ and $\scale\geq 1$, the expected regret of Algorithm~\ref{preempt-then-nonpreemptive} is $\Omega\left(N^{1+\delta/2}\scale^{1/2}\right)$.
\end{proposition}

Note that $\delta$ can be fixed to a number close $1$, in which case, the exponent $1+\delta/2$ is close to $3/2$. As in Example 1, the intuition for why Algorithm 1 cannot avoid such a high regret is that $\hat c_{2,t}$ is higher than $\hat c_{1,t}$ for significantly many time steps $t$ in expectation, as the gap between $c_1$ and $c_2$ is small. We can argue that under instance $\mathcal{P}_2$, a constant fraction of the first $\lceil N^{\delta}\rceil$ jobs completed by Algorithm~\ref{preempt-then-nonpreemptive} belong to class 2.

From Propositions~\ref{prop:ex1} and~\ref{prop:ex2}, it follows that for any $0<\gamma\leq 1$, there exists some instance with $\bar N=O(N^{1-\gamma})$, under which the expected regret of Algorithm~\ref{preempt-then-nonpreemptive} is
$$\Omega\left(\max\{N\scale^{2/3},\ N^{(3-\gamma)/2}\scale^{1/2}\right).$$
That being said, the gap between the expected regret of Algorithm~\ref{preempt-then-nonpreemptive} and the lower bound given by Theorem~\ref{thm:uniform-lb} can be large in general, especially when $\barn$ is small compared to $N$. Then it is natural to ask if we can find a better algorithm or improve the lower bound.

In the remainder of this section, we provide a refinement of Algorithm~\ref{preempt-then-nonpreemptive} to reduce the dependence on parameter $N$ in the regret upper bound. As suggested by Examples 1 and 2, Algorithm~\ref{preempt-then-nonpreemptive} suffers from a regret that has a high dependence on $N$ when the class with the largest number of jobs has a high holding cost but some of the other jobs is chosen instead. To remedy this, a refined algorithm, given as Algorithm~\ref{preempt-then-nonpreemptive-refined}, prioritizes the largest class. To be specific, Algorithm~\ref{preempt-then-nonpreemptive-refined} gives a priority to serving jobs of the largest class, unless some other class turns out to have a significantly higher holding cost than the largest class. Once all jobs of the largest class are completed, any uncompleted jobs are served according to the empirical $c\mu$-rule. This empirical $c\mu$-rule augmented with prioritization is defined in the procedure {\sc \procname}\ in Algorithm~\ref{preempt-then-nonpreemptive-refined}.

Recall that $\hat c_{i,t}$ denotes $H_{i,t}/\sum_{s=1}^t N_{i,s}$ where $H_{i,t}$ is the total cumulative holding cost incurred by the jobs of class $i$ up to time slot $t$.%
We define two other statistics for each class
$$
\text{UCB}_{i,t} := \hat c_{i,t} +\sqrt{\frac{3}{\sum_{s=1}^t N_{i,s}}\log \frac{N\scale}{\mu_{\min}}} \hbox{ and }
\text{LCB}_{i,t} := \hat c_{i,t} -\sqrt{\frac{3}{\sum_{s=1}^t N_{i,s}}\log \frac{N\scale}{\mu_{\min}}}.
$$
where UCB stands for \emph{upper confidence bound} and LCB stands for \emph{lower confidence bound}. Recall our assumption that $\mu_{\min}=\mu_1=\cdots=\mu_I=1$ throughout this section. These confidence bounds are defined such that $c_i\in [\text{LCB}_{i,t},\text{UCB}_{i,t}]$ for every $i$ and $t$ with high probability. This means that if $\text{UCB}_{i,t}<\text{LCB}_{j,t}$, then $c_i<c_j$ with high probability. We use this to compare the mean holding cost of the largest class and those of other classes.

\begin{algorithm}[t!]
\caption{Refined preemptive-then-nonpreemptive empirical $c\mu$ rule (Refined \pnalg)}\label{preempt-then-nonpreemptive-refined}
\begin{algorithmic}
	\State Input: $\Ts%
	$, $I$, $(\mu_i, i\in \I)$, $(N_i, i\in \I)$
	\State Initialize $\R\leftarrow \I$ and $\RP\leftarrow \emptyset$ \textit{// $\R$ is the set of unfinished job classes, $\RP$ is the set of job classes with priority}
	\State For each class $i\in \I$, designate a job for serving when class $i$ is selected
	\State Set $i_{\max}$ to some class in $\arg\max_{i\in \I}N_i$
	
	\State $n^* \leftarrow \mathsf{null}$ %
	\For{$t=1,\ldots,\Tc$}
	\State $\RP\leftarrow \RP\cup \left\{i\in \R\setminus (\RP\cup\{i_{\max}\}):\ \text{LCB}_{i,t}> \text{UCB}_{j,t}\text{ for some }j\in \RP\cup\{i_{\max}\}\right\}$

	\If{$t\leq \Ts+1$ or $n^*=\mathsf{null}$} 
	\State $n^*\leftarrow \text{the designated job of the class returned by~\textproc{\procname($i_{\max}$, $\R$, $\RP$, ($\hat c_{i,t}\mu_i$, $i\in \R$))}}$
	\EndIf
	
	\State Serve job $n^*$
	\If{job $n^*$ has been completed}
	\If{the class of job $n^*$ has a remaining job}
	\State Choose a new designated job for the class 
	\Else
	\State $\R\leftarrow \R\setminus\{\text{the class of job $n^*$}\}$ and $\RP\leftarrow \RP\setminus\{\text{the class of job $n^*$}\}$
	\EndIf
	\State $n^*\leftarrow\mathsf{null}$
	\EndIf

	\EndFor

	\Procedure{\procname}{$i_{\max}$, $\R$, $\RP$, ($\hat c_{i,t}\mu_i$, $i\in \R$)}
	\If{$i_{\max}\in \R$}
	\If{$\RP = \emptyset$}
	\State Return class $i_{\max}$
	\Else
	\State Return some class $i\in\arg\max_{i\in \RP}\hat{c}_{i,t}\mu_i$
	\EndIf
	\Else
	\State Return some class $i\in\arg\max_{i\in \R}\hat{c}_{i,t}\mu_i$
	\EndIf
	\EndProcedure
\end{algorithmic}
\end{algorithm}

The following gives an upper bound on the expected regret of Algorithm~\ref{preempt-then-nonpreemptive-refined}.

\begin{theorem}
\label{thm:uniform-ub-2}
The expected regret of Algorithm~\ref{preempt-then-nonpreemptive-refined} is
\begin{equation}\label{regret:uniform-ub-2}
	O\left(\max\left\{\barn\scale^{2/3}(\log N\scale)^{1/3},\ N^{1/2}\barn\scale^{1/2}(\log N\scale)^{1/2}\right\}\right).
\end{equation}
\end{theorem}

Recall the lower bound in Theorem~\ref{thm:uniform-lb}, which reads as $\Omega\left(\max\left\{ \barn^{2/3} \scale^{2/3}, \ N^{1/2}\barn^{1/2}\scale^{1/2}\right\}\right)$. Comparing this lower bound with the upper bound in Theorem~\ref{thm:uniform-ub-2}, we note that they differ for factors $\bar N^{1/3}$ in the first term and $\bar N^{1/2}$ in the second term. Therefore, the gap between the lower bound and the upper bound has no explicit dependence on $N$. This is favorable  in the unbalanced case when $\bar N = o(N)$. On the other hand, in the balance case when $\bar N = \Omega(N)$, %
the upper bound of Algorithm~\ref{preempt-then-nonpreemptive-refined} in Theorem~\ref{thm:uniform-ub-2} and the upper bound of Algorithm~\ref{preempt-then-nonpreemptive} in Theorem~\ref{thm:uniform-ub-1} are equivalent to each other up to constant factors.

Theorem~\ref{thm:uniform-ub-2} is a consequence of the following lemma, which is similar in spirit to Lemma~\ref{lemma:uniform-ub-1}.

\begin{lemma}
\label{lemma:uniform-ub-2}
The expected regret of Algorithm~\ref{preempt-then-nonpreemptive-refined} is 
\begin{equation}
	\label{regret:uniform-ub-2'}
	O\left(\barn \Ts + \frac{\barn\scale(\log N\scale)^{1/2}}{N_{\min}^{1/2}(\Ts+1)^{1/2}}+\left(\min\left\{I\barn,I^{1/2}\barn{(\log \barn)^{1/2}},\frac{\barn^{3/2}}{N_{\min}^{1/2}}\right\}+\sqrt{IN\barn}\right)\scale^{1/2}({\log N\scale})^{1/2}\right).
\end{equation}
\end{lemma}

Similarly to the regret upper bound in Lemma~\ref{lemma:uniform-ub-1}, the regret upper bound in Lemma~\ref{lemma:uniform-ub-2} has three terms. The first term bounds the delay costs incurred in the preemption phase. The second term bounds the regret due to suboptimal selection of the first job in the nonpreemptive phase, which starts right after the preemption phase. The factors $\barn$ in the first two terms of the bound in Lemma~\ref{lemma:uniform-ub-2} correspond to $N$ in Lemma~\ref{lemma:uniform-ub-1}. This improvement is obtained by a preferential treatment of the largest class of jobs in Algorithm~\ref{preempt-then-nonpreemptive-refined}. Basically, while the largest class is still active, we make a decision to serve a job from some other class only if we are sure that the class has a higher mean holding cost than the largest class. Then no delay cost is paid for the largest class, and situations as in Examples~1 and 2 are prevented.

The third term in the upper bound of Lemma~\ref{lemma:uniform-ub-2} has two parts, one of which is similar to the third term of the upper bound in Lemma~\ref{lemma:uniform-ub-1} while the other is an extra term. The part with the minimization term has factors with $\barn$, instead of $N$. Again, this comes from a preferential treatment of the largest class by  Algorithm~\ref{preempt-then-nonpreemptive-refined}. The last term, involving factor $\sqrt{IN\barn}$, is new. Algorithm~\ref{preempt-then-nonpreemptive-refined} has to pay this extra regret because a job from the largest class is selected even if there is another class with a higher empirical mean holding cost.

The preemption threshold value $\Ts$ can be set as in~\eqref{eq:Ts-choice}. With this choice of $\Ts$, we can show that the upper bound in Lemma~\ref{lemma:uniform-ub-2}  is upper bounded by
\begin{equation}
\label{regret:uniform-ub-2''}
O\left(\frac{\barn}{N_{\min}^{1/3}}\scale^{2/3}(\log N\scale)^{1/3}+\left(\min\left\{I\barn,I^{1/2}\barn{(\log \barn)^{1/2}},\frac{\barn^{3/2}}{N_{\min}^{1/2}}\right\}+\sqrt{IN\barn}\right)\scale^{1/2}({\log N\scale})^{1/2}\right).
\end{equation}
By noting that $N_{\min}\geq 1$ and $I\leq \barn +1$,~\eqref{regret:uniform-ub-2''} gives rise to the upper bound in Theorem~\ref{thm:uniform-ub-2}.

We conclude this section by pointing out to two cases when we can have a better bound from \eqref{regret:uniform-ub-2''} than the upper bound asserted in Theorem~\ref{thm:uniform-ub-2}.
First, consider the case when $\barn$ is bounded by some fixed constant, then \eqref{regret:uniform-ub-2''} reduces to $\tilde O\left(\max\left\{\scale^{2/3},\ N^{1/2}\scale^{1/2}\right\}\right)$,
which coincides with the lower bound in Theorem~\ref{thm:uniform-lb}. Second, consider the case when $N_{\min}$ is at least a constant fraction of $\barn$, then \eqref{regret:uniform-ub-2''} reduces to $\tilde O\left(\max\left\{\barn^{2/3}\scale^{2/3},\ N^{1/2}\barn^{1/2}\scale^{1/2}\right\}\right)$,
which coincides with the lower bound in Theorem~\ref{thm:uniform-lb}.

\section{Extensions}
\label{sec:extensions}

In Section~\ref{sec:heterogeneous}, we provide regret upper and lower bounds of the refined \pnalg\ given by Algorithm~\ref{preempt-then-nonpreemptive-refined} for the case of heterogeneous service times. In Section~\ref{sec:instance-dep}, we provide an instance-dependent regret upper bound that delineates how the regret depends on the gap between the $c\mu$ index values. Lastly, in Section~\ref{sec:stochastic}, we consider the setting where the service time of each job is random and follows a geometric distribution.

\subsection{Heterogeneous service times}\label{sec:heterogeneous}

In the previous section, we focused on the case where the service time of each job is equal to $\scale$. In this section, we allow the service rates $\mu_1,\ldots,\mu_I$ to be heterogeneous and the service time of a job is one of $S/\mu_1,\ldots, S/\mu_I$. For simplicity of notation, we use notation
$$\bar\scale = S/\mu_{\min}$$
where $\mu_{\min}=\min_{i\in\I}\mu_i$. Let $\mu_{\max}=\max_{i\in\I}\mu_i$.
Then the service time of a job is at most $\bar S$ and greater than or equal to $(\mu_{\max}/\mu_{\min})^{-1}\bar S= S/\mu_{\max}$. We study settings where the following condition is satisfied.
\begin{equation}\label{hetero-assumption}
	\frac{\mu_{\max}}{\mu_{\min}} <\frac{N_{\min}^{1/3}}{\left(\log N\scale\right)^{1/3}}
	\bar \scale^{1/3}.
\end{equation}
Here,~\eqref{hetero-assumption} bounds the ratio of $\mu_{\max}$ and $\mu_{\min}$. We analyze the expected regret of Algorithm~\ref{preempt-then-nonpreemptive-refined} under~\eqref{hetero-assumption}. Note that, up to scaling, we may also assume that $\mu_1,\ldots,\mu_I\geq1$ without loss of generality. We further assume that $\scale/\mu_i$ for $i\in\I$ are all integers. If not, one may replace $\mu_i$ by  $\mu_i^\prime$ such that $\lceil \scale /\mu_i\rceil=\scale/\mu_i^\prime$ since a job of class $i$ needs "at least" this many time steps to be completed.

Then, for the general case, we set the length $\Ts$ of the preemption period to 
\begin{equation}\label{eq:Ts-choice-hetero}
	\Ts=\lfloor N_{\min}^{-1/3} \bar\scale^{2/3}\left(\log N\bar\scale\right)^{1/3}\rfloor.
\end{equation}
By~\eqref{hetero-assumption}, we have
\begin{equation}\label{hetero-assumption'}
	\Ts\leq N_{\min}^{-1/3} \bar\scale^{2/3}\left(\log N\scale\right)^{1/3}< \scale /\mu_{\max}.
\end{equation}
Note that~\eqref{hetero-assumption'} implies that no job finishes until the end of time slot $\Ts+1$.
\begin{theorem}\label{thm:hetero-ub}
	The expected regret of Algorithm~\ref{preempt-then-nonpreemptive-refined} is
	$$O\left(\bar N \bar \scale^{2/3}\left(\log  N\bar \scale\right)^{1/3}+(\mu_{\max}/\mu_{\min})^{1/2}N^{1/2}\barn \bar \scale^{1/2}\left(\log N\bar \scale\right)^{1/2}\right).$$
\end{theorem}

The upper bound given in Theorem~\ref{thm:hetero-ub} is expressed in terms of $\bar\scale$ instead of $\scale$. For the homogeneous case, $\scale$ is the mean service time of each job as $\mu_1=\cdots=\mu_I=1$, but $\scale$ is a scaling factor and not necessarily the mean service time of a job. For the case of heterogeneous service times, $\bar\scale$ is the parameter that corresponds to the mean service time of some job. Moreover, the upper bound has an additional factor $\mu_{\max}/\mu_{\min}$, which is the ratio of the longest service time and the shortest service time. When there is a large gap between the longest service time and the shortest service time, the ratio is large, and thus, the upper bound becomes large as well. The next theorem provides a lower bound on the expected regret of any algorithm.

\begin{theorem}\label{thm:hetero-lb}
	For any (randomized) scheduling algorithm, there is a family of instances under which the expected regret is 
	$$\Omega\left(\max\left\{(\mu_{\max}/\mu_{\min})^{-4/3}\barn^{2/3}\bar \scale^{2/3},\ (\mu_{\max}/\mu_{\min})^{-1}\barn\bar\scale^{1/2} \right\}\right)$$
	where the expectation is taken over the choice of an instance and the randomness in holding costs and the algorithm.
\end{theorem}

The lower bound given in Theorem~\ref{thm:hetero-lb} also has dependence on the ratio $\mu_{\max}/\mu_{\min}$ and the longest mean service time $\bar\scale$. The lower bound and the upper bound given by Theorem~\ref{thm:hetero-ub} has some gap with respect to the ratio as well as the parameter $\bar\scale$.

Lastly, we remark that the upper and lower bounds given by Theorems~\ref{thm:hetero-ub} and~\ref{thm:hetero-lb} recover the bounds~\eqref{regret:uniform-ub-2} and~\eqref{regret:uniform-lb} for the homogeneous case as this case corresponds to setting $\mu_{\max}=\mu_{\min}=1$ and $\bar\scale=\scale$.

The proof of Theorem~\ref{thm:hetero-ub} is an adaptation of the proof of Theorem~\ref{thm:uniform-ub-2} to the heterogeneous service time case. In particular, we compare the empirical $c\mu$ values given by $\{\hat c_{i,t}\mu_i\}_{i\in\I}$, and the confidence interval of the true value $c_i\mu_i$ for each $i\in \I$ has dependence on the mean service time $\mu_i$ as well as the number of samples obtained from the cost distribution. Moreover, the regret depends on the gap $c_j\mu_j - c_i\mu_i$ for some distinct classes $i,j$ with $\hat c_{j,t}\leq \hat c_{i,t}$, not $c_j-c_i$.

Theorem~\ref{thm:hetero-lb} can be proved similarly as in Theorem~\ref{thm:uniform-lb}. The key difference is in the design of problem instances used to provide lower bounds. For the uniform case, we partition the set of classes into two sets $\I_1$ and $\I_2$ and compare $M_1=\sum_{i\in\I_1}N_i$ and $M_2=\sum_{i\in\I_2}N_i$ where $M_\ell$ equals the total number of jobs that belong to a class in $\I_\ell$ for $\ell\in\{1,2\}$. For the heterogeneous case, we compare $M_1=\sum_{i\in\I_1}N_i/\mu_i$ and $M_2=\sum_{i\in\I_2}N_i/\mu_i$ where $M_\ell$ for $\ell\in\{1,2\}$ collects the number of jobs that belongs to a class $i$ in $\I_\ell$ normalized by the mean service time $\mu_i$. Furthermore, the mean holding cost of class $i$ is set to $\mu_{\min}/2\mu_i$, instead of $1/2$, or its perturbation given by $(1\pm\epsilon)\mu_{\min}/2\mu_i$. The rest of the proof is similar to that of Theorem~\ref{thm:uniform-lb}.

\subsection{Instance-dependent regret upper bounds}\label{sec:instance-dep}

The upper and lower bounds on the expected regret in the previous section are independent of the values of $c_1,\ldots, c_I$. However, it is intuitive to expect that Algorithm~\ref{preempt-then-nonpreemptive-refined}'s performance depends on the gaps between the values of $c_1\mu_1,\ldots, c_I\mu_I$, as it would be difficult to separate jobs $i$ and $j$ with $c_i\mu_i$ and $c_j\mu_j$ being close. Motivated by this, we give regret upper bounds that have an explicit dependence on the gaps between $c_1\mu_1,\ldots, c_I\mu_I$.

We will show that the expected regret depends on the quantity $\Delta$ defined as
\begin{equation*}\label{eq:delta-gap}
	\Delta:=\min\left\{\frac{|c_i\mu_i-c_j\mu_j|}{\mu_i+\mu_j}:\ i,j\in\I, i\neq j\right\}
\end{equation*}
that captures the gap between $c_i\mu_i$ and $c_j\mu_j$ values for distinct $i,j$.

\begin{theorem}\label{instance:N}
	The expected regret of Algorithm~\ref{preempt-then-nonpreemptive-refined} is
	\begin{equation*}
		O\left(\frac{\barn}{N_{\min}}\left(\frac{1}{\Delta^2}+\frac{\mu_{\max}\log N}{\mu_{\min}\Delta}\right)\log N\bar\scale\right).
	\end{equation*}
\end{theorem}

Notice that the upper bound has a logarithmic dependence on $\scale$ when the gaps between $c_1\mu_1,\ldots,c_I\mu_I$ are fixed. On the other hand, the largest factor in $\barn$ is still $\barn$ as in the instance-independent upper bound~\eqref{regret:uniform-ub-2} while the explicit dependence on $N$ is poly-logarithmic.

\subsection{Stochastic service times}\label{sec:stochastic}

Algorithm~\ref{preempt-then-nonpreemptive-refined} works for the case of stochastic service times as well. We assume that for $i\in\I$, the mean of a class $i$ job's service time is given by $\scale/\mu_i$ and known to the decision-maker. This incorporates the setting of deterministic service times as a special case. Unlike the deterministic case, some jobs may be finished during the preemption phase in the stochastic case.

Although the service time of each job has randomness unlike the deterministic setting, the original definition of the expected regret extends to the stochastic setting of this section. Recall that $\J_{i,t}^\pi$ is the set of remaining class $i$ jobs at time $t$ under scheduling policy $\pi$ and the cumulative holding cost under $\pi$ up to time $T$ is given by $\sum_{t=1}^T \sum_{i\in\I} \sum_{n\in\J_{i,t}^\pi} X_{n,t}$. Here, $\J_{i,t}^\pi$ depends on not only the random holding costs incurred by the jobs but also the random service times of jobs. Nevertheless, it still holds under the stochastic setting that $\mathbb{E}[X_{n,t}]=c_i$ for any $n\in\J_i$ at any time $t$ and. Therefore, $C^\pi$ and $R^\pi$ can be properly defined as in Section~\ref{sec:problem}.

We prove that when the service time of each job is geometrically distributed, the expected regret of Algorithm~\ref{preempt-then-nonpreemptive-refined} can be still sublinear in $\scale$ and subquadractic in $N$. The probability that each job of class $i$ is completed when it is served in a time slot is $\mu_i/\scale$. For this setting, we set $\Ts$ to
$$\Ts = \lfloor \barn^{2/3}\bar\scale^{2/3}\left(\log N\bar\scale\right)^{1/3}\rfloor.$$
Based on the memoryless property of the geometric distribution, we obtain the following regret upper bound.
\begin{theorem}\label{thm:stochastic}
	When the service time of each job of class $i$ is geometrically distributed with mean $\mu_i/\scale$, the expected regret of Algorithm~\ref{preempt-then-nonpreemptive-refined} is $O(N^{2/3}\barn \bar\scale^{2/3}(\log N\bar\scale)^{1/3})$.
\end{theorem}
The upper bound for the case of geometrically distributed service times has a subquadratic dependence on $N$ and a sublinear dependence on $\bar \scale$, although the dependence on the parameter $N$ is worse than the upper bound~\eqref{regret:uniform-ub-2} for the deterministic homogeneous case.

\section{Experiments}\label{sec:experiments}

We ran experiments to assess the numerical performance of the %
\pnalg, given by Algorithm~\ref{preempt-then-nonpreemptive}, and its refined version, given by Algorithm~\ref{preempt-then-nonpreemptive-refined}. We designed three sets of experiments, described as follows. The first set of experiments is to test the efficiency of {\pnalg} %
against the preemptive empirical $c\mu$ rule and the empirical $c\mu$ rule without preemption. The second set of experiments is for evaluating the tightness of the proposed upper and lower bounds on the regret of {\pnalg} %
by measuring how the expected regret behaves as a function of parameters $N$ and $\scale$. The third set of experiments is designed to compare {\pnalg} %
and the refined {\pnalg}, given by Algorithm~\ref{preempt-then-nonpreemptive-refined}. We explain the details of each set of experiments and discuss the results in the following subsections.
Our code for running the experiments and obtained data are publicly available in \url{https://github.com/learning-to-schedule/learning-to-schedule}. %

\subsection{{\pnalg} versus the pure preemptive and nonpreemptive $c\mu$ rules}

In Figure~\ref{fig:uniform}, we show the results for comparing {\pnalg} against the preemptive and nonpreemptive versions. We use instances with $N=20$, $N_i=1$ for $i\in\I$, $\scale=2000$, $\mu_i=1$ for $i\in\I$, and $c_1,\ldots,c_I$ being sampled from the uniform distribution on $[0.5-\varepsilon,0.5+\varepsilon)$, where $\varepsilon$ is a parameter that we vary.
\begin{figure}[t!]
	\begin{center}
		\includegraphics[width=0.45\linewidth]{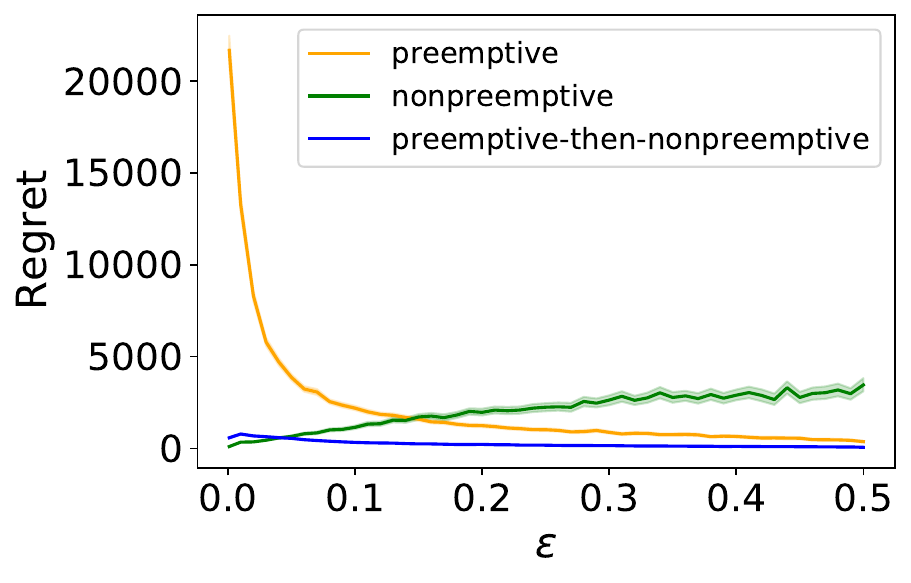}\hspace*{0.5cm}
		\includegraphics[width=0.45\linewidth]{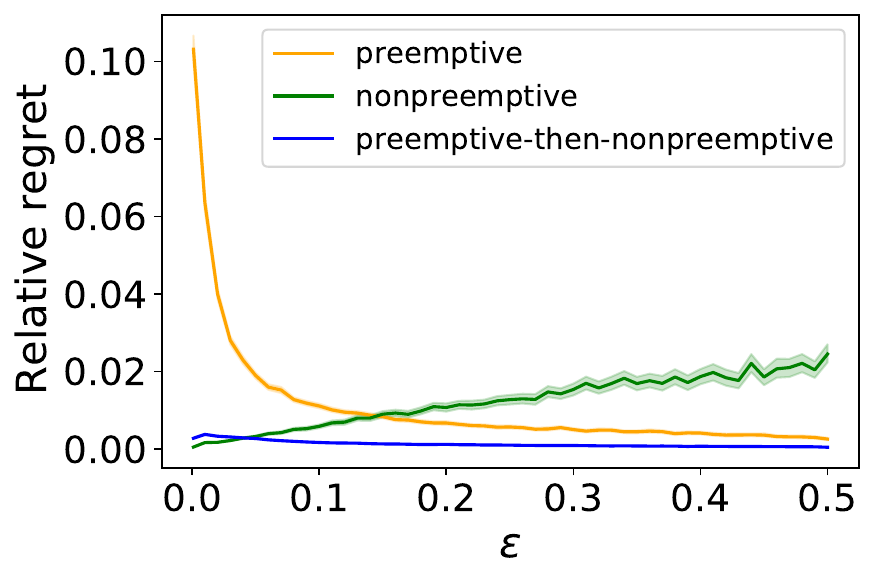}
		\caption{Comparing the three versions of the empirical $c\mu$ rule for the case of deterministic service times and equal service times: (left) regret and (right) relative regret.}\label{fig:uniform}
	\end{center}
\end{figure}
For each value of $\varepsilon$, we generate 100 instances, and for each of which, we record the expected regret of each algorithm where the expectation is taken over the randomness in holding costs. The left plot in Figure~\ref{fig:uniform} shows how the (expected) regret changes by varying the value of $\varepsilon$, and the right plot shows the (expected) \emph{relative} regret, defined as the regret divided by the minimum expected cumulative cost. As expected, the preemptive version suffers for instances of small $\varepsilon$ where the mean holding costs of jobs are close to each other, whereas the nonpreemptive $c\mu$ rule's regret does seem to increase for instances of large $\varepsilon$ where there may be large gaps between the jobs' mean holding costs. Compared to these two algorithms, our {\pnalg} performs uniformly well over different values of $\varepsilon$. 
\begin{figure}[t!]
	\begin{center}
		\includegraphics[width=0.45\linewidth]{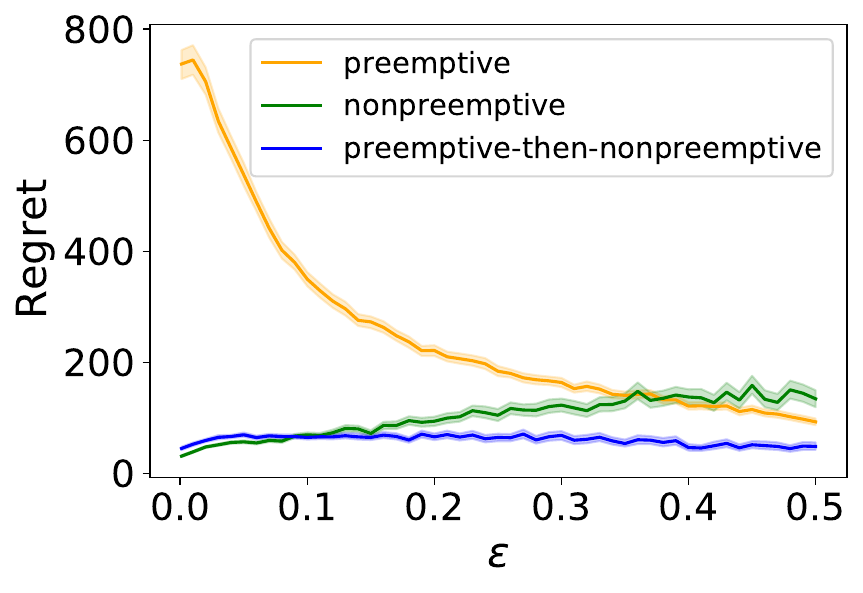}\hspace*{0.5cm}
		\includegraphics[width=0.45\linewidth]{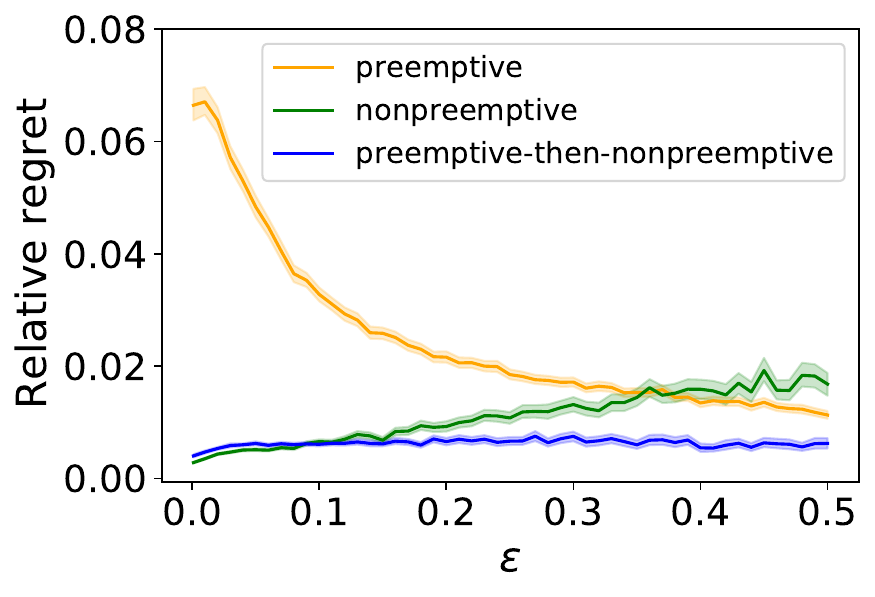}
		\caption{Comparing the three versions of the empirical $c\mu$ rule for the case of deterministic and heterogeneous service times: (left) regret and (right) relative regret.}\label{fig:heterogeneous}
	\end{center}
\end{figure}
This trend continues even when jobs have heterogeneous service times. For the second set of results, we use the same setup as in the first experiment, but following \cite{borg}, we sample the mean service times $\scale/\mu_1,\ldots,\scale/\mu_I$ using a translated (heavy-tailed) Pareto distribution so that $\scale/\mu_i\geq 100$ for $i\in\I$. More precisely, for each $\scale/\mu_i$, we sample a number $x_i$ from the distribution with the density function $f(x)= \frac{0.7}{x^{1.7}}\quad\text{for}\ x\in[1,\infty)$,
and then, we set $\scale/\mu_i=99 + \lfloor x_i\rfloor$.
Here, the density function corresponds to the Pareto distribution with shape parameter $0.7$\footnote{According to~\cite{borg}, Google's 2019 workload data shows that the resource-usage-hours, corresponding to the service times, of jobs follow the Pareto distribution with shape parameter 0.69 (see Figure 12 in~\cite{borg}).}, which has infinite mean. As we assumed that each $\scale/\mu_i$ is an integer, we take $\lfloor x_i\rfloor$, to which we add 99 to ensure that $\scale/\mu_i$ is at least 100. Figure~\ref{fig:heterogeneous} shows that our algorithm achieves small regrets for all values of $\varepsilon$ even for the case of heterogeneous service times.

\subsection{%
	Dependence of the regret of {\pnalg} %
	on parameters $\scale$ and $N$}

To examine how the expected regret of {\pnalg}
grows as a function of $\scale$, we test instances with $N=20$, $N_i=1$ for $i\in\I$, and different values of $S$ from 20 to 1,000,000. To understand how the expected regret depends on $N$, we test instances with $\scale=1000$, $N_i=1$ for $i\in\I$, and different values of $N$ from 2 to 1000. For both kinds of experiments, we set $\mu_i=1$ for $i\in\I$ and $\varepsilon=0.001$, and the reason for this choice is that the family of instances used for providing the regret lower bound~\eqref{regret:uniform-lb} have jobs whose mean holding costs are concentrated around $1/2$ when $\mu_i=1$ for $i\in\I$. For each setup, we generate 100 random instances by sampling $c_1,\ldots, c_I$ from $[0.5-\varepsilon,0.5+\varepsilon)$ uniformly at random.
\begin{figure}[t!]
	\begin{center}
		\includegraphics[width=0.45\linewidth]{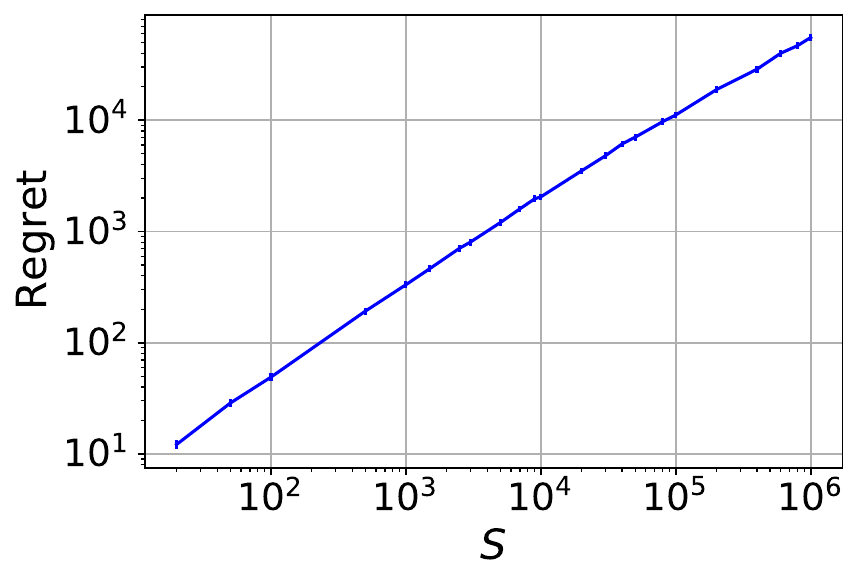}\hspace*{0.5cm}
		\includegraphics[width=0.45\linewidth]{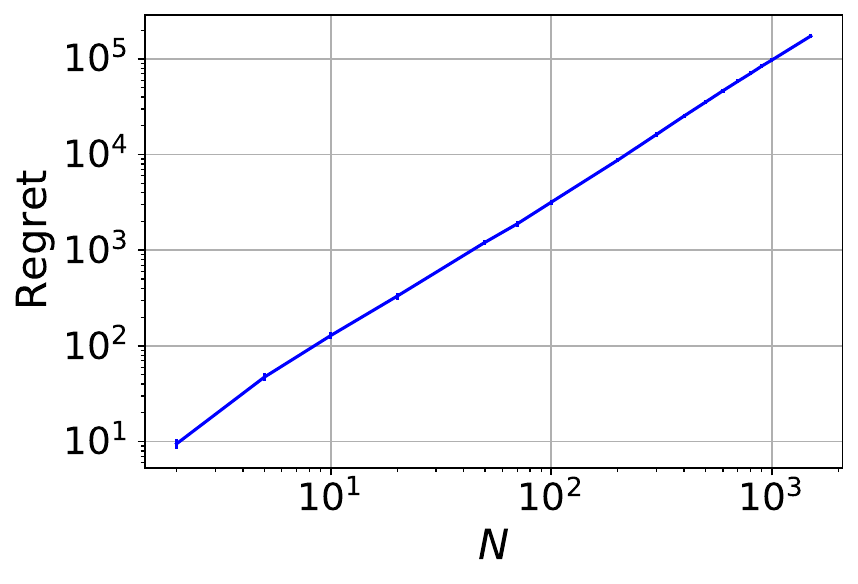}
		\caption{Examining how the regret grows as a function of $S$ (left) and $N$ (right).}\label{fig:dependence}
	\end{center}
\end{figure}
The left plot in Figure~\ref{fig:dependence} shows the regret's dependence on $\scale$ in logaritmic scales of the axes. The plot is almost linear, and its slope is roughly $3.4/4.9\simeq 0.69$, which is close to the exponent $2/3$ for the $\scale$ factors in both the upper bound~\eqref{regret:uniform-ub-1} and the lower bound~\eqref{equ:lb1}. The right plot in Figure~\ref{fig:dependence} shows the regret's dependence on $N$, also in logarithmic scales of the axes. As the left one, the plot is also almost linear, and its slope is approximately $4.1/2.9\simeq 1.41$. This result suggests that the upper bound~\eqref{regret:uniform-ub-1} is close to being exact and that there may be a larger room for improving the lower bound~\eqref{equ:lb1}.

\subsection{Superiority of the refined {\pnalg} for the case of unbalanced job classes}

To consider the case of unbalanced job classes, we generate instances of unbalanced job classes, where $I=2$, $N_1=N-1$, $N_2=1$, $\scale=100$, and $\mu_1=\mu_2=1$. Under this setting, one class contains all but one job, which means that we have $\bar N = N-N_{\max}=1$. We assign $1/2+\varepsilon$ to the mean holding cost value of a class and $1/2-\varepsilon$ to that of the other class where $\varepsilon$ is set to a value in $\{0.001,0.002,\ldots,0.01\}$. %
More precisely, for each instance with a fixed $\varepsilon$, we have $c_1=1/2+\varepsilon$ and $c_2=1/2-\varepsilon$ with probability $1/2$ and $c_1=1/2-\varepsilon$ and $c_2=1/2+\varepsilon$ with probability $1/2$. Following this, we generate 100 instances for each value of $\varepsilon$, and for each instance, we ran {\pnalg} %
and the refined {\pnalg}
and compare their performances measured by regret values. 

Proposition~\ref{prop:ex1} shows that the expected regret of {\pnalg}
is $\Omega(N\scale^{2/3})$ while it follows from~\eqref{thm:uniform-ub-2} that the expected regret of the refined {\pnalg}
is $O\left(\max\left\{\scale^{2/3}(\log N\scale)^{1/3}, N^{1/2}\scale^{1/2}(\log N\scale)^{1/2}\right\}\right)$ as $\bar N=1$. Hence, it is expected that the refined {\pnalg}
gives rise to a smaller regret than 
{\pnalg}.
\begin{figure}[t!]
	\begin{center}
		\includegraphics[width=0.45\linewidth]{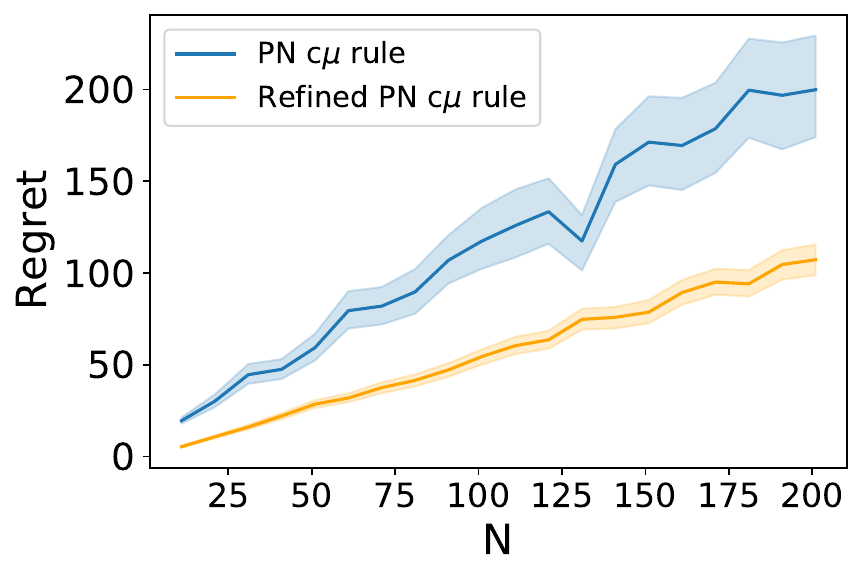}
		\includegraphics[width=0.46\linewidth]{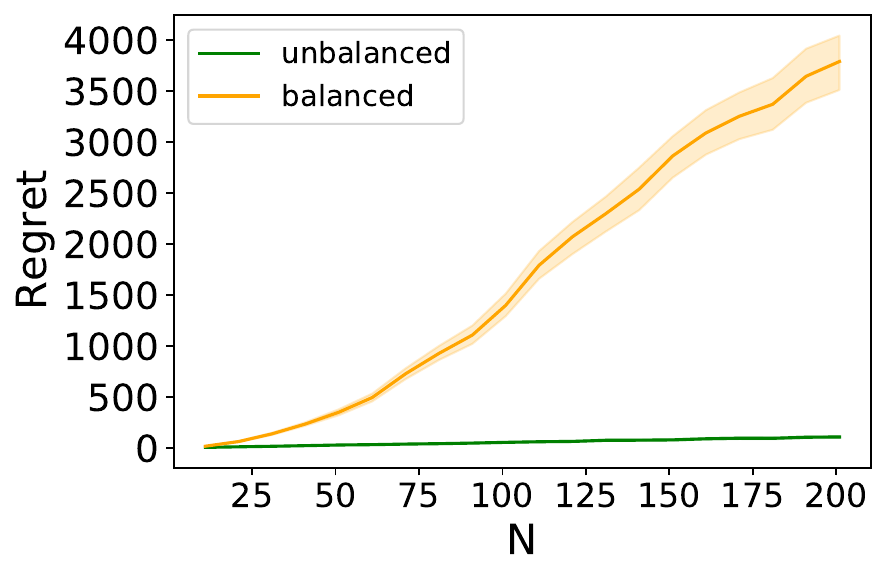}\hspace*{0.5cm}
		\caption{Comparing the \pnalg and its refined version (left) and testing the regret of the refined algorithm for the balanced and the unbalanced cases (right). }\label{fig:refined_supremacy}
	\end{center}
\end{figure}
The left of Figure~\ref{fig:refined_supremacy} shows a numerical result that meets our expectation deduced from the theoretical results. Note that {\pnalg} %
exhibits a steeper growth of regret as $N$ grows than the refined {\pnalg}.

We ran another type of experiments to see how %
the refined {\pnalg}'s performance behaves depending on whether the jobs are equally distributed among classes or not. We generate instances of balanced job classes, where $I=2$, $N_1=N-\lfloor N/2\rfloor$, $N_2=\lfloor N/2\rfloor$, $\scale=100$, and $\mu_1=\mu_2=1$. The mean holding costs $c_1$ and $c_2$ of the two classes are set in the same way as the unbalanced case. The right plot of Figure~\ref{fig:refined_supremacy} depicts how the regret of Algorithm~\ref{preempt-then-nonpreemptive-refined} grows as $N$ increases under each case. The regret increases at a significantly faster rate under the balanced case than the unbalanced case. This observation aligns with our theoretical founding indeed. The class of instances used for the balanced case is precisely the ones used for proving a lower bound on the expected regret, $\Omega\left(\max\left\{N^{2/3}\scale^{2/3},\ N \scale^{1/2}\right\}\right)$ given by Theorem~\ref{thm:uniform-lb}. However, we observed that the regret of the refined {\pnalg} 
under the unbalanced case is $O\left(\max\left\{\scale^{2/3}(\log N\scale)^{1/3}, N^{1/2}\scale^{1/2}(\log N\scale)^{1/2}\right\}\right)$, which has a significantly smaller dependence on parameter $N$.

\section{Conclusion and future work}

This paper %
studies the problem of finding a learning and scheduling algorithm to find a schedule of jobs minimizing the expected cumulative holding cost %
in the setting of stochastic job holding costs with mean job holding costs being unknown to the scheduler.
We give bounds on the expected regret of our algorithm for both the case of deterministic service times and the setting of geometrically distributed stochastic service times. Lastly, we provide numerical results that support our theoretical findings.

One open question is about improving our analysis for the case of heterogeneous service times. The regret upper and lower bounds that we provided for the heterogeneous case have some gaps with respect to the ratio $\mu_{\max}/\mu_{\min}$. We leave as an open question to improve upper and lower bounds on the expected regret of the preemptive-then-nonpreemptive empirical $c\mu$ rule for the case of large gaps in $\mu_1,\ldots,\mu_I$.

Another open question concerns the case of geometrically distributed stochastic service times. Although we have proved that the expected regret of our algorithm is sublinear in the scaling factor $S$ and subquadratic in $N$, we think that there exists a more refined regret analysis. Our argument is based on the observation that the jobs remaining after the preemption phase will have generated $\tau$ instantiated holding costs. However, as the service times of jobs are stochastic, the number of observations for a job is also a random variable, but we could not take this into account in our analysis. 

One may also consider some variations of our problem by allowing for partial or delayed feedback. We can imagine a situation where the learner observes stochastic holding costs only for a subset of items in each time step, or another scenario in which realized job holding costs are observed by the learner after some delay. %
This may be of interest in real-world systems where only a limited information about stochastic holding costs is accessible by the learner due to computation or communication constraints in each time step.

Lastly, it is left for future work to study cases when \emph{both} mean job holding costs and mean job service times are unknown parameters. \cite{learning-cmu-rule} considers unknown mean service times, whereas our work studies the case of unknown mean job holding costs. Combining these two frameworks may be an interesting problem to study.

\paragraph{Acknowledgements}
This research is supported, in part, by the Institute for Basic Science (IBS-R029-C1, Y2) and the Facebook Systems for ML Research Award.

\bibliographystyle{plainnat} 
\bibliography{mybibfile}

\appendix

\section{Clean event}\label{sec:clean-event}

Henceforth, we use notation $[m]$ for any positive integer $m$ to denote $\{1,\ldots, m\}$, the set of all positive integers less than or equal to $m$. Recall that the service time of a job takes a value in $\{S/\mu_i: i\in\I\}$ where $\I$ denotes the set of classes and that $\mu_{\min}=\min_{i\in\I}\mu_i$. Moreover, $N_i$ is the number of initial jobs of class $i\in\I$, $N_{i,t}$ is the number of class $i\in\I$ jobs that remain in time slot $t\geq 1$, and $N=\sum_{i\in\I} N_i$.

We define the notion of "clean event" to analyze the performance of the preemptive-then-nonpreemptive empirical $c\mu$ rule. %
Recall that $\hat c_{i,t}$ denotes $H_{i,t}/\sum_{s=1}^t N_{i,s}$ where $H_{i,t}$ is the total cumulative holding cost incurred by the jobs of class $i$ up to time slot $t$, which is the sum of $\sum_{s=1}^t N_{i,s}$ i.i.d. sub-Gaussian random variables (per-time holding costs). As the number $\sum_{s=1}^t N_{i,s}$ itself is a random variable, we apply the "reward tape" argument from~\cite{MAL-068}. The total number of realized per-time holding costs incurred by class $i$ jobs is at most $N_iN\scale/\mu_{\min}$, because class $i$ has $N_i$ jobs initially and the algorithm must complete all jobs by time $N\scale/\mu_{\min}$. For each class $i$, we obtain $N^2\scale/\mu_{\min}$ samples from the per-time holding cost distribution of class $i$ and record them in a tape with $N^2\scale/\mu_{\min}$ cells. Here, $N_iN\scale/\mu_{\min}$ cells suffice, but we take $N^2\scale/\mu_{\min}(\geq N_iN\scale/\mu_{\min})$ cells for technicality. Then for a job of class $i$ remaining at time $t$, its holding cost for the time slot is taken from a cell in the tape. For $m=1,2,\ldots, N^2\scale/\mu_{\min}$, let $X_{i,m}$ be the $m$th cost value recorded on the tape. 

We say that the \emph{clean event} holds when the following condition is satisfied:
\begin{equation*}
	\left|c_i - \frac{1}{m}\sum_{s=1}^mX_{i,s}\right| \leq x_m \ \text{for} \ m\in[N^2\scale/\mu_{\min}]\ \text{and for} \ i\in \I,\ \text{where}\ x_m=\sqrt{\frac{3}{m}\log \frac{NS}{\mu_{\min}}}.
\end{equation*}

Recall that $S/\mu_i$ is assumed to be an integer for each $i\in \I$, in which case $N^2S/\mu_{\min}$ is an integer.
Since $X_{i,m}$ for all $m$ are sub-Gaussian %
with mean $c_i$ and variance proxy parameter\footnote{This is equivalent to the variance when the distribution is Gaussian.} $1$, %
by Hoeffding's inequality \cite{hoeffding}, 
$$\mathbb{P}\left[\left|c_i - \frac{1}{m}\sum_{s=1}^mX_{i,s}\right| > x_m\right]\leq 2\exp(-2m x_m^2)$$
for any $m\geq 1$ and $x_m>0$ since $c_i\in[0,1]$. Then we obtain the following by using the union bound:
\begin{align}\label{clean-event:bound}
	\begin{aligned}
		\mathbb{P}\left [\text{clean event}\right] &=\mathbb{P}\left[\left|c_i - \frac{1}{m}\sum_{s=1}^mX_{i,s}\right| \leq x_m\ \text{for} \ m\in[N^2\scale/\mu_{\min}]\ \text{and for} \ i\in \I\right]\\&\geq 1-\sum_{i\in \I}\sum_{m\in[N^2\scale/\mu_{\min}]}\mathbb{P}\left[\left|c_i - \frac{1}{m}\sum_{s=1}^mX_{i,s}\right| > x_m\right]\\
		&\geq 1-2I\sum_{m\in[N^2\scale/\mu_{\min}]}\exp(-2m x_m^2)\\
		&=1-2I\frac{N^2\scale}{\mu_{\min}}\frac{\mu_{\min}^6}{N^6\scale^6}\\
		&\geq 1-\frac{2}{{N^3\bar \scale^5}}
	\end{aligned}
\end{align}
where the last inequality is because $I\leq N$ and $\bar\scale = \scale/\mu_{\min}$. %
Hence, under the clean event, we have that 
\begin{equation}\label{conf-interval}
	c_i\mu_i\in\left[\hat c_{i,t}\mu_i-\mu_i\sqrt{\frac{3}{\sum_{s=1}^tN_{i,s}}\log \frac{N\scale}{\mu_{\min}}},\ \hat c_{i,t}\mu_i+\mu_i\sqrt{\frac{3}{\sum_{s=1}^tN_{i,s}}\log \frac{N\scale}{\mu_{\min}}}\right].
\end{equation}
for all $i\in \I$ and $t\in[N\scale/\mu_{\min}]$. %

Consider two classes $i$ and $j$ such that $c_i\mu_i\geq c_j\mu_j$. If $\hat c_{i,t}\mu_i\leq \hat c_{j,t}\mu_j$, under the clean event, the following holds:
\begin{align}\label{clean-event:gap}
	\begin{aligned}
		c_i\mu_i-c_j\mu_j &\leq \left(\hat c_{i,t}\mu_i +\mu_i\sqrt{\frac{3}{\sum_{s=1}^tN_{i,s}}\log \frac{N\scale}{\mu_{\min}}}\right)- \left(\hat c_{j,t}\mu_j-\mu_j\sqrt{\frac{3}{\sum_{s=1}^tN_{j,s}}\log \frac{N\scale}{\mu_{\min}}}\right) \\
		&\leq \mu_i\sqrt{\frac{3}{\sum_{s=1}^tN_{i,s}}\log \frac{N\scale}{\mu_{\min}}}+\mu_j\sqrt{\frac{3}{\sum_{s=1}^tN_{j,s}}\log \frac{N\scale}{\mu_{\min}}}.
	\end{aligned}
\end{align}

\section{A basic tool for understanding the expected regret}\label{sec:regret-basic}

In this section, we prove Lemma~\ref{lemma:regret-1} that provides an equivalent representation of the expected regret, which our regret analysis later crucially relies on. The representation given by Lemma~\ref{lemma:regret-1} allows us to decompose the expected regret to smaller terms that correspond to individual jobs. In particular, the representation unravels how the regret depends on the delay costs and the gaps between jobs' mean holding costs.

Without loss of generality, we assume that $$c_1\geq c_2\geq \cdots \geq c_I.$$ 
There are total $N=\sum_{i\in \I} N_i$ jobs in $\J$ that are initially present to be served. We enumerate the $N$ jobs from $1$ to $N$ so that jobs $1+\sum_{j\in[i-1]}N_j,\ldots, \sum_{j\in[i]}N_j$ are the ones in $\J_i$ of class $i$. When the values of $c_1,\ldots, c_I$ are known, we may serve jobs from 1 to $N$, minimizing the total cumulative holding cost. Let $d_n$ denote the %
mean holding cost per unit time
of job $n\in[N]$. Then, if job $n$ is of class $i$, we have $d_n=c_i$. Moreover, we introduce notation $\hat d_{n,t}$ for $n\in[N]$ and $t\geq1$ which is equivalent to $\hat c_{i,t}$ assuming that job $n$ is of class $i$.

Now let $\sigma:[N]\rightarrow [N]$ be the permutation of $[N]$ that corresponds to the sequence of jobs completed by an algorithm $\pi$, i.e., $\pi$ finishes jobs in the order $\sigma(1),\sigma(2),\ldots, \sigma(N)$. For $n\in[N]$, let us count the number of time steps where job $\sigma(n)$ stays in the system. For job $\sigma(n)$ to be completed, the system needs to process jobs $\sigma(1),\ldots,\sigma(n-1)$ first and then job $\sigma(n)$, for which the server needs to spend $n\scale$ time steps. At the same time, the server may spend some number of time steps, denoted $W_n$, to serve jobs other than $\sigma(1),\ldots, \sigma(n)$ before completing job $\sigma(n)$. Then job $\sigma(n)$ stays in the system for precisely $W_n+n\scale$ time steps. Let $C^\pi$ and $R^\pi$ denote the cumulative holding cost and the regret incurred up to $\Tc$, the time at which all jobs are completed, respectively. Then $C^\pi$ is precisely,
$$
C^\pi=\sum_{n\in[N]}d_{\sigma(n)}W_n+\sum_{n\in[N]}d_{\sigma(n)}\cdot n\scale,$$
and since the minimum holding cost is $\sum_{n\in[N]} d_n\cdot n\scale$, we have
\begin{equation}\label{prelim-regret'}
	R^\pi=\sum_{n\in[N]}d_{\sigma(n)}W_n+\sum_{n\in[N]}\left(d_{\sigma(n)}-d_n\right)n\scale.\end{equation}
Here, $d_{\sigma(n)}-d_n$ can be negative. Nonetheless, we will show that $\sum_{n\in[N]}\left(d_{\sigma(n)}-d_n\right)\cdot n\scale$ can be rewritten as a sum of nonnegative terms only. Let $E_n$ be defined as 
\begin{equation}\label{eq:E_n}
	E_n:=\left\{\ell\in[N]:\ \ell>n,\ \sigma(\ell) < \sigma(n) \right\}.
\end{equation}
Then we know that
$$
E_n\supseteq\left\{\ell\in[N]:\ \text{$\sigma(n)$ finishes before $\sigma(\ell)$ and
	$d_{\sigma(\ell)}> d_{\sigma(n)}$}\right\}.
$$
Note that for any $n\in[N]$ and $\ell\in E_n$, we know that $d_{\sigma(\ell)}-d_{\sigma(n)}\geq 0$. 
\begin{lemma}\label{lemma:regret-1}
	Let $E_n$ be defined as in~\eqref{eq:E_n}. Then 
	\begin{equation}\label{prelim-regret}
		R^\pi=\sum_{n\in[N]}d_{\sigma(n)}W_n+\sum_{n\in[N]}\sum_{\ell\in E_n}(d_{\sigma(\ell)} - d_{\sigma(n)})\scale.
	\end{equation}
\end{lemma}
\begin{proof}
	Due to~\eqref{prelim-regret'}, it is sufficient to show that
	$$ \sum_{n\in[N]}\left(d_{\sigma(n)}-d_n\right)\cdot n\scale=\sum_{n\in[N]}\sum_{\ell\in E_n}(d_{\sigma(\ell)} - d_{\sigma(n)})\scale$$
	holds. The first sum can be rewritten as
	\begin{equation}\label{eq:sum1}
		\sum_{n\in[N]}\left(d_{\sigma(n)}-d_n\right)\cdot n\scale = \sum_{n\in[N]}d_{\sigma(n)}\left(n-\sigma(n)\right)\scale.
	\end{equation} 
	Now let us count how many times each $d_{\sigma(n)}$ appears in the sum $\sum_{n\in[N]}\sum_{\ell\in E_n}(d_{\sigma(\ell)} - d_{\sigma(n)})$. In the sum $\sum_{n\in[N]}\sum_{\ell\in E_n}d_{\sigma(\ell)}$, note that $d_{\sigma(n)}$ appears once for every $k\in [N]$ such that $n\in E_k$. Moreover, $d_{\sigma(n)}$ appears once for every $\ell\in E_n$ in the sum $\sum_{n\in[N]}\sum_{\ell\in E_n}d_{\sigma(n)}$. Hence, the aggregated number of appearance of $d_{\sigma(n)}$ in $\sum_{n\in[N]}\sum_{\ell\in E_n}(d_{\sigma(\ell)} - d_{\sigma(n)})$ is precisely
	\begin{equation*}
		\left|\left\{k\in[N]:\ k<n,\ \sigma(k)> \sigma(n)\right\}\right| - \left|\left\{\ell\in[N]:\ \ell>n,\ \sigma(\ell)< \sigma(n)\right\}\right|.
	\end{equation*}
	Note that
	$$ \left|\left\{k\in[N]:\ k<n,\ \sigma(k)> \sigma(n)\right\}\right|+\left|\left\{k\in[N]:\ k<n,\ \sigma(k)< \sigma(n)\right\}\right| =\left|\left\{k\in[N]:\ k<n\right\}\right|=n-1$$
	and that
	$$ \left|\left\{\ell\in[N]:\ell>n,\sigma(\ell)< \sigma(n)\right\}\right|+\left|\left\{k\in[N]:k<n, \sigma(k)< \sigma(n)\right\}\right| =\left|\left\{\ell\in[N]: \sigma(\ell)< \sigma(n)\right\}\right|=\sigma(n)-1.$$
	This implies that
	$$\left|\left\{k\in[N]:\ k<n,\ \sigma(k)> \sigma(n)\right\}\right| - \left|\left\{\ell\in[N]:\ \ell>n,\ \sigma(\ell)< \sigma(n)\right\}\right| = (n-1)-(\sigma(n)-1)=n-\sigma(n),$$
	and therefore, the aggregated count of $d_{\sigma(n)}$ in $\sum_{n\in[N]}\sum_{\ell\in E_n}(d_{\sigma(\ell)} - d_{\sigma(n)})$ is exactly $n-\sigma(n)$. This means that
	\begin{equation}\label{eq:sum2}
		\sum_{n\in[N]}\sum_{\ell\in E_n}(d_{\sigma(\ell)} - d_{\sigma(n)})\scale=\sum_{n\in[N]}d_{\sigma(n)}(n-\sigma(n))\scale.
	\end{equation}
	Hence, we deduce from~\eqref{eq:sum1} and~\eqref{eq:sum2} that $ \sum_{n\in[N]}\left(d_{\sigma(n)}-d_n\right)\cdot n\scale=\sum_{n\in[N]}\sum_{\ell\in E_n}(d_{\sigma(\ell)} - d_{\sigma(n)})\scale$, as required.
\end{proof}

\section{Proof of Theorem~\ref{thm:uniform-ub-1}}\label{sec:proof:uniform-ub-1}

In Section~\ref{sec:uniform-ub-step1}, we prove Lemma~\ref{lemma:uniform-ub-1} that gives the regret upper bound~\eqref{regret:uniform-ub-1'}. The bound~\eqref{regret:uniform-ub-1'} has terms involving the parameter $\tau$, which is the length of the preemption phase. In Section~\ref{sec:uniform-ub-step2}, setting $\tau$ as in~\eqref{eq:Ts-choice} gives rise to the regret upper bound~\eqref{regret:uniform-ub-1}, thereby proving Theorem~\ref{thm:uniform-ub-1}.

\subsection{Proof of Lemma~\ref{lemma:uniform-ub-1}}\label{sec:uniform-ub-step1}
In this section, we give a complete proof of Lemma~\ref{lemma:uniform-ub-1}, which states that the expected regret of Algorithm~\ref{preempt-then-nonpreemptive} is bounded above by
\begin{equation}
	O\left(N \Ts + \frac{N\scale(\log N\scale)^{1/2}}{N_{\min}^{1/2}(\Ts+1)^{1/2}}+\min\left\{IN,I^{1/2}N{(\log N)^{1/2}},\frac{N^{3/2}}{N_{\min}^{1/2}}\right\}\scale^{1/2}({\log N\scale})^{1/2}\right).\tag{\ref{regret:uniform-ub-1'}}
\end{equation}

We use Lemma~\ref{lemma:regret-1} to provide the regret upper bound~\eqref{regret:uniform-ub-1'}. In~\eqref{prelim-regret}, the first sum comes from jobs getting delayed. We will argue that under Algorithm~\ref{preempt-then-nonpreemptive}, the term can be bounded by the delay costs incurred during the preemption phase only. We further decompose the second sum in~\eqref{prelim-regret} to the terms for the first job completed and the other terms. The first job for nonpreemptive serving is chosen right after the preemption phase, and we can bound the corresponding terms by upper bounding the gaps between jobs' mean holding costs. The terms for the other jobs can be analyzed similarly by understanding how large the gaps between jobs' mean holding costs are, but the difficulty is that as jobs get finished and leave the system, we need to carefully keep track of the number of remaining jobs and the confidence interval for the mean holding cost of each class.

We have defined the notion of clean event in Appendix~\ref{sec:clean-event}. Let us consider the case where the clean event does not hold first. Algorithm~\ref{preempt-then-nonpreemptive} is a work conserving policy, under which all jobs must be completed by the end of $N\scale$th time slot. An obvious implication of this is that the completion time of each job is bounded above by $N\scale$. Another straightforward fact is that the expected regret of Algorithm~\ref{preempt-then-nonpreemptive} is upper bounded by it expected cumulative holding cost, which is 
$$
\sum_{i\in \I}\sum_{n\in \J_i}c_i\cdot(\text{the completion time of job $n$ of class $i$ under Algorithm~\ref{preempt-then-nonpreemptive}}).
$$
As the completion time of each job under Algorithm~\ref{preempt-then-nonpreemptive} is at most $N\scale$ and $c_i \in [0,1]$ for all $i\in \I$, the expected cumulative holding cost is at most $N^2\scale$, and so is the expected regret.

We next focus on the case where the clean event holds. In particular, inequality~\eqref{clean-event:gap} holds for every pair of two jobs from different classes. We first use Lemma~\ref{lemma:regret-1} to bound the regret at completion $R^{\pi}$. We claim that $W_n\leq \Ts$. Let $t$ be some time slot in which the server gives service to a job other than $\sigma(1),\ldots, \sigma(n)$ while $\sigma(n)$ still waits to be served. Note that Algorithm~\ref{preempt-then-nonpreemptive} serves jobs without preemption after the preemption phase, which means that the time slot $t$ must be within the preemption phase. Hence, $t\leq \Ts$, and thus $W_n\leq \Ts$. Then by Lemma~\ref{lemma:regret-1} and~\eqref{prelim-regret}, 
$$R^\pi\leq N\Ts+\sum_{n\in[N]}\sum_{\ell\in E_n}(d_{\sigma(\ell)} - d_{\sigma(n)})\scale.$$
We will bound the second term on the right hand side of this inequality. Take $n=1$ and consider $\sum_{\ell\in E_1}(d_{\sigma(\ell)} - d_{\sigma(1)})$.
Note that $\sigma(1)$ is the job selected right after the preemption phase of Algorithm~\ref{preempt-then-nonpreemptive}, implying in turn that $\hat d_{\sigma(1), \Ts+1}\geq \hat d_{\sigma(\ell), \Ts+1}$ for all $\ell\in[N]$. Since each job requires $\scale$ units of service to finish, all $N$ jobs remain in the system until the end of the $\scale$th time slot. This means that as $\Ts<\scale$, all $N$ jobs are present in the system at the beginning of the $(\Ts+1)$th time slot. Then we have $N_{i,s}=N_i\geq N_{\min}$ for all $s\leq \Ts+1$ and $i\in \I$. It follows from inequality~\eqref{clean-event:gap} that
$$d_{\sigma(\ell)}-d_{\sigma(1)}\leq \sqrt{\frac{12}{N_{\min}(\Ts+1)}\log N\scale}.$$
As $E_1\subseteq [N]$, the cardinality of $E_1$ is trivially at most $N$, and therefore, we obtain
$$R^\pi\leq N\Ts+N\scale\sqrt{\frac{12}{N_{\min}(\Ts+1)}\log N\scale}+\sum_{n\in[N]\setminus\{1\}}\sum_{\ell\in E_n}(d_{\sigma(\ell)} - d_{\sigma(n)})\scale.$$
Now it remains to bound the third term on the right hand side of this bound on $R^\pi$. For $n\geq 2$, let $t_n$ denote the time when job $\sigma(n)$ is selected by Algorithm~\ref{preempt-then-nonpreemptive} after the preemption phase. As $t_n$ is a moment after jobs $\sigma(1),\ldots, \sigma(n-1)$ are completed, $t_n\geq (n-1)\scale$. For $n\geq 2$, Algorithm~\ref{preempt-then-nonpreemptive} finishes $\sigma(n)$ before $\sigma(\ell)$ for any $\ell\in E_n$, meaning that $\hat d_{\sigma(n),t_n}\geq \hat d_{\sigma(\ell),t_n}$. Then~\eqref{clean-event:gap} implies that for $n\geq 2$ and $\ell\in E_n$, 
\begin{align}\label{gapeq-balanced}
	\begin{aligned}d_{\sigma(\ell)} - d_{\sigma(n)}&\leq \sqrt{\frac{3}{\sum_{s=1}^{t_n} N_{\text{class of $\sigma(\ell)$},s}}\log N\scale}+\sqrt{\frac{3}{\sum_{s=1}^{t_n} N_{\text{class of $\sigma(n)$},s}}\log N\scale}\\
		&\leq  \sqrt{\frac{3}{\sum_{s=1}^{(n-1)\scale} N_{\text{class of $\sigma(\ell)$},s}}\log N\scale}+\sqrt{\frac{3}{\sum_{s=1}^{(n-1)\scale} N_{\text{class of $\sigma(n)$},s}}\log N\scale} 
	\end{aligned}
\end{align}
where the second inequality is due to our observation that $t_n\geq (n-1)\scale$. Based on~\eqref{gapeq-balanced}, we obtain
\begin{align}\label{bound1-balanced}
	\begin{aligned}
		&\sum_{n\geq 2}\sum_{\ell\in E_n}\left(d_{\sigma(\ell)}-d_{\sigma(n)}\right)\\
		&\leq \sum_{n\geq 2}\sum_{\ell\in E_n}\left(\sqrt{\frac{3}{\sum_{s=1}^{(n-1)\scale} N_{\text{class of $\sigma(\ell)$},s}}\log N\scale}+\sqrt{\frac{3}{\sum_{s=1}^{(n-1)\scale} N_{\text{class of $\sigma(n)$},s}}\log N\scale} \right)\\
		&\leq \sum_{n\geq 2}\sum_{\ell\in [N]}\sqrt{\frac{3}{\sum_{s=1}^{(n-1)\scale} N_{\text{class of $\sigma(\ell)$},s}}\log N\scale}+\sum_{n\geq 2}N\sqrt{\frac{3}{\sum_{s=1}^{(n-1)\scale} N_{\text{class of $\sigma(n)$},s}}\log N\scale}
	\end{aligned}
\end{align}
where the first inequality is directly implied by~\eqref{gapeq-balanced} and the second inequality is because $E_n\subseteq [N]$. We look at the second sum at the last part of inequality~\eqref{bound1-balanced} first.
\begin{equation}\label{bound2-balanced}
	\sum_{n\geq 2}\sqrt{\frac{3}{\sum_{s=1}^{(n-1)\scale} N_{\text{class of $\sigma(n)$},s}}}=\sum_{i\in\I}\sum_{n\geq2:\sigma(n)\in\J_i}\sqrt{\frac{3}{\sum_{s=1}^{(n-1)\scale} N_{\text{class of $\sigma(n)$},s}}}.
\end{equation}
Let $i\in\I$ be a class that $\sigma(1)$ does not belong to. Then for some $2\leq n_{i_1}\leq\cdots \leq n_{i_{N_i}}$, jobs $\sigma(n_{i_1}),\ldots, \sigma(n_{i_{N_i}})$ are in class $i$. Then
\begin{align}\label{bound3-balanced}
	\begin{aligned}
		\sum_{n\geq2:\sigma(n)\in\J_i}\sqrt{\frac{3}{\sum_{s=1}^{(n-1)\scale} N_{\text{class of $\sigma(n)$},s}}}
		&=\sum_{k=1}^{N_i}\sqrt{\frac{3}{\sum_{s=1}^{(n_{i_k}-1)\scale} N_{i,s}}}\\
		&=\sum_{k=1}^{\lceil N_i/2\rceil}\sqrt{\frac{3}{\sum_{s=1}^{(n_{i_k}-1)\scale} N_{i,s}}}+\sum_{k=\lceil N_i/2\rceil+1}^{N_i}\sqrt{\frac{3}{\sum_{s=1}^{(n_{i_k}-1)\scale} N_{i,s}}}\\
		&\leq \sum_{k=1}^{\lceil N_i/2\rceil}\sqrt{\frac{3}{\sum_{s=1}^{(n_{i_k}-1)\scale} N_{i,s}}}+\lfloor\frac{N_i}{2}\rfloor\sqrt{\frac{3}{\sum_{s=1}^{n_{i_{\lceil N_i/2\rceil}}\scale} N_{i,s}}}\\
		&\leq 2\sum_{k=1}^{\lceil N_i/2\rceil}\sqrt{\frac{3}{\sum_{s=1}^{(n_{i_k}-1)\scale} N_{i,s}}}\\
		&\leq 2\sum_{k=1}^{\lceil N_i/2\rceil}\sqrt{\frac{6}{(n_{i_k}-1)\scale N_i}}
	\end{aligned}
\end{align}
where the first inequality is due to $\sum_{s=1}^{(n_{i_{k}}-1)\scale} N_{i,s}\geq \sum_{s=1}^{n_{i_{\lceil N_i/2\rceil}}\scale} N_{i,s}$ for any $k\geq \lceil N_i/2\rceil+1$, the second inequality comes from  $\sum_{s=1}^{(n_{i_{k}}-1)\scale} N_{i,s}\leq \sum_{s=1}^{n_{i_{\lceil N_i/2\rceil}}\scale} N_{i,s}$ for any $k\leq \lceil N_i/2\rceil$, and the last inequality is because at least $N_i/2$ jobs of class $i$ remain in the system until choosing the $\lceil N_i/2\rceil$th job of class $i$. 

If $\sigma(1)$ is of class $i\in \I$, then for some $2\leq n_{i_1}\leq\cdots \leq n_{i_{N_i-1}}$, jobs $\sigma(n_{i_1}),\ldots, \sigma(n_{i_{N_i-1}})$ are in class $i$. Here, if $N_i=1$, then
\begin{equation}\label{bound4-balanced}
	\sum_{n\geq2:\sigma(n)\in\J_i}\sqrt{\frac{3}{\sum_{s=1}^{(n-1)\scale} N_{\text{class of $\sigma(n)$},s}}}=0.
\end{equation}
Now assume that $N_i\geq 2$. Then
\begin{align}\label{bound5-balanced}
	\begin{aligned}
		\sum_{n\geq2:\sigma(n)\in\J_i}\sqrt{\frac{3}{\sum_{s=1}^{(n-1)\scale} N_{\text{class of $\sigma(n)$},s}}}
		&=\sum_{k=1}^{N_i-1}\sqrt{\frac{3}{\sum_{s=1}^{(n_{i_k}-1)\scale} N_{i,s}}}\\
		&=\sum_{k=1}^{\lfloor N_i/2\rfloor}\sqrt{\frac{3}{\sum_{s=1}^{(n_{i_k}-1)\scale} N_{i,s}}}+\sum_{k=\lceil N_i/2\rceil}^{N_i-1}\sqrt{\frac{3}{\sum_{s=1}^{(n_{i_k}-1)\scale} N_{i,s}}}\\
		&\leq \sum_{k=1}^{\lfloor N_i/2\rfloor}\sqrt{\frac{3}{\sum_{s=1}^{(n_{i_k}-1)\scale} N_{i,s}}}+\lfloor\frac{N_i}{2}\rfloor\sqrt{\frac{3}{\sum_{s=1}^{n_{i_{\lfloor N_i/2\rfloor}}\scale} N_{i,s}}}\\
		&\leq 2\sum_{k=1}^{\lfloor N_i/2\rfloor}\sqrt{\frac{3}{\sum_{s=1}^{(n_{i_k}-1)\scale} N_{i,s}}}\\
		&\leq 2\sum_{k=1}^{\lfloor N_i/2\rfloor}\sqrt{\frac{12}{(n_{i_k}-1)\scale N_i}}
	\end{aligned}
\end{align}
where the first inequality is due to $\sum_{s=1}^{(n_{i_{k}}-1)\scale} N_{i,s}\geq \sum_{s=1}^{n_{i_{\lfloor N_i/2\rfloor}}\scale} N_{i,s}$ for any $k\geq \lceil N_i/2\rceil$, the second inequality comes from  $\sum_{s=1}^{(n_{i_{k}}-1)\scale} N_{i,s}\leq \sum_{s=1}^{n_{i_{\lfloor N_i/2\rfloor}}\scale} N_{i,s}$ for any $k\leq \lfloor N_i/2\rfloor$, and the last inequality is because at least $\lfloor N_i/2\rfloor \geq N_i/4$ jobs of class $i$ remain in the system until choosing the $\lfloor N_i/2\rfloor$th job of class $i$. 

Then it follows from~\eqref{bound2-balanced}--\eqref{bound5-balanced} that 
\begin{align}\label{bound6-balanced}
	\begin{aligned}
		&\sum_{n\geq 2}N\sqrt{\frac{3}{\sum_{s=1}^{(n-1)\scale} N_{\text{class of $\sigma(n)$},s}}\log N\scale}\\
		&\leq \sum_{i\in\I:\text{for some $n\geq 2$, $\sigma(n)\in\J_i$}}4N\sqrt{\log N\scale}\sum_{k=1}^{\lceil N_i/2\rceil}\sqrt{\frac{3}{(n_{i_k}-1)\scale N_i}}.
	\end{aligned}
\end{align}
Next, we turn our attention to the first sum at the end of inequality~\eqref{bound1-balanced}. Note that
\begin{equation}\label{bound7-balanced}
	\sum_{n\geq 2}\sum_{\ell\in [N]}\sqrt{\frac{3}{\sum_{s=1}^{(n-1)\scale} N_{\text{class of $\sigma(\ell)$},s}}}=\sum_{i\in\I}N_i\sum_{n\geq2:\sigma(n)\in\J_i}\sqrt{\frac{3}{\sum_{s=1}^{(n-1)\scale} N_{i,s}}}.
\end{equation}
Let $i\in \I$. If $\sigma(1)$ is not in class $i$, then as before, for some $2\leq n_{i_1}\leq\cdots \leq n_{i_{N_i}}$, jobs $\sigma(n_{i_1}),\ldots, \sigma(n_{i_{N_i}})$ are in class $i$. Moreover, 
\begin{align}\label{bound8-balanced}
	\begin{aligned}
		N_i\sum_{n\geq2}\sqrt{\frac{3}{\sum_{s=1}^{(n-1)\scale} N_{i,s}}}
		&\leq N_i\sum_{n=2}^{n_{i_{\lceil N_i/2\rceil}}}\sqrt{\frac{3}{\sum_{s=1}^{(n-1)\scale} N_{i,s}}}+N_i\sum_{n\geq n_{i_{\lceil N_i/2\rceil}}+1}\sqrt{\frac{3}{\sum_{s=1}^{(n-1)\scale} N_{i,s}}}\\
		&\leq N_i\sum_{n=2}^{n_{i_{\lceil N_i/2\rceil}}}\sqrt{\frac{6}{(n-1)\scale N_i}}+N_i\sum_{n\geq n_{i_{\lceil N_i/2\rceil}}+1}\sqrt{\frac{3}{\sum_{s=1}^{(n-1)\scale} N_{i,s}}}\\
		&\leq N_i\sum_{n=2}^{n_{i_{\lceil N_i/2\rceil}}}\sqrt{\frac{6}{(n-1)\scale N_i}}+N_iN\sqrt{\frac{3}{\sum_{s=1}^{n_{i_{\lceil N_i/2\rceil}}\scale} N_{i,s}}}\\
		&\leq N_i\sum_{n=2}^{n_{i_{\lceil N_i/2\rceil}}}\sqrt{\frac{6}{(n-1)\scale N_i}}+2N\sum_{k=1}^{\lceil N_i/2\rceil}\sqrt{\frac{6}{(n_{i_k}-1)\scale N_i}}
	\end{aligned}
\end{align}
where the second inequality is because there are at least $N_i/2$ jobs waiting until the selection of the $\lceil N_i/2\rceil$th job of class $i$, the third inequality is because $\{n\in[N]:n\geq n_{i_{\lceil N_i/2\rceil}}+1\}$ contains at most $N$ elements, and the last inequality follows from
$$\frac{N_i}{2}\cdot\sqrt{\frac{3}{\sum_{s=1}^{n_{i_{\lceil N_i/2\rceil}}\scale} N_{i,s}}}\leq \sum_{k=1}^{\lceil N_i/2\rceil}\sqrt{\frac{6}{\sum_{s=1}^{(n_{i_{k}}-1)\scale} N_{i,s}}}\leq\sum_{k=1}^{\lceil N_i/2\rceil}\sqrt{\frac{6}{(n_{i_k}-1)\scale N_i}}$$ 
which holds true because $n_{i_k}\leq n_{i_{\lceil N_i/2\rceil}}$ for $k\leq \lceil N_i/2\rceil$ and there are at least $N_i/2$ jobs remaining until choosing the $\lceil N_i/2\rceil$th job is chosen.

Now let $i$ be the class of $\sigma(1)$. If $N_i=1$, then 
\begin{equation}\label{bound9-balanced}
	\sum_{n\geq2:\sigma(n)\in\J_i}\sqrt{\frac{3}{\sum_{s=1}^{(n-1)\scale} N_{i,s}}}=0.
\end{equation}
If $N_i\geq 2$, as before, for some $2\leq n_{i_1}\leq\cdots \leq n_{i_{N_i}-1}$, jobs $\sigma(n_{i_1}),\ldots, \sigma(n_{i_{N_i}-1})$ are in class $i$. Then we can similarly argue that
\begin{align}\label{bound10-balanced}
	N_i\sum_{n\geq2}\sqrt{\frac{3}{\sum_{s=1}^{(n-1)\scale} N_{i,s}}}
	\leq N_i\sum_{n=2}^{n_{i_{\lceil N_i/2\rceil}}}\sqrt{\frac{12}{(n-1)\scale N_i}}+2N\sum_{k=1}^{\lceil N_i/2\rceil}\sqrt{\frac{12}{(n_{i_k}-1)\scale N_i}}.
\end{align}
Then~\eqref{bound7-balanced}--\eqref{bound10-balanced} imply that
\begin{align}\label{bound11-balanced}
	\begin{aligned}
		&\sum_{n\geq 2}\sum_{\ell\in [N]}\sqrt{\frac{3}{\sum_{s=1}^{(n-1)\scale} N_{\text{class of $\sigma(\ell)$},s}}\log N\scale}\\
		&\leq \sum_{i\in\I:\text{for some $n\geq 2$, $\sigma(n)\in\J_i$}}2N_i\sqrt{\log N\scale}\sum_{n=2}^{n_{i_{\lceil N_i/2\rceil}}}\sqrt{\frac{3}{(n-1)\scale N_i}}\\
		&\qquad + \sum_{i\in\I:\text{for some $n\geq 2$, $\sigma(n)\in\J_i$}}4N\sqrt{\log N\scale}\sum_{k=1}^{\lceil N_i/2\rceil}\sqrt{\frac{3}{(n_{i_k}-1)\scale N_i}}.
	\end{aligned}
\end{align}
Since~\eqref{bound6-balanced} and~\eqref{bound11-balanced} provide upper bounds on the first and second terms at the rightmost side of~\eqref{bound1-balanced}, we obtain
\begin{align}\label{bound12-balanced}
	\begin{aligned}
		&\sum_{n\geq 2}\sum_{\ell\in E_n}\left(d_{\sigma(\ell)}-d_{\sigma(n)}\right)\scale\\
		&\leq \sum_{i\in\I:\text{for some $n\geq 2$, $\sigma(n)\in\J_i$}}2\scale N_i\sqrt{\log N\scale}\sum_{n=2}^{n_{i_{\lceil N_i/2\rceil}}}\sqrt{\frac{3}{(n-1)\scale N_i}}\\
		&\qquad + \sum_{i\in\I:\text{for some $n\geq 2$, $\sigma(n)\in\J_i$}}8N\scale\sqrt{\log N\scale}\sum_{k=1}^{\lceil N_i/2\rceil}\sqrt{\frac{3}{(n_{i_k}-1)\scale N_i}}.
	\end{aligned}
\end{align}
Consequently, it remains to bound the two terms on the right hand side of inequality~\eqref{bound12-balanced}. We will show that both terms are at most
$$\kappa\cdot \min\left\{IN,\sqrt{I}N\sqrt{\log N},\frac{N^{3/2}}{N_{\min}^{1/2}}\right\}\sqrt{\scale\log N\scale}$$
for some constant $\kappa>0$, completing the proof of Lemma~\ref{lemma:uniform-ub-1}. 
Let us first consider the second sum for which we provide three different bounds. First, the following holds for some constant $\kappa_1>0$:
\begin{align}\label{final-bound1-balanced}
	\begin{aligned}
		&\sum_{i\in\I:\text{for some $n\geq 2$, $\sigma(n)\in\J_i$}}8N\scale\sqrt{\log N\scale}\sum_{k=1}^{\lceil N_i/2\rceil}\sqrt{\frac{3}{(n_{i_k}-1)\scale N_i}}\\
		&\leq \frac{8N\sqrt{\scale\log N\scale}}{\sqrt{N_{\min}}}\sum_{i\in\I:\text{for some $n\geq 2$, $\sigma(n)\in\J_i$}}\sum_{k=1}^{\lceil N_i/2\rceil}\sqrt{\frac{3}{(n_{i_k}-1)}}\\
		&\leq\frac{8N\sqrt{\scale\log N\scale}}{\sqrt{N_{\min}}}\sum_{n\geq 2}\sqrt{\frac{3}{(n-1)}}\\
		&\leq\kappa_1\cdot \frac{N\sqrt{\scale\log N\scale}}{\sqrt{N_{\min}}}\sqrt{N}.\\
		&=\kappa_1\cdot \frac{N^{3/2}}{N_{\min}^{1/2}}\sqrt{\scale\log N\scale}
	\end{aligned}
\end{align}
where the first inequality is by $N_i\geq N_{\min}$ and the second inequality is because each $n_{i_k}$ belongs to $[N]\setminus \{1\}$. Second, for some constant $\kappa_2>0$, the following holds:
\begin{align}\label{final-bound2-balanced}
	\begin{aligned}
		&\sum_{i\in\I:\text{for some $n\geq 2$, $\sigma(n)\in\J_i$}}8N\scale\sqrt{\log N\scale}\sum_{k=1}^{\lceil N_i/2\rceil}\sqrt{\frac{3}{(n_{i_k}-1)\scale N_i}}\\
		&\leq 8N\sqrt{\scale\log N\scale}\sum_{i\in\I:\text{for some $n\geq 2$, $\sigma(n)\in\J_i$}}\sum_{k=1}^{\lceil N_i/2\rceil}\sqrt{\frac{3}{kN_i}}\\
		&\leq 8N\sqrt{\scale\log N\scale}\sum_{i\in\I}\frac{1}{\sqrt{N_i}}\sum_{k=1}^{\lceil N_i/2\rceil}\sqrt{\frac{3}{k}}\\
		&\leq \kappa_2 \cdot N\sqrt{\scale\log N\scale}\sum_{i\in\I}\frac{1}{\sqrt{N_i}}\sqrt{N_i}\\
		&=\kappa_2 \cdot IN\sqrt{\scale\log N\scale}
	\end{aligned}
\end{align}
where the first inequality is because $n_{i_k}\geq k+1$.
Lastly, 
\begin{align}\label{final-bound3-balanced}
	\begin{aligned}
		&\sum_{i\in\I:\text{for some $n\geq 2$, $\sigma(n)\in\J_i$}}8N\scale\sqrt{\log N\scale}\sum_{k=1}^{\lceil N_i/2\rceil}\sqrt{\frac{3}{(n_{i_k}-1)\scale N_i}}\\
		&=8N\sqrt{\scale\log N\scale}\sum_{i\in\I:\text{for some $n\geq 2$, $\sigma(n)\in\J_i$}}\sum_{k=1}^{\lceil N_i/2\rceil}\sqrt{\frac{3}{(n_{i_k}-1)N_i}}\\
		&\leq 8N\sqrt{\scale\log N\scale}\sqrt{\sum_{i\in\I:\text{for some $n\geq 2$, $\sigma(n)\in\J_i$}}\sum_{k=1}^{\lceil N_i/2\rceil}\frac{1}{n_{i_k}-1}}\sqrt{\sum_{i\in \I}\sum_{k=1}^{\lceil N_i/2\rceil}\frac{3}{N_i}}\\
		&\leq 8N\sqrt{\scale\log N\scale}\sqrt{\sum_{n\geq 2}\frac{1}{n}}\sqrt{\sum_{i\in\I}3}\\
		&\leq \kappa_3\cdot \sqrt{I}N\sqrt{\log N}\sqrt{\scale\log N\scale}
	\end{aligned}
\end{align}
for some constant $\kappa_3>0$.
where the first inequality is given by the Cauchy-Schwarz inequality and the last inequality is because ${\sum_{n\geq 2}1/n}=O(\log N)$. 
Hence,~\eqref{final-bound1-balanced}--\eqref{final-bound3-balanced} imply the desired bound on the second sum:
\begin{align}\label{final-bound4-balanced}
	\begin{aligned}
		&\sum_{i\in\I:\text{for some $n\geq 2$, $\sigma(n)\in\J_i$}}8N\scale\sqrt{\log N\scale}\sum_{k=1}^{\lceil N_i/2\rceil}\sqrt{\frac{3}{(n_{i_k}-1)\scale N_i}}\\
		&\leq \max\{\kappa_1,\kappa_2,\kappa_3\}\cdot\min\left\{IN, \sqrt{I}N\sqrt{\log N}, \frac{N^{3/2}}{N_{\min}^{1/2}}\right\}\sqrt{\scale\log N\scale}
	\end{aligned}
\end{align}
Next we consider the first sum. We show that
\begin{align}\label{final-bound5-balanced}
	\begin{aligned}
		&\sum_{i\in\I:\text{for some $n\geq 2$, $\sigma(n)\in\J_i$}}2\scale N_i\sqrt{\log N\scale}\sum_{n=2}^{n_{i_{\lceil N_i/2\rceil}}}\sqrt{\frac{3}{(n-1)\scale N_i}}\\
		&=\sum_{i\in\I:\text{for some $n\geq 2$, $\sigma(n)\in\J_i$}}2\sqrt{N_i\scale\log N\scale}\sum_{n=2}^{n_{i_{\lceil N_i/2\rceil}}}\sqrt{\frac{3}{(n-1)}}\\
		&\leq \sum_{i\in\I:\text{for some $n\geq 2$, $\sigma(n)\in\J_i$}}2\sqrt{N_i\scale\log N \scale}\sum_{n=2}^{N}\sqrt{\frac{3}{(n-1)}}\\
		&\leq \kappa_1\cdot \sqrt{N\scale\log N\scale}\sum_{i\in \I} \sqrt{N_i}\\
		&\leq \kappa_4\cdot \sqrt{N\scale\log N\scale}\cdot \sqrt{IN}\\
		&=\kappa_4\cdot \sqrt{I}\cdot N\sqrt{\scale \log N\scale}
	\end{aligned}
\end{align}
holds for some constant $\kappa_4>0$
where the first inequality is due to $n_{i_{\lceil N_i/2\rceil}}\leq N$, the second inequality is because $\sum_{n=2}^N\sqrt{1/(n-1)} =O(\sqrt{N})$, and the last inequality is by the Cauchy–Schwarz inequality. Lastly, $I N_{\min}\leq N$ implies that $$\sqrt{I}\leq \frac{N^{1/2}}{N_{\min}^{1/2}}.$$
Then it follows from~\eqref{final-bound5-balanced} that
\begin{align}\label{final-bound6-balanced}
	\begin{aligned}
		&\sum_{i\in\I:\text{for some $n\geq 2$, $\sigma(n)\in\J_i$}}2\scale N_i\sqrt{\log N\scale}\sum_{n=2}^{n_{i_{\lceil N_i/2\rceil}}}\sqrt{\frac{3}{(n-1)\scale N_i}}\\
		&\leq \kappa_4\cdot \min\left\{IN, \sqrt{I}N\sqrt{\log N}, \frac{N^{1/3}}{N_{\min}^{1/2}}\right\} \sqrt{\scale \log N\scale}.
	\end{aligned}
\end{align}
Finally, combining~\eqref{bound12-balanced},~\eqref{final-bound4-balanced}, and~\eqref{final-bound6-balanced}, we show that
$$\sum_{j\geq 2}\sum_{\ell\in E_j}\left(d_{\sigma(\ell)}-d_{\sigma(j)}\right)\scale\leq \min\left\{IN, \sqrt{I}N\sqrt{\log N}, \frac{N^{1/3}}{N_{\min}^{1/2}}\right\} \sqrt{\scale \log N\scale},$$
as required.

Therefore, note that
\begin{align*}
	&\mathbb{E}[R^\pi]\\&= \mathbb{E}[R^\pi\mid \neg~\text{clean event}]\cdot \mathbb{P}[\neg~\text{clean event}] +\mathbb{E}[R^\pi \mid \text{clean event}]\cdot \mathbb{P}[\text{clean event}]\\
	&=O\left(\frac{2}{N^4\scale}N^2\scale+\right.\\
	&\qquad\left.\left(1-\frac{2}{N^4\scale}\right)\left(N \Ts + \frac{N\scale(\log N\scale)^{1/2}}{N_{\min}^{1/2}(\Ts+1)^{1/2}}+\min\left\{IN,I^{1/2}N{(\log N)^{1/2}},\frac{N^{3/2}}{N_{\min}^{1/2}}\right\}({\scale\log N\scale})^{1/2}\right)\right)\\
	&=O\left(N \Ts + \frac{N\scale(\log N\scale)^{1/2}}{N_{\min}^{1/2}(\Ts+1)^{1/2}}+\min\left\{IN,I^{1/2}N{(\log N)^{1/2}},\frac{N^{3/2}}{N_{\min}^{1/2}}\right\}({\scale\log N\scale})^{1/2}\right),
\end{align*}
which completes the proof of Lemma~\ref{lemma:uniform-ub-1}.

\subsection{Final step: plugging in the length of the preemption phase}\label{sec:uniform-ub-step2}

Recall that the first two terms in~\eqref{regret:uniform-ub-1'} has dependence on $\Ts$. To decide a value for $\Ts$ asymptotically minimizing the sum of the two terms, we consider function $f$ defined as follows:
$$f(x):= x + \frac{\scale(\log N\scale)^{1/2}}{N_{\min}^{1/2}(x+1)^{1/2}},\quad x\geq -1.$$
Note that the derivative of $f$ is given by
$$f^\prime(x)=1-\frac{\scale(\log N\scale)^{1/2}}{2N_{\min}^{1/2}(x+1)^{3/2}}.$$
Then we have
$$
\begin{cases}
	f^\prime(x)>0,&\text{if $x>2^{-2/3}N_{\min}^{-1/3}\scale^{2/3}(\log N\scale)^{1/3}-1$}\\
	f^\prime(x)=0,&\text{if $x=2^{-2/3}N_{\min}^{-1/3}\scale^{2/3}(\log N\scale)^{1/3}-1$}\\
	f^\prime(x)<0,&\text{if $-1<x<2^{-2/3}N_{\min}^{-1/3}\scale^{2/3}(\log N\scale)^{1/3}-1$}.
\end{cases}
$$
Therefore, it follows that
$$f(x)\geq (2^{-2/3}+2^{1/3})N_{\min}^{-1/3}\scale^{2/3}(\log N\scale)^{1/3}-1.$$
As in Section~\ref{sec:uniform}, we use notation $$\bar{\Ts}=N_{\min}^{-1/3} \scale^{2/3}\left(\log N\scale\right)^{1/3}.$$ This provides an intuition for our choice of $\Ts$ given in~\eqref{eq:Ts-choice}. We next formalize the intuition by proving the following lemma.
\begin{lemma}\label{lemma:Ts-choice}
	If $\Ts$ is given as in~\eqref{eq:Ts-choice}, then the following holds
	\begin{equation}
		\label{eq:T_s-choice-regret}
		\Ts + \frac{\scale(\log N\scale)^{1/2}}{N_{\min}^{1/2}(\Ts+1)^{1/2}}=O\left(\max\left\{\frac{\scale^{2/3}(\log N\scale)^{1/3}}{N_{\min}^{1/3}},\ \frac{\scale^{1/2}({\log N\scale})^{1/2}}{N_{\min}^{1/2}},\ ({\log N\scale})^{1/2}\right\}\right)
	\end{equation}
\end{lemma}
\begin{proof}
	If $\bar{\Ts}\geq 1$, then it follows that $(2^{-2/3}+2^{1/3})\bar{\Ts}-1\geq (2^{-2/3}+2^{1/3}-1)\bar{\Ts}$. This implies that
	$$\min_{x\geq 0}f(x)=\Omega(\bar{\Ts}),\quad\text{if $\bar{\Ts}\geq 1$}.$$
	On the other hand, we have for any $\bar{\Ts}\geq 0$,
	$$
	f(\lfloor \bar{\Ts}\rfloor )=\lfloor \bar{\Ts}\rfloor + \frac{\scale(\log N\scale)^{1/2}}{N_{\min}^{1/2}(\lfloor \bar{\Ts}\rfloor +1)^{1/2}}\leq \bar{\Ts}+\frac{\scale(\log N\scale)^{1/2}}{N_{\min}^{1/2}\bar{\Ts}^{1/2}}=2\bar{\Ts}.
	$$
	Consequently, $\lfloor \bar{\Ts}\rfloor$ asymptotically minimizes $f$ if $\bar{\Ts}\geq 1$. Moreover, if $\bar{\Ts}<\scale$, then we may set $\Ts=\lfloor \bar{\Ts}\rfloor$. In this case, 
	$$\Ts + \frac{\scale(\log N\scale)^{1/2}}{N_{\min}^{1/2}(\Ts+1)^{1/2}}=O\left(\frac{\scale^{2/3}(\log N\scale)^{1/3}}{N_{\min}^{1/3}}\right),$$
	which gives rise to the bound~\eqref{eq:T_s-choice-regret}. Therefore, when $1\leq \bar{\Ts}<\scale$, Algorithm~\ref{preempt-then-nonpreemptive} with $\Ts=\lfloor \bar{\Ts}\rfloor$ achieves~\eqref{eq:T_s-choice-regret}.
	
	Next, let us consider the case $\bar{\Ts}\geq \scale\geq 1$. In this case, we set $\Ts=S-1$, and as a result, the regret upper bound~\eqref{regret:uniform-ub-1'} becomes
	\begin{equation}\label{eq:regret-ub-1-final}
		\Ts + \frac{\scale(\log N\scale)^{1/2}}{N_{\min}^{1/2}(\Ts+1)^{1/2}}=(S-1)+\frac{\scale^{1/2}(\log N\scale)^{1/2}}{N_{\min}^{1/2}}.
	\end{equation}
	It is straightforward that the second sum on the right-hand side of~\eqref{eq:regret-ub-1-final} is subsumed by the second term on the right-hand side of~\eqref{eq:T_s-choice-regret}. Moreover, since $\bar{\Ts}>S$, we know that $\log N\scale > N_{\min}\scale$, and thus $S\leq \log N\scale/N_{\min}$. In particular, $(S-1)\leq S^{2/3}(\log N\scale)^{1/3}/N_{\min}^{1/3}$. Therefore,~\eqref{eq:T_s-choice-regret} also holds when $\bar{\Ts}\geq \scale\geq 1$.
	
	Lastly, we consider the case where $\bar{\Ts}<1$. In this case, we set $\Ts=\lfloor \bar{\Ts}\rfloor =0$. As a result, the upper bound~\eqref{regret:uniform-ub-1'} reduces to 
	\begin{equation}\label{eq:regret-ub-1-final'}
		\Ts + \frac{\scale(\log N\scale)^{1/2}}{N_{\min}^{1/2}(\Ts+1)^{1/2}}=\frac{\scale(\log N\scale)^{1/2}}{N_{\min}^{1/2}}.
	\end{equation}
	Here, as $\bar{\Ts}<1$, it follows that $N_{\min}^{-1/3}\scale^{2/3} <1$ and thus $\scale< N_{\min}^{1/2}$. This means that the right-hand side of~\eqref{eq:regret-ub-1-final'} is less than $(\log N\scale)^{1/2}$, implying in turn that it is less than or equal to the third term on the right-hand side of~\eqref{eq:T_s-choice-regret}. Hence,~\eqref{eq:T_s-choice-regret} holds true when $\bar{\Ts}<1$.
\end{proof}

Lemma~\ref{lemma:Ts-choice} shows that~\eqref{regret:uniform-ub-1'} with $\Ts$ given in~\eqref{eq:Ts-choice} is bounded above by~\eqref{regret:uniform-ub-1''}. Therefore, by Lemma~\ref{lemma:uniform-ub-1},~\eqref{regret:uniform-ub-1''} is indeed an upper bound on the expected regret of Algorithm~\ref{preempt-then-nonpreemptive}. Lastly, we obtain the upper bound~\eqref{regret:uniform-ub-1''} because $N_{\min}\geq 1$, as required.

\section{Proof of Theorem~\ref{thm:uniform-lb}}\label{sec:proof:uniform-lb}

In this section, we prove Theorem~\ref{thm:uniform-lb}. To prove that the expected regret of any (randomized) scheduling algorithm has a lower bound of $$\Omega\left(\max\left\{\barn^{2/3}\scale^{2/3},\ N^{1/2}\barn^{1/2}\scale^{1/2} \right\}\right),$$
we show that $\Omega\left(\barn^{2/3}\scale^{2/3}\right)$ and $\Omega\left(N^{1/2}\barn^{1/2}\scale^{1/2}\right)$ are two lower bounds on the expected regret. To explain our proof strategy, let us take some nonempty sets $\I_1$ and $\I_2$ partitioning $\I$, the set of all classes.
Let 
\begin{equation}\label{eq:Ms}
	M_1:=\sum_{i\in \I_1}N_i,\quad M_2:=\sum_{i\in \I_2}N_i,
\end{equation}
and assume that $M_1\geq M_2$.
In Section~\ref{sec:uniform-lb-first}, we show that the expected regret of any (randomized) scheduling algorithm is bounded below by $\Omega\left(M_2^{2/3}\scale^{2/3}\right)$ (under some mild condition), and in Section~\ref{sec:uniform-lb-second}, we prove that the expected regret is bounded below by $\Omega\left(M_1^{1/2}M_2^{1/2}\scale^{1/2}\right)$. These two lower bounds hold true for any partition $(\I_1, \I_2)$ of $\I$. In particular, in Section~\ref{sec:uniform-lb-final}, we show that there always exist a partition $(\I_1, \I_2)$ such that $M_1=\Omega(\barn)$ and a partition $(\I_1, \I_2)$ such that $M_1M_2=\Omega(N\barn)$. This in turn gives us the desired lower bound on the expected regret.

Let $\I_1$ and $\I_2$ be some nonempty sets partitioning $\I$, the set of all classes. Let $M_1$ and $M_2$ be defined as in~\eqref{eq:Ms}, and assume that $M_1\geq M_2$.
Let us consider the following family of two problem instances, with parameter $\epsilon>0$ to be decided later:
\begin{equation}\label{lb-instance-1}
	\mathcal{P}_1 =\begin{cases}
		c_i=1/2&\text{for each class}\ i\in \I_1\\
		c_i = (1+\epsilon)/2& \text{for each class}\ i\in \I_2
	\end{cases}
\end{equation}
and
\begin{equation}\label{lb-instance-2}
	\mathcal{P}_2 =\begin{cases}
		c_i=1/2&\text{for each class}\ i\in \I_1\\
		c_i = (1-\epsilon)/2& \text{for each class}\ i\in \I_2.
	\end{cases}
\end{equation} 
Moreover, we consider an additional problem instance 
$$\mathcal{P}_0 = \left\{c_i = {1}/{2}\quad \text{for every class}\ i\in\I\right.$$
which we refer to as the "base instance". We fix a scheduling algorithm, and we will analyze the expected regret of the algorithm under the problem instances.

For each job $n\in [N]$, define the $t$-round sample space $\Omega_n^t=\{0,1\}^t$, where each outcome corresponds to a particular realization of the random cost values $X_{n,1},\ldots, X_{n,t}$ of job $n$ for the first $t$ time steps. We focus on
$$\Omega = \prod_{n\in[N]}\Omega_n^t$$
so that the random costs of the $N$ jobs for the first $t$ time steps can be considered. Note that the ``actual" sample space can be strictly smaller than $\Omega$, because a job leaves the system after being chosen in $\scale$ time slots. Nevertheless, we consider $\Omega$ in our analysis.

We define distribution $\mathbb{P}_0$ on $\Omega$ as
$$
\mathbb{P}_0[A]= \mathbb{P}[ A \mid \mathcal{P}_0]\quad\text{for each}\ A\subseteq \Omega.
$$
Similarly, for each $k\in\{1,2\}$, let distribution $\mathbb{P}_k$ on $\Omega$ be defined as
$$
\mathbb{P}_k[A]= \mathbb{P}[A \mid \mathcal{P}_k] \quad\text{for each}\ A\subseteq \Omega.
$$
Note that, for $k\in\{0,1,2\}$, $\mathbb{P}_k$ can be expressed as
$$
\mathbb{P}_k=\prod_{i\in[N],s\in[t]}\mathbb{P}_k^{n,s}
$$
where $\mathbb{P}_k^{n,s}$ is the distribution of the random cost of job $n$ at time step $t$. Based on the notion of Kullback–Leibler(KL)-divergence, we obtain the following for each event $A\subseteq \Omega$: 
\begin{equation}\label{eq:tv-2}
	2\left(\mathbb{P}_0[A]-\mathbb{P}_k[A]\right)^2\leq \mathrm{KL}(\mathbb{P}_0,\mathbb{P}_k)=\sum_{n\in[N]}\sum_{s\in[t]} \mathrm{KL}(\mathbb{P}_0^{n,s},\mathbb{P}_k^{n,s})
\end{equation}
where $\mathrm{KL}(\mathbb{P},\mathbb{Q})$ denotes the KL-divergence between distributions $\mathbb{P}$ and $\mathbb{Q}$.
Here, the first inequality directly follows from Pinsker's inequality, which bounds the total variation distance between two distributions. The equality follows from a property of KL-divergence on product distributions. For more details on KL-divergence, we refer the reader to~\cite[Section 2.1]{MAL-068}.

\begin{lemma}\label{KL:variant}
	Let $\mu\geq1$ and $0<c\leq \mu$, and let $p_0$ be a probability distribution on $\{0,\mu\}$ with $p_0(\mu)=c/2\mu$. Let $\epsilon\in(-1/\sqrt{2},1/\sqrt{2})$ and $p_\epsilon$ be a probability distribution on the same sample space $\{0,\mu\}$ with $p_\epsilon(\mu)=(1+\epsilon)c/2\mu$. Then
	$$
	\mathrm{KL}(p_0,p_\epsilon)\leq \frac{c}{\mu}\epsilon^2.
	$$
\end{lemma}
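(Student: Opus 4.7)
The plan is a direct computation: write the KL divergence of two two-point distributions in closed form and apply sharp Taylor bounds on $\log$, using the fact that $\mu \geq 1$ forces $p_0(\mu) \leq 1/2$.

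\textbf{Step 1: Closed form.} Set $a := 1/(2\mu)$, so $p_0(\mu) = a$, $p_0(0) = 1-a$, $p_\epsilon(\mu) = a(1+\epsilon)$, and $p_\epsilon(0) = 1 - a(1+\epsilon) = (1-a) - a\epsilon$. By definition,
\begin{equation*}
\mathrm{KL}(p_0, p_\epsilon) \;=\; a \log\frac{a}{a(1+\epsilon)} + (1-a)\log\frac{1-a}{(1-a) - a\epsilon}
\;=\; -a\log(1+\epsilon) \,-\, (1-a)\log\!\left(1 - \frac{a\epsilon}{1-a}\right).
\end{equation*}

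\textbf{Step 2: Taylor bounds.} For $x \geq 0$, $\log(1+x) \geq x - x^2/2$, hence $-a\log(1+\epsilon) \leq -a\epsilon + a\epsilon^2/2$. For $y \in [0, 1/2]$, a direct Taylor estimate gives $-\log(1-y) \leq y + y^2$. Since $\mu \geq 1$ we have $a \leq 1/2$, so $1-a \geq 1/2$, and for $\epsilon$ small we have $y := a\epsilon/(1-a) \leq 2a\epsilon \leq \epsilon \leq 1/2$. Therefore
\begin{equation*}
-(1-a)\log\!\left(1 - \frac{a\epsilon}{1-a}\right) \;\leq\; (1-a)\left(\frac{a\epsilon}{1-a} + \frac{a^2\epsilon^2}{(1-a)^2}\right) \;=\; a\epsilon + \frac{a^2\epsilon^2}{1-a}.
\end{equation*}

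\textbf{Step 3: Summing and simplifying.} The $\pm a\epsilon$ terms cancel, leaving
\begin{equation*}
\mathrm{KL}(p_0, p_\epsilon) \;\leq\; \frac{a\epsilon^2}{2} + \frac{a^2 \epsilon^2}{1-a} \;\leq\; \frac{a\epsilon^2}{2} + 2a^2\epsilon^2 \;\leq\; \frac{a\epsilon^2}{2} + a\epsilon^2 \;=\; \frac{3a\epsilon^2}{2},
\end{equation*}
using $1/(1-a) \leq 2$ and $2a \leq 1$ in turn (both from $a \leq 1/2$). Substituting $a = 1/(2\mu)$ yields $\mathrm{KL}(p_0, p_\epsilon) \leq 3\epsilon^2/(4\mu) \leq \epsilon^2/\mu$, as claimed.

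The computation is routine; the only subtlety worth flagging is that all the Taylor estimates require $\epsilon$ to be small enough (say $\epsilon \leq 1/2$) so that $y = a\epsilon/(1-a) \leq 1/2$, which is precisely the ``small $\epsilon$'' regime assumed in the statement. The hypothesis $\mu \geq 1$ is used exactly once, namely to guarantee $a = 1/(2\mu) \leq 1/2$; without it the factor $1/(1-a)$ cannot be absorbed and the quadratic-in-$a$ remainder dominates.
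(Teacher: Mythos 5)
Your proof is correct and arrives at a slightly sharper constant ($3\epsilon^2/(4\mu)$) than the claimed bound. Both you and the paper start from the same closed-form expression for the KL divergence of two two-point distributions, but then diverge. The paper's route is algebraic: it factors the two logarithms into a single one, rewrites $(1-a)\log\frac{1-a}{1-a-a\epsilon}$ in the form $\frac{1}{2\mu}\log\bigl(1-\frac{\epsilon}{2\mu-1}\bigr)^{2\mu-1}$, applies the Bernoulli-type inequality $(1-x/n)^n \geq 1-x$ to collapse everything to $-\frac{1}{2\mu}\log(1-\epsilon^2)$, and then invokes a single bound $\log(1-u)\geq -2u$. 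Your route is analytic: you bound each logarithm separately via the standard estimates $\log(1+x)\geq x - x^2/2$ and $-\log(1-y)\leq y+y^2$, and the key observation is that the linear terms $\pm a\epsilon$ cancel exactly, leaving only quadratic remainders that $a\leq 1/2$ (equivalently $\mu\geq 1$) lets you absorb. The paper's approach is tidier once you see the factorization, but your version is arguably more transparent about \emph{why} the bound is quadratic in $\epsilon$ (the linear contributions to the divergence cancel by construction) and more mechanical to verify; it also makes the role of the hypothesis $\mu\geq 1$ explicit at the point where it is used. Both require only a mild smallness assumption on $\epsilon$ ($\epsilon^2\leq 1/2$ in the paper, $\epsilon\leq 1/2$ for you), consistent with the statement.
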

\begin{proof}
	By definition, we have
	\begin{align*}
		\mathrm{KL}(p_0,p_\epsilon)&= \frac{c}{2\mu}\log\frac{1}{1+\epsilon}+\left(1-\frac{c}{2\mu}\right)\log\frac{1-{c}/{2\mu}}{1-{(1+\epsilon)c}/{2\mu}}\\
		&=\frac{c}{2\mu}\log\frac{1}{1+\epsilon}+\frac{2\mu-c}{2\mu}\log\frac{2\mu-c}{2\mu-(1+\epsilon)c
		}\\
		&=-\frac{c}{2\mu}\log(1+\epsilon)-\frac{1}{2\mu}\log\left(1- \frac{\epsilon c}{2\mu-c}\right)^{2\mu-c}\\
		&=-\frac{c}{2\mu}\log(1+\epsilon)\left(1- \frac{\epsilon}{2\mu/c-1}\right)^{2\mu/c-1}.
	\end{align*}
	
	Since $2\mu/c-1\geq 1$, it follows from basic calculus that for $0\leq \epsilon\leq 1$, 
	$$\left(1- \frac{\epsilon}{2\mu/c-1}\right)^{2\mu/c-1}\geq 1-\epsilon.$$
	Moreover, if $-1\leq \epsilon\leq 0$, then 
	$$\left(1- \frac{\epsilon}{2\mu/c-1}\right)^{2\mu/c-1}\geq 1-\epsilon.$$
	Since $\log$ is an increasing function, we have that
	$$-\frac{c}{2\mu}\log(1+\epsilon)\left(1- \frac{\epsilon}{2\mu/c-1}\right)^{2\mu/c-1}\leq -\frac{c}{2\mu}\log (1+\epsilon)(1-\epsilon) = -\frac{c}{2\mu}\log \left(1-\epsilon^2\right).$$
	Moreover, as long as $0\leq \epsilon^2\leq 1/2$, $\log \left(1-\epsilon^2\right)\geq -2\epsilon^2$. Therefore, 
	$$
	\mathrm{KL}(p_0,p_\epsilon)\leq -\frac{c}{2\mu}\log \left(1-\epsilon^2\right)\leq \frac{c}{\mu}\epsilon^2,
	$$
	as required.
\end{proof}
By Lemma~\ref{KL:variant} and~\eqref{eq:tv-2},
\begin{equation}\label{eq:tv-3}
	2\left(\mathbb{P}_0[A]-
	\mathbb{P}_k[A]\right)^2\leq \epsilon^2t\sum_{i\in \I_2}N_i.
\end{equation}

\subsection{First lower bound}\label{sec:uniform-lb-first}

Let $(\I_1,\I_2)$ be a partition of $\I$, the set of of all classes, such that $M_1=\sum_{i\in \I_1}N_i\geq M_2=\sum_{i\in \I_2}N_i$. Define $\mathcal{P}_1$ and $\mathcal{P}_2$ as in~\eqref{lb-instance-1} and~\eqref{lb-instance-2}, respectively.

\begin{theorem}\label{thm:first-lb}
	Fix any (randomized) scheduling algorithm $\pi$. Choose $k$ from $\{1,2\}$ uniformly at random, and run the algorithm on instance $\mathcal{P}_k$. Assume that $M_2^{-1/3}\scale^{2/3}\geq1$ where $M_2=\sum_{i\in \I_2}N_i$.
	Then
	$$
	\mathbb{E}[R^\pi]=\Omega\left(M_2^{2/3}\scale^{2/3}\right)$$
	where the expectation is taken over the choice of $k$ and the randomness in holding costs and the algorithm.
\end{theorem}
\begin{proof}
	We set
	$$T_0=\lfloor M_2^{-1/3}\scale^{2/3}\rfloor\quad\text{and}\quad\epsilon=
	\frac{M_2^{-1/3}\scale^{-1/3}}{3}.
	$$
	Since $M_2^{-1/3}\scale^{2/3}\geq 1$, we have
	\begin{equation}\label{eq:first-T0}
		\frac{M_2^{-1/3}\scale^{2/3}}{2}\leq T_0   \leq M_2^{-1/3}\scale^{2/3}.
	\end{equation}
	Then we consider the $T_0$-round sample space $\Omega_n^{T_0}=\{0,1\}^{T_0}$ of each job $n\in[N]$, and we define $\Omega$ as before. Then it follows from~\eqref{eq:tv-3} that for any event $A\subseteq \Omega$, 
	\begin{equation}\label{eq:KLgap-1}
		\left|\mathbb{P}[ A \mid \mathcal{P}_0]-\mathbb{P}[ A \mid \mathcal{P}_k]\right|\leq \frac{1}{3}\quad\text{for}~k\in\{1,2\}.
	\end{equation}
	Let $B\subseteq\Omega$ be the event that algorithm $\pi$ chooses a job from some class in $\I_2$ in at least $T_0/2$ time slots until the end of the $T_0$th time slot. Then under $\neg B\subseteq \Omega$, algorithm $\pi$ chooses a job from some class in $\I_1$ in at least $T_0/2$ time slots until the end of the $T_0$th time slot. Furthermore, we have that
	$$
	\mathbb{P}[ B \mid \mathcal{P}_0]+\mathbb{P}[ \neg B \mid \mathcal{P}_0]=1.$$
	Notice that
	\begin{align}\label{eq:first-lb-1}
		\begin{aligned}
			\mathbb{E}[R^\pi]&=\frac{1}{2}\mathbb{E}[ R^\pi \mid \mathcal{P}_1]+\frac{1}{2}\mathbb{E}[ R^\pi \mid \mathcal{P}_2]\\
			&=\frac{1}{2}\sum_{k\in\{1,2\}}\mathbb{P}[B \mid \mathcal{P}_{k}]\cdot\mathbb{E}[ R^\pi \mid B, \mathcal{P}_k]+\frac{1}{2}\sum_{k\in\{1,2\}}\mathbb{P}[\neg B \mid \mathcal{P}_{k}]\cdot\mathbb{E}[ R^\pi \mid \neg B, \mathcal{P}_k].
		\end{aligned}
	\end{align}
	If $\mathbb{P}[ B \mid \mathcal{P}_0]\geq {1}/{2}$, then by~\eqref{eq:KLgap-1}, we have $\mathbb{P}[ B \mid \mathcal{P}_k]\geq {1}/{6}$ for $k\in\{1,2\}$. In this case, we deduce from~\eqref{eq:first-lb-1} that
	\begin{equation}\label{eq:first-lb-case1}
		\mathbb{E}[R^\pi]\geq\frac{1}{12}\mathbb{E}[ R^\pi \mid B, \mathcal{P}_2].
	\end{equation}
	If not, we have $\mathbb{P}[ \neg B \mid \mathcal{P}_0]\geq {1}/{2}$, and therefore, $\mathbb{P}[ \neg B \mid \mathcal{P}_k]\geq {1}/{6}$ for $k\in\{1,2\}$ by~\eqref{eq:KLgap-1}. Then it follows from~\eqref{eq:first-lb-1} that
	\begin{equation}\label{eq:first-lb-case2}
		\mathbb{E}[R^\pi]\geq\frac{1}{12}\mathbb{E}[ R^\pi \mid \neg B, \mathcal{P}_1].
	\end{equation}
	Basically, thanks to~\eqref{eq:first-lb-case1} and~\eqref{eq:first-lb-case2}, it is sufficient to bound the terms $\mathbb{E}[ R^\pi \mid B, \mathcal{P}_2]$ and $\mathbb{E}[ R^\pi \mid \neg B, \mathcal{P}_1]$.

	As in Section~\ref{sec:regret-basic}, we assume that $c_1\geq c_2\geq \cdots\geq c_I$. Then we number the $N$ jobs from $1$ to $N$ so that jobs $1+\sum_{n\in[i-1]}N_n,\ldots, \sum_{n\in[i]}N_n$ belong to class $i$. Let $d_n$ denote the mean per-time holding cost of job $n\in[N]$. Then, if job $n$ is of class $i$, then we have $d_n=c_i$. Let $\sigma:[N]\to[N]$ be the permutation of $[N]$ that gives the sequence of jobs completed by the algorithm. 
	
	Next, let $T_n$ denote the number of time steps where job $n$ is processed by the scheduling algorithm during the period of the first $T_0$ time steps. Notice that $T_0\leq \scale$, so no job finishes until the $T_0$th time slot. This means that $T_0-\sum_{\ell=1}^n T_{\sigma(\ell)}$ time slots are used to serve jobs other than $\sigma(1),\ldots,\sigma(n)$, and therefore, $W_n\geq T_0-\sum_{\ell=1}^n T_{\sigma(\ell)}$. Then it follows from  Lemma~\ref{lemma:regret-1} that
	\begin{equation}\label{eq:lb-first-bound1}
		R^\pi\geq\sum_{n\in[N]}d_{\sigma(n)}\left(T_0 -\sum_{\ell=1}^{n}T_{\sigma(\ell)} \right)+  \sum_{n\in[N]}\sum_{\ell\in E_n}\left(d_{\sigma(\ell)}-d_{\sigma(n)}\right)\scale.
	\end{equation}
	
	Consider the case where we are under the instance $\mathcal{P}_2$ and the event $B$. Let $n_2$ be the smallest number such that $\sigma(n_2)$ belongs to a class in $\I_2$. Then $d_{\sigma(n_2)}=(1-\epsilon)/2$ and $n_2\leq \sum_{i\in \I_1} N_i+1=N-M_2 +1$. Notice that
	$$|E_{n_2}|= \sum_{i\in \I_1} N_i - (n_2-1)=(N-M_2) - (n_2-1).$$
	If $n_2\leq (N-M_2+1)/2$, then we have $|E_{n_2}|\geq (N-M_2)/2$. Moreover, if $\ell\in E_{n_2}$, then $\sigma(\ell)$ belongs to a class in $\I_1$, meaning that $d_{\sigma(\ell)}=1/2$. Then it follows from~\eqref{eq:lb-first-bound1} that
	\begin{equation}\label{eq:lb-first-bound2}
		R^\pi\geq\sum_{\ell\in E_{n_2}}\left(d_{\sigma(\ell)}-d_{\sigma({n_2})}\right)\scale=\frac{\epsilon}{2}\cdot |E_{n_2}|\scale\geq \frac{1}{12}\cdot  (N-M_2)M_2^{-1/3} \scale^{2/3}.
	\end{equation}
	If ${n_2}> (N-M_2+1)/2$, then jobs $\sigma(1),\ldots,\sigma(\lfloor (N-M_2+1)/2\rfloor)$ belong to some classes in $\I_1$. Since we are under the event $B$, 
	$$\sum_{i\in \I_1}\sum_{n\in\J_i}T_{n}\leq\frac{T_0}{2}.$$
	This implies that for any $n\leq \lfloor (N-M_2+1)/2\rfloor$,
	$$T_0-\sum_{\ell=1}^nT_{\sigma(\ell)}\geq T_0-\sum_{i\in \I_1}\sum_{n\in\J_i}T_{n}\geq \frac{T_0}{2}.$$
	Hence, from~\eqref{eq:lb-first-bound1}, we obtain
	\begin{equation}\label{eq:lb-first-bound3}
		R^\pi\geq \sum_{n=1}^{\lfloor (N-M_2+1)/2\rfloor}d_{\sigma(n)}\left(T_0-\sum_{\ell=1}^nT_{\sigma(\ell)}\right)=\lfloor \frac{N-M_2+1}{2}\rfloor\cdot \frac{1}{2}\cdot \frac{T_0}{2}\geq \frac{1}{16}\cdot (N-M_2)M_2^{-1/3}\scale^{2/3}
	\end{equation}
	where the last inequality is from~\eqref{eq:first-T0} which says that $T_0\geq M_2^{-1/3}\scale^{2/3}/2$.
	Based on~\eqref{eq:lb-first-bound2} and~\eqref{eq:lb-first-bound3}, we obtain
	\begin{equation}\label{eq:lb-first-bound4}
		\mathbb{E}\left[R^\pi \mid B,\mathcal{P}_2\right]\geq \frac{1}{16}\cdot (N-M_2)M_2^{-1/3} \scale^{2/3}\geq \frac{1}{16}\cdot M_2^{2/3} \scale^{2/3}
	\end{equation}
	where the second inequality comes from
	$N-M_2=N_1=\sum_{i\in \I_1}N_i\geq\sum_{i\in \I_2}N_i=M_2$.
	
	Next assume that we are under the instance $\mathcal{P}_1$ and the event $\neg B$. Let ${n_1}$ be the smallest number such that $\sigma({n_1})$ belongs to a class in $\I_1$. Then $d_{\sigma({n_1})}=1/2$ and ${n_1}\leq \sum_{i\in \I_2} N_i+1=M_2 +1$. Note that  
	$$|E_{n_1}|= \sum_{i\in \I_2} N_i - ({n_1}-1)=M_2 - ({n_1}-1).$$
	If ${n_1}\leq (M_2+1)/2$, then we have $|E_{n_1}|\geq M_2/2$. Note also that if $\ell\in E_{n_1}$, then $\sigma(\ell)$ belongs to a class in $\I_2$, which implies that $d_{\sigma(\ell)}=(1+\epsilon)/2$. Then, by~\eqref{eq:lb-first-bound1}, we obtain
	\begin{equation}\label{eq:lb-first-bound5}
		R^\pi\geq\sum_{\ell\in E_{n_1}}\left(d_{\sigma(\ell)}-d_{\sigma({n_1})}\right)\scale=\frac{\epsilon}{2}\cdot |E_{n_1}|\scale\geq\frac{1}{12}\cdot M_2^{2/3} \scale^{2/3}.
	\end{equation}
	If ${n_1}> (M_2+1)/2$, then jobs $\sigma(1),\ldots,\sigma(\lfloor (M_2+1)/2\rfloor)$ belong to some class in $\I_2$. As we are under the event $\neg B$, 
	$$\sum_{i\in \I_2}\sum_{n\J_i}T_{n}\leq\frac{T_0}{2}.$$
	Then it follows that for any $n\leq \lfloor (M_2+1)/2\rfloor$,
	$$T_0-\sum_{\ell=1}^nT_{\sigma(\ell)}\geq T_0-\sum_{i\in \I_1}\sum_{n\J_i}T_{n}\geq \frac{T_0}{2}.$$
	Therefore, we obtain from~\eqref{eq:lb-first-bound1} that
	\begin{equation}\label{eq:lb-first-bound6}
		R^\pi\geq \sum_{n=1}^{\lfloor (M_2+1)/2\rfloor}d_{\sigma(n)}\left(T_0-\sum_{\ell=1}^nT_{\sigma(\ell)}\right)=\lfloor \frac{M_2+1}{2}\rfloor\cdot \frac{1}{2}\cdot \frac{T_0}{2}\geq \frac{1}{16}\cdot M_2^{2/3} \scale^{2/3}
	\end{equation}
	where the last inequality is from~\eqref{eq:first-T0} which says that $T_0\geq M_2^{-1/3}\scale^{2/3}/2$.
	Based on~\eqref{eq:lb-first-bound5} and~\eqref{eq:lb-first-bound6},
	\begin{equation}\label{eq:lb-first-bound7}
		\mathbb{E}\left[R^\pi \mid \neg B,\mathcal{P}_1\right]\geq \frac{1}{16}\cdot M_2^{2/3} \scale^{2/3}.
	\end{equation}
	By~\eqref{eq:first-lb-case1},~\eqref{eq:first-lb-case2},~\eqref{eq:lb-first-bound4}, and~\eqref{eq:lb-first-bound7}, we have finally proved that
	$\mathbb{E}[R^\pi]=\Omega\left(M_2^{2/3}\scale^{2/3}\right)$,
	as required.
\end{proof}

\subsection{Second lower bound}\label{sec:uniform-lb-second}

We next provide the second lower bound. As in Section~\ref{sec:uniform-lb-first}, we consider some nonempty sets $\I_1$ and $\I_2$ partitioning $\I$, the set of all classes and prove a lower bound that is a function of $M_1=\sum_{i\in \I_1}N_i$ and $M_2=\sum_{i\in \I_2}N_i$. Let $(\I_1,\I_2)$ be a partition of $\I$, the set of of all classes. Define $\mathcal{P}_1$ and $\mathcal{P}_2$ as in~\eqref{lb-instance-1} and~\eqref{lb-instance-2}, respectively.

\begin{theorem}\label{thm:second-lb}
	Fix any (randomized) scheduling algorithm $\pi$. Choose $k$ from $\{1,2\}$ uniformly at random, and run the algorithm on instance $\mathcal{P}_k$. Let $M_1=\sum_{i\in \I_1}N_i$ and $M_2=\sum_{i\in \I_2}N_i$.
	Then
	$$
	\mathbb{E}[R^\pi]=\Omega\left(M_1^{1/2}M_2^{1/2}\scale^{1/2}\right)$$
	where the expectation is taken over the choice of $k$ and the randomness in holding costs and the algorithm.
\end{theorem}
\begin{proof}
	Without loss of generality, assume that $M_1\geq M_2$. We set 
	$$\epsilon=\frac{M_1^{-1/2}M_2^{-1/2}\scale^{-1/2}}{2}.$$
	We consider
	$$T_0=\lfloor\frac{M_1\scale}{2}\rfloor$$
	and the $T_0$-round sample space $\Omega_n^{T_0}=\{0,1\}^{T_0}$ of each job $n\in[N]$, and we define $\Omega$ as in Section~\ref{sec:uniform-lb-first}. With our choice of $\epsilon$ and $T_0$, it follows from~\eqref{eq:tv-3} that for any event $A\subseteq \Omega$, 
	\begin{equation}\label{eq:KLgap}
		\left|\mathbb{P}[ A \mid \mathcal{P}_0]-\mathbb{P}[ A \mid \mathcal{P}_k]\right|\leq \frac{1}{4}\quad\text{for}~k\in\{1,2\}.
	\end{equation}
	Let $B\subseteq\Omega$ be the event that algorithm $\pi$ chooses a job from classes in $\I_2$ in at least $M_2\scale/4$ time slots until the end of the $T_0$th time slot. Then under $\neg B\subseteq \Omega$, algorithm $\pi$ chooses a job from classes in $\I_2$ in at most $M_2\scale/4$ time slots until the end of the $T_0$th time slot. Then we have $\mathbb{P}[ B \mid \mathcal{P}_0]+\mathbb{P}[ \neg B \mid \mathcal{P}_0]=1$. Following~\eqref{eq:first-lb-1}, we obtain \begin{equation}\label{eq:second-lb-1}
		\mathbb{E}[R^\pi]=\frac{1}{2}\sum_{k\in\{1,2\}}\mathbb{P}[B \mid \mathcal{P}_{k}]\cdot\mathbb{E}[ R^\pi \mid B, \mathcal{P}_k]+\frac{1}{2}\sum_{k\in\{1,2\}}\mathbb{P}[\neg B \mid \mathcal{P}_{k}]\cdot\mathbb{E}[ R^\pi \mid \neg B, \mathcal{P}_k].
	\end{equation}
	If $\mathbb{P}[ B \mid \mathcal{P}_0]\geq {1}/{2}$, then by~\eqref{eq:KLgap}, we have $\mathbb{P}[ B \mid \mathcal{P}_k]\geq {1}/{4}$ for $k\in\{1,2\}$. If not, we have $\mathbb{P}[ \neg B \mid \mathcal{P}_0]\geq {1}/{2}$, and therefore, $\mathbb{P}[ \neg B \mid \mathcal{P}_k]\geq {1}/{4}$ for $k\in\{1,2\}$ by~\eqref{eq:KLgap}. Therefore, we know that one of $\mathbb{P}[ B \mid \mathcal{P}_k]\geq {1}/{4}$ and $\mathbb{P}[ \neg B \mid \mathcal{P}_k]\geq {1}/{4}$ must hold, implying in turn that
	\begin{equation}\label{eq:second-lb-cases}
		\mathbb{E}[R^\pi]\geq\frac{1}{8}\mathbb{E}[ R^\pi \mid B, \mathcal{P}_2]~~\text{or}~~\mathbb{E}[R^\pi]\geq\frac{1}{8}\mathbb{E}[ R^\pi \mid \neg B, \mathcal{P}_1].
	\end{equation}
	Hence, based on~\eqref{eq:second-lb-cases}, it is sufficient to show that
	\begin{equation}\label{eq:second-lb-conc}
		\mathbb{E}[ R^\pi \mid B, \mathcal{P}_2]=\Omega\left(M_1^{1/2}M_2^{1/2} \scale^{1/2}\right)~~\text{and}~~\mathbb{E}[ R^\pi \mid \neg B, \mathcal{P}_1]=\Omega\left(M_1^{1/2}M_2^{1/2} \scale^{1/2}\right).
	\end{equation}
	We first consider the case where $M_1=1$ and $\scale =1$. Since $M_1\geq M_2$, we also have $M_2=1$. In this case, there are precisely 2 jobs in the system, and the service time of each job is just 1. Under the event $B$ and instance $\mathcal{P}_2$, the algorithm serves the job of mean holding cost $(1-\epsilon)/2$ and then the job of mean holding cost $1/2$ next, but the optimal sequence is the opposite. Hence, we obtain
	$$\mathbb{E}[ R^\pi \mid B, \mathcal{P}_2]=\left(\frac{1-\epsilon}{2} + \frac{1}{2}\cdot 2\right)-\left(\frac{1}{2} + \frac{1-\epsilon}{2}\cdot 2\right)=\frac{\epsilon}{2}=\frac{1}{4}.$$
	Similarly, under the event $\neg B$ and instance $\mathcal{P}_1$, the algorithm serves the job of mean holding cost $1/2$ and then the job of mean holding cost $(1+\epsilon)/2$ next. Therefore,
	$$\mathbb{E}[ R^\pi \mid \neg B, \mathcal{P}_1]=\left(\frac{1}{2} + \frac{1+\epsilon}{2}\cdot 2\right)-\left(\frac{1+\epsilon}{2} + \frac{1}{2}\cdot 2\right)=\frac{\epsilon}{2}=\frac{1}{4}.$$
	Since $M_1=M_2=\scale =1$, we have $M_1^{1/2}M_2^{1/2} \scale^{1/2}=1$, in which case~\eqref{eq:second-lb-conc} holds, as required.
	
	Henceforth, we assume that $M_1\geq 2$ or $\scale\geq 2$, so $M_1 \scale\geq 2$. This means that
	\begin{equation}\label{eq:second-lb-T0}
		\frac{1}{3}M_1\scale \leq T_0 \leq \frac{1}{2} M_1\scale.
	\end{equation}
	As in Section~\ref{sec:regret-basic}, we assume that $c_1\geq c_2\geq \cdots\geq c_I$. Then we number the $N$ jobs from $1$ to $N$ so that jobs $1+\sum_{n\in[i-1]}N_n,\ldots, \sum_{n\in[i]}N_n$ belong to class $i$. Let $d_n$ denote the mean per-time holding cost of job $n\in[N]$. Then, if job $n$ is of class $i$, then we have $d_n=c_i$. Let $\sigma:[N]\to[N]$ be the permutation of $[N]$ that gives the sequence of jobs completed by the algorithm. 
	
	Consider the case where we are under the instance $\mathcal{P}_2$ and the event $B$. Let $n_1$ be the number such that $\sigma(n_1)$ is the $\lfloor(M_1+1)/2\rfloor$th job completed among the ones in $\I_1$. Then $n_1\geq \lfloor(M_1+1)/2\rfloor\geq 1$ and right before job $\sigma(n_1)$ finishes, there remain at least $\lfloor(M_1+1)/2\rfloor\geq M_1/2$ jobs from $\I_1$, including $\sigma(n_1)$. Note that $n_1 - \lfloor(M_1+1)/2\rfloor$ is the number of jobs from $\I_2$ that are completed before job $\sigma(n_1)$. 
	
	If $n_1 - \lfloor(M_1+1)/2\rfloor\geq M_2 / 8$, then at least $M_2/8$ jobs from $\I_2$ are served before job $\sigma(n_1)$. In this case, from Lemma~\ref{lemma:regret-1} and~\eqref{prelim-regret}, we have
	\begin{equation}\label{eq:lb-second-bound2}
		R^\pi\geq\sum_{n\in[n_1]}\sum_{\ell\in E_n}\left(d_{\sigma(\ell)}-d_{\sigma(n)}\right)\scale \geq \frac{M_2}{8}\cdot \frac{M_1}{2}\cdot \frac{\epsilon}{2}\scale = \frac{1}{64}\cdot M_1^{1/2}M_2^{1/2}\scale^{1/2}
	\end{equation}
	because for any $n$ such that $\sigma(n)$ is from $\I_2$, we have $|E_n|\geq M_1/2$ and $d_{\sigma(\ell)}-d_{\sigma(n)}=\epsilon/2$ for any $\ell\in E_n$. 
	
	If $n_1 - \lfloor(M_1+1)/2\rfloor< M_2 / 8$, less than $M_2 / 8$ jobs from $\I_2$ finish until the completion of job $\sigma(n_1)$. In this case, it requires less than $M_2\scale / 8$ time steps to complete the jobs from $\I_2$ that are sequenced before job $\sigma(n_1)$. However, we are under the event $B$, and therefore, at least $M_2\scale /4-M_2\scale / 8=M_2\scale/8$ time slots are used to serve jobs other than the ones before $\sigma(n_1)$. In particular, this implies that $W_{n_1}\geq M_2\scale/8$, so we obtain the following:
	\begin{equation}\label{eq:lb-second-bound3}
		R^\pi\geq\sum_{n\in[n_1]}d_{\sigma(n)}W_n\geq \frac{M_1}{2}\cdot\frac{1}{2}\cdot \frac{M_2\scale}{8}=\frac{1}{32}\cdot M_1 M_2 \scale.
	\end{equation}
	Based on~\eqref{eq:lb-second-bound2} and~\eqref{eq:lb-second-bound3}, we obtain
	\begin{equation}\label{eq:lb-second-bound4}
		\mathbb{E}[ R^\pi \mid B, \mathcal{P}_2]\geq \frac{1}{64}\cdot M_1^{1/2}M_2^{1/2}\scale^{1/2}.
	\end{equation}
	
	Next, assume that we are under the instance $\mathcal{P}_1$ and the event $\neg B$. Let $n_2$ be the number such that $\sigma(n_2)$ is the $\lfloor(M_2+1)/2\rfloor$th job completed among the ones in $\I_2$. Then $n_2\geq \lfloor(M_2+1)/2\rfloor\geq 1$ and right before job $\sigma(n_2)$ finishes, at least $\lfloor(M_2+1)/2\rfloor\geq M_2/2$ jobs from $\I_2$ are in the system, including job $\sigma(n_2)$. Moreover, $n_2 - \lfloor(M_2+1)/2\rfloor$ is the number of jobs from $\I_1$ that are completed before job $\sigma(n_2)$. 
	
	If $n_2 - \lfloor(M_2+1)/2\rfloor\geq M_1/24$, then at least $M_1/24$ jobs from $\I_1$ are processed before job $\sigma(n_2)$. Then it follows from Lemma~\ref{lemma:regret-1} and~\eqref{prelim-regret} that
	\begin{equation}\label{eq:lb-second-bound5}
		R^\pi\geq\sum_{n\in[n_2]}\sum_{\ell\in E_n}\left(d_{\sigma(\ell)}-d_{\sigma(n)}\right)\scale \geq \frac{M_1}{24}\cdot \frac{M_2}{2}\cdot \frac{\epsilon}{2}\scale = \frac{1}{192}\cdot M_1^{1/2}M_2^{1/2}\scale^{1/2}
	\end{equation}
	since for any $n$ such that $\sigma(n)$ is from $\I_1$, we have $|E_n|\geq M_2 /2$ and $d_{\sigma(\ell)} - d_{\sigma(n)}=\epsilon/2$ for any $\ell \in E_n$.
	
	If $n_2 - \lfloor(M_2+1)/2\rfloor< M_1 / 24$, then the number of jobs from $\I_1$ that are completed before the completion of job $\sigma(n_2)$ is less than $M_1 /24$. Then less than $M_1\scale / 24$ time slots are used to complete the jobs from $\I_1$ that are sequenced before job $\sigma(n_2)$. However, we are under the event $\neg B$, so at least $T_0-M_2\scale /4$ time slots are allocated for serving jobs from $\I_1$. Here, we know that
	$$T_0-\frac{M_2\scale}{4}\geq \frac{M_1\scale}{3} - \frac{M_2\scale}{4}\geq \frac{M_1\scale}{12}$$
	where the first inequality is from~\eqref{eq:second-lb-T0}.
	This in turn implies that at least $M_1\scale /12-M_1\scale / 24=M_1\scale/24$ time slots are used to serve jobs other than the ones before $\sigma(n_2)$. Therefore, we obtain $W_{n_2}\geq M_1\scale/24$, so the following holds:
	\begin{equation}\label{eq:lb-second-bound6}
		R^\pi\geq\sum_{n\in[n_2]}d_{\sigma(n)}W_n\geq \frac{M_2}{2}\cdot\frac{1}{2}\cdot \frac{M_1\scale}{24}=\frac{1}{96}\cdot M_1 M_2 \scale.
	\end{equation}
	Based on~\eqref{eq:lb-second-bound5} and~\eqref{eq:lb-second-bound6}, we get
	\begin{equation}\label{eq:lb-second-bound7}
		\mathbb{E}[ R^\pi \mid \neg B, \mathcal{P}_1]\geq \frac{1}{192}\cdot M_1^{1/2}M_2^{1/2}\scale^{1/2}.
	\end{equation}
	Combining~\eqref{eq:second-lb-cases}, ~\eqref{eq:lb-second-bound4}, and~\eqref{eq:lb-second-bound7}, it follows that $\mathbb{E}[ R^\pi]=\Omega\left(M_1^{1/2}M_2^{1/2}\scale^{1/2}\right)$, as required.
\end{proof}

\subsection{Completing the proof by characterizing an optimal partition}\label{sec:uniform-lb-final}

As in Sections~\ref{sec:uniform-lb-first} and~\ref{sec:uniform-lb-second}, we use notations $M_1= \sum_{i\in \I_1}N_i$ and $M_2= \sum_{i\in \I_2}N_i$
for a partition $(\I_1,\I_2)$ of $\I$. We have proved that
$\Omega\left(M_2^{2/3}\scale^{2/3}\right)$
is a lower bound on the expected regret of any (randomized) scheduling algorithm, under the condition that $M_2^{-1/3}\scale^{2/3}\geq 1$. We have also shown that $\Omega\left(M_1^{1/2}M_2^{1/2}\scale^{1/2}\right)$
is a lower bound on the expected regret. Hence, the second lower bound holds true in any case. In fact, the second lower bound is stronger than the first one if $M_2^{-1/3}\scale^{2/3}<1$ as we can check from
$$M_1^{1/2}M_2^{1/2}\scale^{1/2} \geq M_1^{1/3}M_2^{2/3}\scale^{1/2}\geq M_2^{2/3}\scale^{7/6}$$ 
where the first inequality is because $M_1\geq M_2$ and the second inequality follows from $M_1\geq M_2> \scale^2$. Therefore, both $\Omega\left(M_2^{2/3}\scale^{2/3}\right)$ and $\Omega\left(M_1^{1/2}M_2^{1/2}\scale^{1/2}\right)$ are correct lower bounds on the expected regret.

To finish the proof of Theorem~\ref{thm:uniform-lb}, we find a partition $(\I_1,\I_2)$ maximizing $M_2=\sum_{i\in \I_2}N_i$ and a partition $(\I_1,\I_2)$ (not necessarily the same as the first one) maximizing $M_1M_2 = \left(\sum_{i\in \I_1}N_i\right)\left(\sum_{i\in \I_2}N_i\right)$. In fact, both 
$\sum_{i\in \I_2}N_i$ and $\left(\sum_{i\in \I_1}N_i\right)\left(\sum_{i\in \I_2}N_i\right)$ are maximized by a partition $(\I_1,\I_2)$ such that the gap between $\sum_{i\in \I_1}N_i$ and $\sum_{i\in \I_2}N_i$ is minimized.

\begin{lemma}\label{lemma:lb-best-partition}
	Let $(\I_1,\I_2)$ be a partition of $\I$ minimizing
	$$\left|\sum_{i\in \I_1}N_i - \sum_{i\in \I_2}N_i \right|$$
	over the partitions of $\I$. Assume that $\sum_{i\in \I_1}N_i\geq \sum_{i\in \I_2}N_i$.
	\begin{enumerate}[$(a)$]
		\item\label{lb-best-partition-1} If $\barn \leq N/3$, then 
		$$\I_1=\left\{i_{\max}\right\}~~\text{and}~~\I_2=\I\setminus\left\{i_{\max}\right\}.$$
		\item\label{lb-best-partition-2} If $\barn> N/3$, then 
		$$\sum_{i\in \I_1}N_i\leq \frac{2N}{3}~~\text{and}~~\sum_{i\in \I_2}N_i \geq\frac{N}{3}.$$
	\end{enumerate}
\end{lemma}
\begin{proof}
	\eqref{lb-best-partition-1} If $\barn\geq N/3$, then $N_{i_{\max}}\geq 2N/3$. Since we assumed that $\sum_{i\in \I_1}N_i\geq \sum_{i\in \I_2}N_i$, it follows that $i_{\max}$ belongs to $\I_1$. If $L$ is a strict superset of $\left\{i_{\max}\right\}$, then we have $\sum_{i\in L}N_i - \sum_{i\in \I\setminus L}N_i > N_{i_{\max}} - \barn$.
	As $\{\I_1,\I_2\}$ minimizes $\left|\sum_{i\in \I_1}N_i - \sum_{i\in \I_2}N_i \right|$, it follows that $\I_1=\left\{i_{\max}\right\}$ and $\I_2=\I\setminus\left\{i_{\max}\right\}$.
	
	\eqref{lb-best-partition-2} It is sufficient to find a partition $(P,Q)$ of $\I$ such that
	$N/3\leq \sum_{i\in P}N_i, \sum_{i\in Q}N_i\leq 2N/3$.
	If $\barn> N/3$, then $N_{i_{\max}}< 2N/3$. If $N_{i_{\max}}\geq \barn$, then $\I_1=\left\{i_{\max}\right\}$ and $\I_2=\I\setminus \left\{i_{\max}\right\}$ form a desired partition. Otherwise, we obtain $N_{i_{\max}}<N/2<\barn$. Here, if $\barn\leq 2N/3$, then $\I_1=\I\setminus \left\{i_{\max}\right\}$ and $\I_2=\left\{i_{\max}\right\}$ give us a desired partition. Thus we may assume that $N_{i_{\max}}<N/3$ and $2N/3<\barn$. 
	
	Let $i_1,\ldots,i_{I-1}$ be an arbitrary sequence of the classes in $\I\setminus \{i_{\max}\}$. We consider $\I\setminus\{i_{\max},i_1\},\I\setminus\{i_{\max},i_1,i_2\},\ldots$ sequentially. Let $\ell$ be the first index such that
	$$\sum_{i\in \I\setminus\{i_{\max},i_1,\ldots,i_\ell\}}N_i\leq \frac{2N}{3}.$$
	If $\ell=1$, then as $N_{i_1}\leq N_{i_{\max}}\leq N/3$, we have
	$\sum_{i\in \{i_{\max},i_1\}}N_i\leq 2N/3$. Moreover, since $\sum_{i\in \I\setminus\{i_{\max},i_1\}}N_i\leq 2N/3$, we have
	$N/3\leq\sum_{i\in \{i_{\max},i_1\}}N_i,\ \sum_{i\in \I\setminus\{i_{\max},i_1\}}N_i\leq 2N/3$.
	Therefore, $\{i_{\max},i_1\}\}$ and $\I\setminus\{i_{\max},i_1\}$ give rise to a required partition. If $\ell\geq 2$, then
	$$\sum_{i\in \I\setminus\{i_{\max},i_1,\ldots,i_{\ell-1}\}}N_i> \frac{2N}{3}~~\text{and}~~ \sum_{i\in \{i_{\max},i_1,\ldots,i_{\ell-1}\}}N_i< \frac{N}{3}.$$
	Since $N_{i_\ell}\leq N_{i_{\max}}\leq N/3$, it follows that
	$$\frac{N}{3}\leq\sum_{i\in \{i_{\max},i_1,\ldots,i_{\ell}\}}N_i,\ \sum_{i\in \I\setminus\{i_{\max},i_1,\ldots,i_{\ell}\}}N_i\leq \frac{2N}{3},$$
	as required.
\end{proof}

Lemma~\ref{lemma:lb-best-partition} implies that there always exists a partition $(\I_1,\I_2)$ such that $\sum_{i\in \I_2}N_i=\Omega(\barn)$. Moreover, it also implies that there is a partition $(\I_1,\I_2)$ such that $\left(\sum_{i\in \I_1}N_i\right)\left(\sum_{i\in \I_2}N_i\right)=\Omega(N\barn)$. As a result, it follows that 
$$\Omega\left(\max\left\{\barn^{2/3}\scale^{2/3},\ N^{1/2}\barn^{1/2}\scale^{1/2}\right\}\right)$$
is a correct lower bound on the expected regret, as required.

\section{Lower bounds on the expected regret of Algorithm~\ref{preempt-then-nonpreemptive}}

\subsection{Proof of Proposition~\ref{prop:ex1}}\label{sec:prop1}

We first state a result that is concerned with lower bounding the tail probability of a binomial random variable. The result is based on Slud's inequality~\cite{slud}, which says that for i.i.d. Bernoulli random variables $X_1,\ldots, X_m$ with $\mathbb{E}[X_1]=p\leq 1/4$, 
\begin{equation}\label{eq:slud}
	\mathbb{P}\left[\sum_{i=1}^m X_i\geq k\right]\geq \mathbb{P}\left[Z \geq \frac{k-mp}{mp(1-p)}\right]
\end{equation}
where $Z$ is a standard normal random variable. 
\begin{lemma}\label{lemma:reverse-chernoff}
	Let $X_1,\ldots, X_m$ be i.i.d. Bernoulli random variables with $\mathbb{E}[X_1]=p\leq 1/4$. Then for any $\epsilon>0$,
	$$
	\mathbb{P}\left[\frac{1}{m}\sum_{i=1}^m X_i> p +\epsilon\right]\geq \frac{1}{4}\exp\left(-\frac{2m\epsilon^2}{p}\right).\footnote{Its proof is given in~\url{https://ece.uwaterloo.ca/~nmousavi/Papers/Chernoff-Tightness.pdf}.}
	$$
\end{lemma}

Using Lemma~\ref{lemma:reverse-chernoff}, we can prove that the following lemma, which gives a lower bound on the probability that Algorithm~\ref{preempt-then-nonpreemptive} selects the job of class 2.

\begin{lemma}\label{lemma:ex1-error}
	Consider the instance $\mathcal{P}_1$ in Example 1 with $N\geq 8$ and $\sqrt{4\ln 2/N\scale}\leq \epsilon\leq 9/10$. For any $\ln 2/2\epsilon^2\leq t\leq \ln 2/\epsilon^2$, if the job of class $2$ still waits to be served at time $t$, then 
	$\mathbb{P}\left[\hat c_{1,t}<\hat c_{2,t}\right]\geq 1/2^{13}$.
\end{lemma}
\begin{proof}
	Note first that
	\begin{equation}\label{ex:1}
		\mathbb{P}\left[\hat c_{1,t}<\hat c_{2,t}\right]\geq \mathbb{P}\left[\hat c_{1,t}\leq \frac{1+\epsilon}{8}<\hat c_{2,t}\right]=\mathbb{P}\left[\hat c_{1,t}\leq\frac{1+\epsilon}{8}\right]\cdot \mathbb{P}\left[\hat c_{2,t}> \frac{1+\epsilon}{8}\right]
	\end{equation}
	where the equality is because $\hat c_{1,t}$ and $\hat c_{2,t}$ are independent. By Hoeffding's inequality, we have
	\begin{equation}\label{ex:2}
		\mathbb{P}\left[\hat c_{1,t}\leq \frac{1+\epsilon}{8}\right]=1-\mathbb{P}\left[\hat c_{1,t}>\frac{1+\epsilon}{8}\right]\geq 1 - \exp\left(-2\epsilon^2\sum_{s=1}^t N_{1,s}\right).
	\end{equation}
	As $\epsilon \geq \sqrt{4\ln 2/N\scale}$, we have $t\leq N\scale /4$, implying in turn that there are at least $N_1- N/4$ jobs of class 1 remain in the system at time $t$. Due to our assumption that $N\geq 8$, we know that $N_1-N/4=3N/4-1\geq N/2$, and therefore, $\sum_{s=1}^t N_{1,s}\geq Nt/2$. Then it follows from~\eqref{ex:2} that
	\begin{equation}\label{ex:3}
		\mathbb{P}\left[\hat c_{1,t}\leq \frac{1+\epsilon}{8}\right]\geq 1- \exp\left(-\epsilon^2Nt\right)\geq 1-2^{-N/2}\geq \frac{1}{2}
	\end{equation}
	where the second inequality comes from $t\geq \ln 2/2\epsilon^2$ and the last inequality is due to the assumption that $N\geq 8$. Moreover, by Lemma~\ref{lemma:reverse-chernoff}, we obtain 
	\begin{equation}\label{ex:4}
		\mathbb{P}\left[\hat c_{2,t}> \frac{1+\epsilon}{8}\right]\geq \frac{1}{4}\exp\left(-\frac{\epsilon^2}{1-\epsilon}t\right)\geq \frac{1}{4}\exp\left(-10\epsilon^2t\right)\geq \frac{1}{4}2^{-10}=\frac{1}{2^{12}}
	\end{equation}
	where the second inequality is because $1/(1-\epsilon)\leq 4$ and the third inequality comes from $t\leq \ln 2/\epsilon^2$. Combining~\eqref{ex:1},~\eqref{ex:3}, and~\eqref{ex:4}, we obtain $\mathbb{P}\left[\hat c_{1,t}<\hat c_{2,t}\right]\geq 1/2^{13}$, as required.
\end{proof}

Recall that $\Ts$ is the length of the preemption phase in Algorithm~\ref{preempt-then-nonpreemptive} and its value is set according to~\eqref{eq:Ts-choice}. Since $N_{\min}=1$, we have $N_{\min}^{-1}\scale^{2/3}(\log N\scale)^{1/3}\geq 1.$ Hence, $$\Ts=\begin{cases}
	\lfloor \scale^{2/3}(\log N\scale)^{1/3}\rfloor,&\text{if $\scale>\log N\scale$}\\
	\scale-1,&\text{if $\scale \leq \log N\scale$}.\end{cases}$$
Here, if $\scale\leq 3$, then we have $\Ts=\scale -1$ because $\log N\scale \geq \log 16=4\geq 3=\scale$. On the other hand, if $\scale\geq 4$, then $\scale-1\geq \scale^{2/3}$, and therefore, $\Ts\geq \lfloor \scale^{2/3}\rfloor$. Based on this observation,  we separately consider the case $\scale \leq 3$ and the case $\scale\geq 4$.

\paragraph{The case when $\scale\leq 3$.} Let $\epsilon=\sqrt{\ln 2/\scale}$. Since $N\geq 8$, we have $\sqrt{4\ln 2/N\scale}\leq \epsilon$. Moreover, as $\sqrt{\ln 2}\leq 9/10$, we also have $\epsilon\leq 9/10$. Then, by Lemma~\ref{lemma:ex1-error}, we have 
$$\mathbb{P}\left[\hat c_{1,\scale}<\hat c_{2,\scale}\right]\geq 1/2^{13}.$$
We observed that $\Ts=\scale -1$ when $\scale\leq 3$, which means that Algorithm~\ref{preempt-then-nonpreemptive} chooses the first job for non-preemptive serving at the beginning of the $\scale$th time slot. If the job of class $2$ is chosen first, then the expected cumulative holding cost $C^\pi$ is at least
$$C^\pi\geq c_2\scale + c_1\sum_{n=2}^N n\scale$$
because the $n$th job completed by the algorithm stays in the sytem for at least $n\scale$ time steps for every $n\in[N]$.
Since the optimal expected cumulative holding cost is $c_2N\scale + c_1\sum_{n=1}^{N-1} n\scale$, it follows that
$$R^\pi\geq(c_1-c_2)(N-1)\scale = \frac{N-1}{8}\epsilon\scale=\frac{\epsilon}{8}\cdot(N-1)\scale\geq \frac{\sqrt{\ln 2}}{16}N\sqrt{\scale}\geq \frac{\sqrt{\ln 2}}{32}N\sqrt\scale^{2/3}$$
where the first inequality is because $N-1\geq N/2$ and the second inequality follows from $\scale\leq 3$. Then
\begin{equation}\label{eq:prop1-case1}
	\mathbb{E}\left[R^\pi\right] \geq \mathbb{P}\left[\hat c_{1,1}<\hat c_{2,1}\right]\cdot \mathbb{E}\left[R^\pi\mid \hat c_{1,1}<\hat c_{2,1}\right]\geq \frac{\sqrt{\ln2}}{2^{18}} N\scale^{2/3}.
\end{equation}

\paragraph{The case when $\scale\geq 4$.}

We consider
$$T_0=\lfloor \scale^{2/3}\rfloor\quad\text{and}\quad\epsilon = \scale^{-1/3}\sqrt{\ln 2}.$$
Since $(\ln 2)^{1/2}\leq 9/10$ and $N\geq 8$, it is clear that $\sqrt{4\ln 2/N\scale}\leq \epsilon\leq 9/10$. Thus, by Lemma~\ref{lemma:ex1-error}, $\mathbb{P}\left[\hat c_{1,t}<\hat c_{2,t}\right]\geq 1/2^{13}$ holds for any $\scale^{2/3}/2\leq t\leq \scale^{2/3}$. Note that $\scale^{2/3}/2\leq (T_0+1)/2$ and $T_0\leq \scale^{2/3}$. Therefore, it follows that $\mathbb{P}\left[\hat c_{1,t}<\hat c_{2,t}\right]\geq 1/2^{13}$ holds for any $(T_0+1)/2\leq t\leq T_0$.

For $\scale \geq 4$, we know that $T_0\leq \scale -1$. Moreover, $T_0\leq \lfloor \scale^{2/3}(\log N\scale)^{1/3}\rfloor$. Hence, it follows that $T_0\leq \Ts$, implying in turn that no job finishes until the end of the $T_0$th time slot. Let $Q(T_0)$ count the number of time slots in the first $T_0$ time steps where Algorithm~\ref{preempt-then-nonpreemptive} processes the job of class 2. Now we will show that
\begin{equation}\label{eq:prop1-Ts}
	\mathbb{E}\left[ Q(T_0)\right]\geq\sum_{t=1}^{T_0}\mathbb{P}\left[\hat c_{1,t}<\hat c_{2,t}\right]\geq \frac{1}{2^{14}}T_0.
\end{equation}
Recall that Algorithm~\ref{preempt-then-nonpreemptive} selects the job of class 2 at time $t$ if $\hat c_{1,t}<\hat c_{2,t}$, and therefore, the first inequality holds. Since $\mathbb{P}\left[\hat c_{1,t}<\hat c_{2,t}\right]\geq 1/2^{13}$ holds for any $(T_0+1)/2\leq t\leq T_0$, 
$$\mathbb{E}\left[ Q(T_0)\right]\geq \sum_{t=\lceil(1+T_0)/2\rceil}^{T_0}\mathbb{P}\left[\hat c_{1,t}<\hat c_{2,t}\right]\geq \frac{1}{2^{13}}\cdot \lfloor\frac{1+T_0}{2}\rfloor\geq \frac{1}{2^{14}}T_0.$$
Hence, we have just proved that~\eqref{eq:prop1-Ts} holds.

We number the $N$ jobs from $1$ to $N$ so that jobs $1,\ldots,N-1$ are the ones in class $1$ and job $N$ is the class $2$ job. For $n\in [N]$, let $T_n$ denote the number of time steps where job $n$ is processed by Algorithm~\ref{preempt-then-nonpreemptive} during the preemption phase. Then $0\leq T_n\leq \Ts$. Let $\sigma:[N]\to[N]$ be the permutation of $[N]$ that gives the sequence of jobs completed by the algorithm. As in the proof of Theorem~\ref{thm:first-lb}, we can argue that $W_n\geq \Ts-\sum_{\ell=1}^nT_{\sigma(\ell)}$. Then, by Lemma~\ref{lemma:regret-1},
\begin{equation}\label{eq:prop1-1}
	R^\pi\geq\sum_{n\in[N]}d_{\sigma(n)}\left(\Ts -\sum_{\ell=1}^{n}T_{\sigma(\ell)} \right)+  \sum_{n\in[N]}\sum_{\ell\in E_n}\left(d_{\sigma(\ell)}-d_{\sigma(n)}\right)\scale.
\end{equation}

Let $n_2$ be the number such that $\sigma(n_2)=n$, i.e., job $\sigma(n_2)$ is the one in class 2. Note that
$$\Ts-\sum_{\ell=1}^{n_2-1}T_{\sigma(\ell)}\geq T_{\sigma(n_2)} \geq Q(T_0)$$
where the last inequality is because $T_0\leq \Ts$.
Then, by~\eqref{eq:prop1-1}, we obtain
\begin{equation}\label{eq:prop1-2}
	R^\pi\geq (n_2-1)\cdot\frac{1}{8}\cdot Q(T_0)+(N-n_2)\cdot\frac{\epsilon}{8}\cdot \scale.
\end{equation}
If $n_2\geq N/2+1$, then we have \begin{equation}R^\pi\geq \frac{1}{16}NQ(T_0).\end{equation} If $n_2< N/2+1$, then $n_2\leq (N+1)/2$. Then it follows from~\eqref{eq:prop1-1} that 
\begin{equation}\label{eq:prop1-3}
	R^\pi\geq \frac{N-1}{16}\scale(\scale^{-1/3}\sqrt{\ln 2})\geq \frac{\sqrt{\ln2}}{32}N\scale^{2/3}\end{equation}
where the last inequality holds because $N-1\geq N/2$ and $\Ts\geq 1$. Combining~\eqref{eq:prop1-2} and~\eqref{eq:prop1-3}, we obtain
\begin{align}\label{eq:prop1-case2}
	\begin{aligned}
		\mathbb{E}\left[R^\pi\right]&=\mathbb{P}\left[n_2\geq N/2+1\right]\cdot\mathbb{E}\left[R^{\pi}\mid n_2\geq N/2+1\right]+\mathbb{P}\left[n_2< N/2+1\right]\cdot\mathbb{E}\left[R^{\pi}\mid n_2< N/2+1\right]\\
		&\geq \mathbb{P}\left[n_2\geq N/2+1\right]\cdot \frac{1}{16}N\mathbb{E}\left[Q(T_0)\right] + \mathbb{P}\left[n_2< N/2+1\right]\cdot\frac{\sqrt{\ln2}}{32}N\scale^{2/3}\\
		&\geq \frac{1}{2^{18}} NT_0 \\
		&\geq \frac{1}{2^{19}} N\scale^{2/3}
	\end{aligned}
\end{align}
where the second inequality comes from~\eqref{eq:prop1-Ts} and the last inequality holds because $T_0\geq \scale^{2/3}/2$.

Therefore, it follows from~\eqref{eq:prop1-case1} and~\eqref{eq:prop1-case1} that
$$\mathbb{E}\left[R^\pi\right]\geq  \frac{1}{2^{19}} N\scale^{2/3},$$
as required.

\subsection{Proof of Proposition~\ref{prop:ex2}}\label{sec:prop2}

Recall that $N_1=N-\lceil N^{\delta}\rceil$. Then, since $N\geq 8^{1/(1-\delta)}$, we get \begin{equation}\label{eq:prop2-N1}
	N_1- \frac{3N}{4}\geq (N- N^{\delta}-1)-\frac{3N}{4}=N^{\delta}\left(\frac{N^{1-\delta}}{4}-1\right)-1\geq N^{\delta}-1\geq 0
\end{equation}
where the second last inequality is due to $N^{1-\delta}\geq 8$. Therefore, $N_1-N/4\geq N/2$.

At each time $t$, we define $\R_t$ as follows: $$\R_t=\left\{i\in\{2,\ldots, I\}:\ \text{the job of class $i$ still waits to be served at time $t$}\right\}.$$
Hence, if $\R_t$ is not empty and Algorithm~\ref{preempt-then-nonpreemptive} decides which job to serve at time $t$, then Algorithm~\ref{preempt-then-nonpreemptive} compares $\hat c_{1,t}$ and $\max_{i\in \R_t} \hat c_{i,t}$. The following lemma provides a lower bound on the probability that class 1 is not selected at time $t$.
\begin{lemma}\label{lemma:ex2-error}
	Consider the instance $\mathcal{P}_2$ in Example 2 with $N\geq 8^{1/(1-\delta)}$ and $\sqrt{4\ln 2/N\scale}\leq \epsilon\leq 9/10$. 
	Then for any $\ln 2/2\epsilon^2\leq t\leq 2\ln 2/\epsilon^2$, if $\R_t$ is not empty,
	we have $\mathbb{P}\left[\hat c_{1,t}<\hat c_{i,t}\text{ for some $i\in \R_t$}\right]\geq 1/2^{23}$.
\end{lemma}
\begin{proof}
	Note that
	\begin{align}\label{ex2:1}
		\begin{aligned}
			\mathbb{P}\left[\hat c_{1,t}<\hat c_{i,t}\text{ for some $i\in \R_t$}\right]&\geq \mathbb{P}\left[\hat c_{1,t}\leq \frac{1+\epsilon}{8}<\hat c_{i,t}\text{ for some $i\in \R_t$}\right]\\
			&=\mathbb{P}\left[\hat c_{1,t}\leq\frac{1+\epsilon}{8}\right]\cdot \mathbb{P}\left[\hat c_{i,t}> \frac{1+\epsilon}{8} \text{ for some $i\in \R_t$}\right]\\
		\end{aligned}
	\end{align}
	where the equalities hold because $\hat c_{1,t},\ldots, \hat c_{I,t}$ are independent. 
	
	For any $i\in \R_t$, by Lemma~\ref{lemma:reverse-chernoff} and our assumption that $\epsilon\leq 9/10$ and $t\leq 2\ln 2/\epsilon^2$, we obtain
	$$\mathbb{P}\left[\hat c_{i,t}> \frac{1+\epsilon}{8}\right]\geq \frac{1}{4}\exp\left(-\frac{\epsilon^2}{1-\epsilon}t\right)\geq \frac{1}{2^{22}}.
	$$
	Then, if $\R_t$ is not empty, there exists some $j\in \R_t$, and therefore, 
	\begin{equation}\label{ex2:2}
		\mathbb{P}\left[\hat c_{i,t}> \frac{1+\epsilon}{8} \text{ for some $i\in \R_t$}\right]\geq \mathbb{P}\left[\hat c_{j,t}> \frac{1+\epsilon}{8}\right]\geq \frac{1}{2^{22}}.
	\end{equation}
	
	As in the proof of Lemma~\ref{lemma:ex1-error},
	it follows from $\epsilon \geq \sqrt{4\ln 2/N\scale}$ that there are at least $N_1- N/4$ jobs of class 1 remain in the system at time $t$. By~\eqref{eq:prop2-N1}, we have $N_1- N/4\geq N/2$, so  $\sum_{s=1}^t N_{1,s}\geq Nt/2$. By Hoeffding's inequality with $t\geq \ln 2/2\epsilon^2$, we obtain
	\begin{equation}\label{ex2:3}
		\mathbb{P}\left[\hat c_{1,t}\leq \frac{1+\epsilon}{8}\right]\geq 1- \exp\left(-\epsilon^2Nt\right)\geq 1-2^{-N/2}\geq \frac{1}{2}
	\end{equation}
	Therefore, by~\eqref{ex2:1},~\eqref{ex2:2}, and~\eqref{ex2:3}, we get $\mathbb{P}\left[\hat c_{1,t}<\hat c_{i,t}\text{ for some $i\in \R_t$}\right]\geq 1/2^{23}$, as required.
\end{proof}

To prove Proposition~\ref{prop:ex2}, we consider 
$$T_0=\lceil N^{\delta}\rceil\scale\quad\text{and}\quad \epsilon=N^{-\delta/2}\scale^{-1/2}\sqrt{\ln 2}.$$
By~\eqref{eq:prop2-N1}, we know that $N_1\geq 3N/4$. Then
\begin{equation}\label{eq:prop2-N1-remain}
	N_1-\lceil N^{\delta}\rceil \geq \frac{3N}{4} - N^{\delta}-1=\frac{N}{2}+ N^{\delta}\left(\frac{N^{1-\delta}}{4} -1\right)-1\geq \frac{N}{2}+N^{\delta}-1\geq \frac{N}{2}
\end{equation}
where the first inequality is because $N_1\geq 3N/4$ and the second inequality follows from $N\geq 8^{1/(1-\delta)}$. This implies that until the end of the $T_0$th time slot, there are at least $N/2$ jobs of class $1$.

Let $P(T_0)$ be the number of jobs from classes in $\{2,\ldots, I\}$ that are chosen for non-preemptive serving by Algorithm~\ref{preempt-then-nonpreemptive} until the end of the $T_0$th time slot. Let $k$ be the total number of jobs chosen for non-preemptive serving by Algorithm~\ref{preempt-then-nonpreemptive} until the end of the $T_0$th time slot. Note that $k\leq \lceil N^{\delta}\rceil$ because  finishing $\lceil N^{\delta}\rceil$ jobs requires $T_0$ units of service. In fact, $k\geq \lceil N^{\delta}\rceil$ since Algorithm~\ref{preempt-then-nonpreemptive} completes the first $\lceil N^{\delta}\rceil-1$ jobs by the end of the $\left(\Ts+(\lceil N^{\delta}\rceil-1)\scale\right)$th time slot and $\Ts+(\lceil N^{\delta}\rceil-1)\scale\leq \scale -1+(\lceil N^{\delta}\rceil-1)\scale= T_0-1$. Therefore, $k$ is precisely $\lceil N^{\delta}\rceil$.

Let $t_1,\ldots, t_{\lceil N^{\delta}\rceil}$ denote the moments when Algorithm~\ref{preempt-then-nonpreemptive} chooses a job for non-preemptive serving. We denote by $E_t$ the event that Algorithm~\ref{preempt-then-nonpreemptive} chooses a job from some class in $\{2,\ldots, I\}$, and we define $\bm{1}(E_t)$ as the indicator random variable for event $E_t$, i.e., $\bm{1}(E_t)$ takes value 1 when $E_t$ holds and value 0 when $E_t$ does not happen. Then
\begin{equation}\label{eq:prop2-1}
	\mathbb{E}\left[\bm{1}(E_t)\right]=\mathbb{P}\left[E_t\right]\geq \mathbb{P}\left[\hat c_{1,t}<\hat c_{i,t}\text{ for some $i\in \R_t$}\right]
\end{equation}
because Algorithm~\ref{preempt-then-nonpreemptive} must choose a job from some class in $\R_t$ if $\hat c_{1,t}<\hat c_{i,t}$ for some $i\in \R_t$. Moreover, by Lemma~\ref{lemma:ex2-error}, we have $\mathbb{P}\left[\hat c_{1,t}<\hat c_{i,t}\text{ for some $i\in \R_t$}\right]\geq 1/2^{23}$ for any $N^{\delta}\scale/2\leq t\leq 2N^{\delta}\scale$. In particular,
\begin{equation}\label{eq:prop2-2}
	\mathbb{P}\left[\hat c_{1,t}<\hat c_{i,t}\text{ for some $i\in \R_t$}\right]\geq \frac{1}{2^{23}}\quad\text{for any $\lceil\frac{ N^{\delta}}{2}\rceil\scale\leq t\leq \lceil{N^{\delta}}\rceil$}\scale.
\end{equation}
Recall that $1\leq t_1,\ldots, t_{\lceil N^{\delta}\rceil}\leq \lceil N^{\delta}\rceil\scale$. With the same argument by which we proved this, we can also argue that $1\leq t_1,\ldots, t_{\lceil N^{\delta}/2\rceil}\leq \lceil N^{\delta}/2\rceil\scale$. This means that
\begin{equation}\label{eq:prop2-3}
	\lceil\frac{N^{\delta}}{2}\rceil\scale \leq t_{\lceil N^{\delta}/2\rceil+1},\ldots, t_{\lceil N^{\delta}\rceil}\leq \lceil N^{\delta}\rceil\scale.
\end{equation}
Based on~\eqref{eq:prop2-1},~\eqref{eq:prop2-2}, and~\eqref{eq:prop2-3}, we can argue that the following holds.
\begin{align}\label{eq:prop2-4}
	\begin{aligned}
		\mathbb{E}\left[ P(T_0)\right]&=\mathbb{E}\left[\mathbb{E}\left[P(T_0)\mid E_{t_1},\ldots, E_{t_{\lceil N^{\delta}\rceil}}\right] \right]\\
		&= \mathbb{E}\left[\mathbb{E}\left[\sum_{\ell=1}^{\lceil N^\delta\rceil}\mathbb{E}\left[\bm{1}(E_{t_\ell})\right]\mid E_{t_1},\ldots, E_{t_{\lceil N^{\delta}\rceil}}\right] \right]\\
		&\geq \mathbb{E}\left[\mathbb{E}\left[\sum_{\ell=1}^{\lceil N^\delta\rceil}\mathbb{P}\left[\hat c_{1,t_\ell}<\hat c_{i,t_\ell}\text{ for some $i\in \R_t$}\right]\mid E_{t_1},\ldots, E_{t_{\lceil N^{\delta}\rceil}}\right] \right]\\
		&\geq \mathbb{E}\left[\mathbb{E}\left[\sum_{\ell=\lceil N^{\delta}/2\rceil+1}^{\lceil N^\delta\rceil}\mathbb{P}\left[\hat c_{1,t_\ell}<\hat c_{i,t_\ell}\text{ for some $i\in \R_t$}\right]\mid E_{t_1},\ldots, E_{t_{\lceil N^{\delta}\rceil}}\right] \right]\\
		&\geq \mathbb{E}\left[\mathbb{E}\left[\frac{1}{2^{23}}\left(\lceil N^\delta\rceil - \lceil \frac{N^\delta}{2}\rceil\right)\mid E_{t_1},\ldots, E_{t_{\lceil N^{\delta}\rceil}}\right] \right]\\
		&=\frac{1}{2^{23}}\left(\lceil N^\delta\rceil - \lceil \frac{N^\delta}{2}\rceil\right)
	\end{aligned}
\end{align}
where the first inequality is due to~\eqref{eq:prop2-1} and the third inequality is obtained from~\eqref{eq:prop2-2} and~\eqref{eq:prop2-3}. Since $\lceil N^\delta\rceil - \lceil {N^\delta}/{2}\rceil\geq {N^\delta}/{4}$, it follows from~\eqref{eq:prop2-4} that
\begin{equation}\label{eq:prop2-5}
	\mathbb{E}\left[ P(T_0)\right]\geq \frac{N^{\delta}}{2^{25}}.
\end{equation}

We number the $N$ jobs from $1$ to $N$ so that jobs $1,\ldots,N-\lceil N^{\delta}\rceil$ are the ones in class $1$ and job $N-\lceil N^{\delta}+i-1\rceil$ is the job of class $i$ for $i=2,\ldots, I$. Let $\sigma:[N]\to[N]$ be the permutation of $[N]$ that gives the sequence of jobs completed by Algorithm~\ref{preempt-then-nonpreemptive}. Then
\begin{align}\label{eq:prop2-6}
	\begin{aligned}
		R^\pi&\geq \sum_{n\in[N]}\sum_{\ell\in E_n}\left(d_{\sigma(\ell)}-d_{\sigma(n)}\right)\scale\\
		&\geq \sum_{n\leq\lceil N^{\delta}\rceil:\sigma(n)\not\in\J_1}\sum_{\ell\in E_n}\left(d_{\sigma(\ell)}-d_{\sigma(n)}\right)\scale\\
		&\geq \sum_{n\leq\lceil N^{\delta}\rceil:\sigma(n)\not\in\J_1}\frac{N}{2}\cdot \frac{\epsilon}{8}\cdot\scale\\
		&=P(T_0)\cdot\frac{N}{2}\cdot \frac{\epsilon}{8}\cdot\scale
	\end{aligned}
\end{align}
where the first inequality is given by Lemma~\ref{lemma:regret-1}, the second inequality is because $\lceil N^\delta\rceil\leq N$, and the third inequality is because there are at least $N_1-\lceil N^\delta\rceil$ jobs of class 1 remaining until completing job $\sigma(\lceil N^\delta\rceil)$, $N_1-\lceil N^\delta\rceil\geq N/2$ by~\eqref{eq:prop2-N1-remain}, and $c_1-c_i=\epsilon/8$ for any $i\geq 2$. Therefore, we can obtain the following:
\begin{align}\label{eq:prop2-7}
	\mathbb{E}\left[ R^\pi\right] &\geq \mathbb{E}\left[ P(T_0)\right]\cdot\frac{N}{2}\cdot \frac{\epsilon}{8}\cdot\scale\\
	&\geq \frac{N^{\delta}}{2^{25}}\cdot \frac{N}{2}\cdot \frac{\sqrt{\ln 2}}{8N^{\delta/2}\scale^{1/2}}\cdot \scale\\
	&=\frac{\sqrt{\ln 2}}{2^{29}}N^{1+\delta/2}\scale^{1/2}
\end{align}
where the first inequality follows from~\eqref{eq:prop2-6} and the second inequality comes from~\eqref{eq:prop2-5}. Consequently, we have just proved that
$$\mathbb{E}\left[ R^\pi\right]=\Omega(N^{1+\delta/2}\scale^{1/2}),$$
as required.

\section{Proof of Theorem~\ref{thm:uniform-ub-2}}\label{sec:proof:uniform-ub-2}

The proof of Theorem~\ref{thm:uniform-ub-2} is similar to that of Theorem~\ref{thm:uniform-ub-1}. Section~\ref{sec:gaps-refined} is about estimating the $c\mu$ index when the distribution of jobs over classes is not uniform. In Section~\ref{sec:uniform-ub-lemma}, we prove Lemma~\ref{lemma:uniform-ub-2}, and in Section~\ref{sec:pluuing-in-tau-second}, we show that setting $\tau$ as in~\eqref{eq:Ts-choice} we obtain the regret upper bound~\eqref{regret:uniform-ub-2}.

\subsection{Gaps between $c\mu$ values under the clean event}\label{sec:gaps-refined}
Recall that while running Algorithm~\ref{preempt-then-nonpreemptive-refined}, $i_{\max}$ and $\R$ are defined as follows:
\begin{itemize}
	\item $i_{\max}$ is some class in $\arg\max_{i\in\I}N_i$.
	\item $\R$ is the set of remaining classes; the classes that have at least one job waiting to be served.
\end{itemize}

Recall also that $\RP$ is a subfamily of $\R$ that is updated in every time slot by the following command:
$$\RP\leftarrow \RP\cup \left\{i\in \R\setminus \RP:\  \text{LCB}_{i,t}> \text{UCB}_{j,t}\text{ for some }j\in \RP\cup\{i_{\max}\}\right\}.$$
Basically, we newly add a class $i$ to $\RP$ in time slot $t$ if $\text{LCB}_{i,t}> \text{UCB}_{i_{\max},t}$ or $\text{LCB}_{i,t}> \text{UCB}_{j,t}$ for some $j\in \RP$.

\begin{lemma}\label{lemma:gap}
	Under the clean event, the following statements hold.
	\begin{enumerate}[($a$)]
		\item\label{gaps1} If $i_{\max}\in \R$ and $\RP$ is empty, then for any $i\in \R$,
		$$c_i\mu_i -c_{i_{\max}}\mu_{i_{\max}} \leq 2\mu_i\sqrt{\frac{3}{\sum_{s=1}^tN_{i,s}}\log \frac{N\scale}{\mu_{\min}}}+ 2\mu_{i_{\max}}\sqrt{\frac{3}{\sum_{s=1}^tN_{{i_{\max}},s}}\log \frac{N\scale}{\mu_{\min}}}.$$
		
		\item\label{gaps2} If $i_{\max}\in \R$ and $\RP$ is nonempty, then for any $i\in \RP$, 
		$$c_{i_{\max}}\mu_{i_{\max}}-c_i\mu_i<0.$$
		\item\label{gaps3} If $i_{\max}\in \R$, $\RP$ is nonempty, and $i\in\arg\max_{i\in \RP}\hat c_{i,t}\mu_i$, then for any $j\in \R$,
		$$c_j\mu_j- c_i\mu_i \leq 2\mu_j\sqrt{\frac{3}{\sum_{s=1}^tN_{j,s}}\log \frac{N\scale}{\mu_{\min}}}+2\mu_i\sqrt{\frac{3}{\sum_{s=1}^tN_{i,s}}\log \frac{N\scale}{\mu_{\min}}}.$$
		\item\label{gaps4} If $\R$ is nonempty and $i\in\arg\max_{i\in \R}\hat c_{i,t}\mu_i$, then for any $j\in \R$,
		$$c_j\mu_j-c_i\mu_i \leq \mu_j\sqrt{\frac{3}{\sum_{s=1}^tN_{j,s}}\log \frac{N\scale}{\mu_{\min}}}+\mu_i\sqrt{\frac{3}{\sum_{s=1}^tN_{i,s}}\log \frac{N\scale}{\mu_{\min}}}.$$
	\end{enumerate}
\end{lemma}
\begin{proof}
	\eqref{gaps1} Since $\RP$ is empty, for any $i\in \R$, we have $\text{UCB}_{i_{\max},t}\geq \text{LCB}_{i,t}$. We also know from~\eqref{conf-interval} that under the clean event, $$c_{i_{\max}}\mu_{i_{\max}}\geq \text{UCB}_{i_{\max},t}-2\mu_{i_{\max}}\sqrt{\frac{3}{\sum_{s=1}^tN_{{i_{\max}},s}}\log \frac{N\scale}{\mu_{\min}}}$$ and for all $i\in \R$, $$c_{i}\mu_{i}\leq \text{LCB}_{i,t}+2\mu_i\sqrt{\frac{3}{\sum_{s=1}^tN_{{i},s}}\log \frac{N\scale}{\mu_{\min}}}.$$
	Since $\text{LCB}_{i,t}-\text{UCB}_{i_{\max},t}\leq 0$, we obtain the statement in \eqref{gaps1}.
	
	\eqref{gaps2} Let $i\in \RP$. If $\text{UCB}_{i_{\max},t}<\text{LCB}_{i,t}$ for some $t$, then since $c_{i_{\max}}\mu_{i_{\max}}\leq \text{UCB}_{i_{\max},t}$ and $c_i\mu_i\geq \text{LCB}_{i,t}$ by~\eqref{conf-interval}, it follows that $c_{i_{\max}}\mu_{i_{\max}}-c_i\mu_i<0$, as required. If $\text{UCB}_{j,t}<\text{LCB}_{i,t}$ for some $j\in \RP$ and some $t$, then we can apply the same argument to show that $c_{j}\mu_{j}-c_i\mu_i<0$. Moreover, we have proved that $c_{i_{\max}}\mu_{i_{\max}}-c_j\mu_j<0$, implying in turn that $c_{i_{\max}}\mu_{i_{\max}}-c_i\mu_i<0$.
	
	\eqref{gaps3} Let $i\in\arg\max_{i\in \RP}\hat c_{i,t}\mu_i$ and $j\in \R$. If $j\in \RP$, then $\hat c_{i,t}\mu_i\geq \hat c_{j,t}\mu_j$ and thus $\hat c_{j,t}\mu_j-\hat c_{i,t}\mu_i\leq 0$. By~\eqref{clean-event:gap}, the statement in \eqref{gaps3} holds. Thus we may assume that $j\in \R\setminus \RP$. By~\eqref{gaps2}, we may assume that $j\neq i_{\max}$. Since $j\notin \RP$, we have $\text{UCB}_{i,t}\geq\text{LCB}_{j,t}$. Note that by~\eqref{conf-interval},
	$$c_{i}\mu_{i}\geq \text{UCB}_{i,t}-2\mu_i\sqrt{\frac{3}{\sum_{s=1}^tN_{{i},s}}\log \frac{N\scale}{\mu_{\min}}}$$
	and
	$$c_{j}\mu_{j}\leq \text{LCB}_{j,t}+2\mu_j\sqrt{\frac{3}{\sum_{s=1}^tN_{{j},s}}\log \frac{N\scale}{\mu_{\min}}}.$$
	Therefore, the statement in \eqref{gaps3} holds, as required.
	
	\eqref{gaps4} Let $i\in\arg\max_{i\in \R}\hat c_{i,t}\mu_i$ and $j\in \R$. Since $\hat c_{i,t}\mu_i\geq \hat c_{j,t}\mu_j$, we get $\hat c_{j,t}\mu_j-\hat c_{i,t}\mu_i\leq 0$. By~\eqref{clean-event:gap},~\eqref{gaps4} holds.
\end{proof}

\subsection{Proof of Lemma~\ref{lemma:uniform-ub-2}}\label{sec:uniform-ub-lemma}

As in Section~\ref{sec:regret-basic}, we assume that $c_1\geq c_2\geq \cdots \geq c_I$.
We number the $N$ jobs from $1$ to $N$ so that jobs $1+\sum_{j\in[i-1]}N_j,\ldots, \sum_{j\in[i]}N_j$ belong to class $i$. Now let $\sigma:[N]\rightarrow [N]$ be the permutation of $[N]$ that corresponds to the sequence of jobs completed by Algorithm~\ref{preempt-then-nonpreemptive-refined}. Let $C^\pi$ and $R^\pi$ denote the cumulative holding cost and the regret incurred up to $\Tc$, the time at which all jobs are completed under Algorithm~\ref{preempt-then-nonpreemptive-refined}, respectively.
\begin{lemma}\label{lemma:firstphase}
	For each class $i\in\I$, let $T_{s,i}$ denote the number of time slots where class $i$ is selected during the preemption period. Let $i_{\max}$ be the class in $\arg\max_{i\in \I}N_i$ that is selected by Algorithm~\ref{preempt-then-nonpreemptive-refined}. Then the following statements hold. 
	\begin{enumerate}[$(a)$]
		\item\label{algo2-lemma1-1} If $i$ is the class of job $\sigma(n)$, then $W_n\leq \Ts- T_{s,i}$.
		\item\label{algo2-lemma1-2} If the class of job $\sigma(n)$ is $i_{\max}$, then $W_n=0$.
	\end{enumerate}
\end{lemma}
\begin{proof}
	\eqref{algo2-lemma1-1}
	Let $t$ be some time slot in which the server gives service to a job other than $\sigma(1),\ldots, \sigma(n)$. After the preemption phase, until the completion of job $\sigma(n)$, the server processes only the jobs $\sigma(1),\ldots, \sigma(n)$. Hence, it follows that $t$ is some time slot in the preemption phase. Moreover, the designated job of class $i$ is completed before job $\sigma(n)$. This implies that the job served in time slot $t$ is not the designated job of class $i$. Therefore, $W_n$ is at most the number of time slots during the preemption phase where a class other than the class of job $\sigma(n)$ is chosen, so we obtain $W_n\leq \Ts-T_{s,i}$.
	
	\eqref{algo2-lemma1-2} For the sake of contradiction, suppose that there exists some time slot $t$ where a job other than $\sigma(1),\ldots, \sigma(n)$ is processed. With the same argument above, we may argue that $t$ is some time slot in the preemption phase. During the preemption phase, if Algorithm~\ref{preempt-then-nonpreemptive-refined} does not select class $i_{\max}$, it must select some class in $\RP$. Hence, if $T_{s,i}>0$, it means that either $i=i_{\max}$ or $i\in \RP$. This implies that some class $i\in \RP$ with $T_{s,i}>0$ is selected in time slot $t$. As $i\in \RP$ before the non-preemptive phase begin, we know that all jobs of class $i$ are completed before any job of class $i_{\max}$, and therefore, the jobs of class $i$ are among $\sigma(1),\ldots, \sigma(n)$. Since a class $i$ job is chosen at time $t$, one of $\sigma(1),\ldots, \sigma(n)$ is served at $t$, but this contradicts the supposition. Therefore, $W_n=0$ if $\sigma(n)$ is of class $i_{\max}$.
\end{proof}

Then it follows from Lemma~\ref{lemma:firstphase} that
$$\sum_{n\in[N]}d_{\sigma(n)}W_n\leq \sum_{n\in[N]}W_n=\sum_{i\in [N]\setminus\{i_{\max}\}} \sum_{n:\sigma(n)\in\J_i}W_n\leq \sum_{i\in [N]\setminus\{i_{\max}\}}N_i(\Ts-T_{s,i})\leq \barn \Ts.$$
Together with Lemma~\ref{lemma:regret-1}, this implies that
\begin{equation}\label{eq:regret-bound}
	R^\pi\leq \barn \Ts + \sum_{n\in[N]}\sum_{\ell\in E_n}(d_{\sigma(\ell)} - d_{\sigma(n)})\scale.
\end{equation}
The following is an important property about $E_n$'s.
\begin{lemma}\label{lemma:largest}
	Let $n\in[N]$ be such that job $\sigma(n)$ is not of class $i_{\max}$. Then
	$$E_n\cap \left\{\ell\in[N]:\ \text{$\sigma(\ell)$ is of class $i_{\max}$}\right\}=\emptyset.$$
	In words, $E_n$ contains no $\ell$ with $\sigma(\ell)$ being in class $i_{\max}$.
\end{lemma}
\begin{proof}
	Let $\ell\in[N]$ be such that job $\sigma(\ell)$ is of class $i_{\max}$. If job $\sigma(n)$ finishes before a job of class $i_{\max}$ by Algorithm~\ref{preempt-then-nonpreemptive-refined}, then set $\RP$ includes the class of $\sigma(n)$ until the class remains in the system. Then Lemma~\ref{lemma:gap}\eqref{gaps2} implies that $d_{\sigma(n)}>d_{\sigma(\ell)}$. Therefore, $\ell\notin E_n$.
\end{proof}

We now prove Lemma~\ref{lemma:uniform-ub-2} based on Lemmas~\ref{lemma:firstphase} and~\ref{lemma:largest}.

We have defined the notion of clean event in Appendix~\ref{sec:clean-event}. Let us consider the case where the clean event does not hold first. Algorithm~\ref{preempt-then-nonpreemptive-refined} is a work conserving policy, under which all jobs must be completed by the end of $N\scale$th time slot. An obvious implication of this is that the completion time of each job is bounded above by $N\scale$. Another straightforward fact is that the expected regret of Algorithm~\ref{preempt-then-nonpreemptive-refined} is upper bounded by it expected cumulative holding cost. As the completion time of each job under Algorithm~\ref{preempt-then-nonpreemptive-refined} is at most $N\scale$, the expected cumulative holding cost is at most $N^2\scale$, and so is the expected regret.

We next focus on the case where the clean event holds. In particular, the statements in Lemma~\ref{lemma:gap} hold. We will bound the second term on the right hand side of~\eqref{eq:regret-bound}. Take $n=1$ and consider $\sum_{\ell\in E_1}(d_{\sigma(\ell)} - d_{\sigma(1)}).$ If the class of $\sigma(1)$ is $i_{\max}$, then 
\begin{equation}\label{eq:1-bound1}
	\sum_{\ell\in E_1}(d_{\sigma(\ell)} - d_{\sigma(1)}) =\sum_{\ell\in E_1:\ \sigma(\ell)\not\in\J_{i_{\max}}}(d_{\sigma(\ell)} - d_{\sigma(1)})
\end{equation}
because $d_{\sigma(\ell)}-d_{\sigma(1)}=0$ for any $\ell\in E_1$ such that $\sigma(\ell)$ is in class $i_{\max}$.
Moreover, $\RP$ is empty, when job $\sigma(1)$ is chosen for non-preemptive serving, which is at the beginning of the $(\Ts+1)$th time slot. Then it follows from Lemma~\ref{lemma:gap}\eqref{gaps1} that for any $\ell\in E_1$, \begin{equation}\label{eq:1-bound2}
	d_{\sigma(\ell)} - d_{\sigma(1)}\leq \sqrt{\frac{12}{\sum_{s=1}^{\Ts+1}N_{i,s}}\log N\scale}+\sqrt{\frac{12}{\sum_{s=1}^{\Ts+1}N_{{i_{\max}},s}}\log N\scale}
\end{equation}
where $i$ denotes the class of $\sigma(\ell)$. Since each job requires $\scale$ units of service to finish, all $N$ jobs remain in the system until the end of the $\scale$th time slot. This means that as $\Ts<\scale$, all $N$ jobs are present in the system at the beginning of the $(\Ts+1)$th time slot. Then we have $N_{i,s}=N_i\geq N_{\min}$ for all $s\leq \Ts+1$ and $i\in \I$. This and \eqref{eq:1-bound1}--\eqref{eq:1-bound2} imply that
\begin{equation}\label{eq:1-bound3}
	\sum_{\ell\in E_1}(d_{\sigma(\ell)} - d_{\sigma(1)})\leq \sum_{\ell\in E_1:\ \sigma(\ell)\not\in\J_{i_{\max}}}\sqrt{\frac{48}{N_{\min}(\Ts+1)}\log N\scale}\leq \barn \sqrt{\frac{48}{N_{\min}(\Ts+1)}\log N\scale}.
\end{equation}
If the class of $\sigma(1)$ is not $i_{\max}$, then Lemma~\ref{lemma:largest} implies that for each $\ell \in E_1$, $\sigma(\ell)$ is not of class $i_{\max}$. Moreover, by Lemma~\ref{lemma:gap}, it follows that
\begin{equation}\label{eq:1-bound4}
	\sum_{\ell\in E_1}(d_{\sigma(\ell)} - d_{\sigma(1)})\leq \sum_{\ell\in E_1}\sqrt{\frac{48}{N_{\min}(\Ts+1)}\log N\scale}\leq \barn \sqrt{\frac{48}{N_{\min}(\Ts+1)}\log N\scale}.
\end{equation}
By~\eqref{eq:1-bound3} and~\eqref{eq:1-bound4}, we obtain
$$R^\pi\leq \barn \Ts+\bar N\scale\sqrt{\frac{48}{N_{\min}(\Ts+1)}\log N\scale}+\sum_{n\geq 2}\sum_{\ell\in E_n}(d_{\sigma(\ell)} - d_{\sigma(n)})\scale.$$

Now it remains to bound the third term on the right hand side of this inequality. For $n\geq 2$, let $t_n$ denote the time when job $\sigma(n)$ is selected by Algorithm~\ref{preempt-then-nonpreemptive-refined} after the preemption phase. As $t_n$ is a moment after jobs $\sigma(1),\ldots, \sigma(n-1)$ are completed, $t_n\geq (n-1)\scale$. If $\sigma(n)$ is of class $i_{\max}$, at time $t_n$, $\RP$ is empty. Then Lemma~\ref{lemma:gap}\eqref{gaps1} implies that
\begin{align}\label{gapeq}
	\begin{aligned}
		d_{\sigma(\ell)} - d_{\sigma(n)}&\leq \sqrt{\frac{12}{\sum_{s=1}^{t_n} N_{\text{class of $\sigma(\ell)$},s}}\log N\scale}+\sqrt{\frac{12}{\sum_{s=1}^{t_n} N_{\text{class of $\sigma(n)$},s}}\log N\scale}\\
		&\leq \sqrt{\frac{12}{\sum_{s=1}^{(n-1)\scale} N_{\text{class of $\sigma(\ell)$},s}}\log N\scale}+\sqrt{\frac{12}{\sum_{s=1}^{(n-1)\scale} N_{\text{class of $\sigma(n)$},s}}\log N\scale}
	\end{aligned}
\end{align}
for any $\ell\in E_n$ since $E_n\subseteq \R$. Moreover, if $\sigma(\ell)$ also belongs to class $i_{\max}$, then $d_{\sigma(\ell)} - d_{\sigma(n)}=0$. Therefore, if $\sigma(n)$ is of class $i_{\max}$, then we can argue the following holds based on~\eqref{gapeq}:
\begin{align}\label{bound1-'}
	\begin{aligned}
		&\sum_{\ell\in E_n}\left(d_{\sigma(\ell)} - d_{\sigma(n)}\right)\\
		&\leq \sum_{\ell\in E_n:\sigma(\ell)\not\in\J_{i_{\max}}}\left(\sqrt{\frac{12}{\sum_{s=1}^{(n-1)\scale} N_{\text{class of $\sigma(\ell)$},s}}\log N\scale}+\sqrt{\frac{12}{\sum_{s=1}^{(n-1)\scale} N_{\text{class of $\sigma(n)$},s}}\log N\scale}\right)\\
		&\leq\sum_{\ell\in E_n:\sigma(\ell)\not\in\J_{i_{\max}}}\sqrt{\frac{12}{\sum_{s=1}^{(n-1)\scale} N_{\text{class of $\sigma(\ell)$},s}}\log N\scale}+\barn\sqrt{\frac{12}{\sum_{s=1}^{(n-1)\scale} N_{\text{class of $\sigma(n)$},s}}\log N\scale}
	\end{aligned}
\end{align}
where the second inequality is because the number of indices $\ell\in E_n$ with $\sigma(\ell)$ not being in class $i_{\max}$ is at most $\barn$. If $\sigma(n)$ does not belong to class $i_{\max}$, then Lemma~\ref{lemma:largest} implies that for each $\ell \in E_n$, $\sigma(\ell)$ is not of class $i_{\max}$. Moreover, by Lemma~\ref{lemma:gap}, it follows that
\begin{align}\label{bound1-''}
	\begin{aligned}
		&\sum_{\ell\in E_n}\left(d_{\sigma(\ell)} - d_{\sigma(n)}\right)\\
		&\leq \sum_{\ell\in E_n}\left(\sqrt{\frac{12}{\sum_{s=1}^{(n-1)\scale} N_{\text{class of $\sigma(\ell)$},s}}\log N\scale}+\sqrt{\frac{12}{\sum_{s=1}^{(n-1)\scale} N_{\text{class of $\sigma(n)$},s}}\log N\scale}\right)\\
		&\leq\sum_{\ell\in E_n:\sigma(\ell)\not\in\J_{i_{\max}}}\sqrt{\frac{12}{\sum_{s=1}^{(n-1)\scale} N_{\text{class of $\sigma(\ell)$},s}}\log N\scale}+\barn\sqrt{\frac{12}{\sum_{s=1}^{(n-1)\scale} N_{\text{class of $\sigma(n)$},s}}\log N\scale}
	\end{aligned}
\end{align}
where the second inequality is due to Lemma~\ref{lemma:largest} which also implies that $|E_n|\leq \barn$. By~\eqref{bound1-'} and~\eqref{bound1-''}, we obtain
\begin{align}\label{bound1}
	\begin{aligned}
		&\sum_{n\geq 2}\sum_{\ell\in E_n}\left(d_{\sigma(\ell)}-d_{\sigma(n)}\right)\\
		&\leq \sum_{n\geq 2}\sum_{\ell\in E_n:\sigma(\ell)\not\in\J_{i_{\max}}}\sqrt{\frac{12}{\sum_{s=1}^{(n-1)\scale} N_{\text{class of $\sigma(\ell)$},s}}\log N\scale}\\
		&\quad+\sum_{n\geq 2}\barn\sqrt{\frac{12}{\sum_{s=1}^{(n-1)\scale} N_{\text{class of $\sigma(n)$},s}}\log N\scale}
	\end{aligned}
\end{align}
We look at the second sum at the right-hand side of inequality~\eqref{bound1} first.

\begin{equation}\label{bound2}
	\sum_{n\geq 2}\sqrt{\frac{12}{\sum_{s=1}^{(n-1)\scale} N_{\text{class of $\sigma(n)$},s}}}=\sum_{i\in\I}\sum_{n\geq2:\sigma(n)\in\J_i}\sqrt{\frac{12}{\sum_{s=1}^{(n-1)\scale} N_{\text{class of $\sigma(n)$},s}}}.
\end{equation}
Let $i\in\I$ be a class that $\sigma(1)$ does not belong to. Then for some $2\leq n_{i_1}\leq\cdots \leq n_{i_{N_i}}$, jobs $\sigma(n_{i_1}),\ldots, \sigma(n_{i_{N_i}})$ are in class $i$. Then
\begin{align}\label{bound3}
	\begin{aligned}
		&\sum_{n\geq2:\sigma(n)\in\J_i}\sqrt{\frac{12}{\sum_{s=1}^{(n-1)\scale} N_{\text{class of $\sigma(n)$},s}}}\\
		&=\sum_{k=1}^{N_i}\sqrt{\frac{12}{\sum_{s=1}^{(n_{i_k}-1)\scale} N_{i,s}}}\\
		&=\sum_{k=1}^{\lfloor N_i/2\rfloor}\sqrt{\frac{12}{\sum_{s=1}^{(n_{i_k}-1)\scale} N_{i,s}}}+\sqrt{\frac{12}{\sum_{s=1}^{(n_{i_{\lceil N_i/2\rceil}}-1)\scale} N_{i,s}}}+\sum_{k=\lceil N_i/2\rceil+1}^{N_i}\sqrt{\frac{12}{\sum_{s=1}^{(n_{i_k}-1)\scale} N_{i,s}}}\\
		&\leq \sum_{k=1}^{\lfloor N_i/2\rfloor}\sqrt{\frac{12}{\sum_{s=1}^{(n_{i_k}-1)\scale} N_{i,s}}}+\sqrt{\frac{12}{\sum_{s=1}^{(n_{i_{\lceil N_i/2\rceil}}-1)\scale} N_{i,s}}}+\lfloor\frac{N_i}{2}\rfloor\sqrt{\frac{12}{\sum_{s=1}^{n_{i_{\lceil N_i/2\rceil}}\scale} N_{i,s}}}\\
		&\leq 3\sum_{k=1}^{\lfloor N_i/2\rfloor}\sqrt{\frac{12}{\sum_{s=1}^{(n_{i_k}-1)\scale} N_{i,s}}}\\
		&\leq 3\sum_{k=1}^{\lfloor N_i/2\rfloor}\sqrt{\frac{24}{(n_{i_k}-1)\scale N_i}}
	\end{aligned}
\end{align}
where the first inequality is due to $\sum_{s=1}^{(n_{i_{k}}-1)\scale} N_{i,s}\geq \sum_{s=1}^{n_{i_{\lceil N_i/2\rceil}}\scale} N_{i,s}$ for any $k\geq \lceil N_i/2\rceil+1$, the second inequality comes from  $\sum_{s=1}^{(n_{i_{k}}-1)\scale} N_{i,s}\leq \sum_{s=1}^{n_{i_{\lceil N_i/2\rceil}}\scale} N_{i,s}$ for any $k\leq \lceil N_i/2\rceil$, and the last inequality is because at least $N_i/2$ jobs of class $i$ remain in the system until choosing the $\lceil N_i/2\rceil$th job of class $i$. 

If $\sigma(1)$ is of class $i\in \I$, then for some $2\leq n_{i_1}\leq\cdots \leq n_{i_{N_i-1}}$, jobs $\sigma(n_{i_1}),\ldots, \sigma(n_{i_{N_i-1}})$ are in class $i$. Here, if $N_i=1$, then
\begin{equation}\label{bound4}
	\sum_{n\geq2:\sigma(n)\in\J_i}\sqrt{\frac{12}{\sum_{s=1}^{(n-1)\scale} N_{\text{class of $\sigma(n)$},s}}}=0.
\end{equation}
Now assume that $N_i\geq 2$. Then
\begin{align}\label{bound5}
	\begin{aligned}
		\sum_{n\geq2:\sigma(n)\in\J_i}\sqrt{\frac{12}{\sum_{s=1}^{(n-1)\scale} N_{\text{class of $\sigma(n)$},s}}}
		&=\sum_{k=1}^{N_i-1}\sqrt{\frac{12}{\sum_{s=1}^{(n_{i_k}-1)\scale} N_{i,s}}}\\
		&=\sum_{k=1}^{\lfloor N_i/2\rfloor}\sqrt{\frac{12}{\sum_{s=1}^{(n_{i_k}-1)\scale} N_{i,s}}}+\sum_{k=\lceil N_i/2\rceil}^{N_i-1}\sqrt{\frac{12}{\sum_{s=1}^{(n_{i_k}-1)\scale} N_{i,s}}}\\
		&\leq \sum_{k=1}^{\lfloor N_i/2\rfloor}\sqrt{\frac{12}{\sum_{s=1}^{(n_{i_k}-1)\scale} N_{i,s}}}+\lfloor\frac{N_i}{2}\rfloor\sqrt{\frac{12}{\sum_{s=1}^{n_{i_{\lfloor N_i/2\rfloor}}\scale} N_{i,s}}}\\
		&\leq 2\sum_{k=1}^{\lfloor N_i/2\rfloor}\sqrt{\frac{12}{\sum_{s=1}^{(n_{i_k}-1)\scale} N_{i,s}}}\\
		&\leq 2\sum_{k=1}^{\lfloor N_i/2\rfloor}\sqrt{\frac{48}{(n_{i_k}-1)\scale N_i}}
	\end{aligned}
\end{align}
where the first inequality is due to $\sum_{s=1}^{(n_{i_{k}}-1)\scale} N_{i,s}\geq \sum_{s=1}^{n_{i_{\lfloor N_i/2\rfloor}}\scale} N_{i,s}$ for any $k\geq \lceil N_i/2\rceil$, the second inequality comes from  $\sum_{s=1}^{(n_{i_{k}}-1)\scale} N_{i,s}\leq \sum_{s=1}^{n_{i_{\lfloor N_i/2\rfloor}}\scale} N_{i,s}$ for any $k\leq \lfloor N_i/2\rfloor$, and the last inequality is because at least $\lfloor N_i/2\rfloor \geq N_i/4$ jobs of class $i$ remain in the system until choosing the $\lfloor N_i/2\rfloor$th job of class $i$. 

Then it follows from~\eqref{bound2}--\eqref{bound5} that 
\begin{align}\label{bound6}
	\begin{aligned}
		\sum_{n\geq 2}\barn\sqrt{\frac{12}{\sum_{s=1}^{(n-1)\scale} N_{\text{class of $\sigma(n)$},s}}\log N\scale}\leq \sum_{i\in\I}6\barn\sqrt{\log N\scale}\sum_{k=1}^{\lfloor N_i/2\rfloor}\sqrt{\frac{6}{(n_{i_k}-1)\scale N_i}}.
	\end{aligned}
\end{align}
Next, we turn our attention to the first sum at the right-hand side of inequality~\eqref{bound1}. Note that
\begin{align}\label{bound7}
	\begin{aligned}
		&\sum_{n\geq 2}\sum_{\ell\in E_n:\sigma(\ell)\not\in\J_{i_{\max}}}\sqrt{\frac{12}{\sum_{s=1}^{(n-1)\scale} N_{\text{class of $\sigma(\ell)$},s}}}\\
		&=\sum_{n\geq 2}\sum_{i\in \I\setminus\{i_{\max}\}}\left|\left\{\ell\in E_n:\sigma(\ell)\in\J_i\right\}\right|\sqrt{\frac{12}{\sum_{s=1}^{(n-1)\scale} N_{i,s}}}\\
		&=\sum_{i\in \I\setminus\{i_{\max}\}}\sum_{n\geq 2}\left|\left\{\ell\in E_n:\sigma(\ell)\in\J_i\right\}\right|\sqrt{\frac{12}{\sum_{s=1}^{(n-1)\scale} N_{i,s}}}\\
		&\leq\sum_{i\in \I\setminus\{i_{\max}\}}\sum_{n\geq 2:\text{$\exists \ell\in E_n$ with $\sigma(\ell)\in\J_i$}}N_i\sqrt{\frac{12}{\sum_{s=1}^{(n-1)\scale} N_{i,s}}}
	\end{aligned}
\end{align}
because $\left|\left\{\ell\in E_n:\sigma(\ell)\in\J_i\right\}\right|=0$ or $\left|\left\{\ell\in E_n:\sigma(\ell)\in\J_i\right\}\right|\leq N_i$.
Let $i\in \I\setminus\{i_{\max}\}$. If $\sigma(1)$ is not in class $i$, then as before, for some $2\leq n_{i_1}\leq\cdots \leq n_{i_{N_i}}$, jobs $\sigma(n_{i_1}),\ldots, \sigma(n_{i_{N_i}})$ are in class $i$. Moreover, 
\begin{align}\label{bound8'}
	\begin{aligned}
		&\sum_{n\geq 2:\text{$\exists \ell\in E_n$ with $\sigma(\ell)$ in class $i$}}N_i\sqrt{\frac{12}{\sum_{s=1}^{(n-1)\scale} N_{i,s}}}\\
		&\leq N_i\sum_{n=2}^{n_{i_{\lfloor N_i/2\rfloor}}}\sqrt{\frac{12}{\sum_{s=1}^{(n-1)\scale} N_{i,s}}}+\sum_{n\geq n_{i_{\lfloor N_i/2\rfloor}}+1:\text{$\exists \ell\in E_n$ with $\sigma(\ell)\in\J_i$}}N_i\sqrt{\frac{12}{\sum_{s=1}^{(n-1)\scale} N_{i,s}}}\\
		&\leq N_i\sum_{n=2}^{n_{i_{\lfloor N_i/2\rfloor}}}\sqrt{\frac{24}{(n-1)\scale N_i}}+\sum_{n\geq n_{i_{\lfloor N_i/2\rfloor}}+1:\text{$\exists \ell\in E_n$ with $\sigma(\ell)\in\J_i$}}N_i\sqrt{\frac{12}{\sum_{s=1}^{(n-1)\scale} N_{i,s}}}
	\end{aligned}
\end{align}
where the second inequality is because there are at least $N_i/2$ jobs waiting until the selection of the $\lfloor N_i/2\rfloor$th job of class $i$. Next, let $n\geq n_{i_{\lfloor N_i/2\rfloor}}+1$. If $\sigma(n)$ is of class $i_{\max}$, then as $\sigma(n_{i_{\lfloor N_i/2\rfloor}})$ finishes before $\sigma(n)$, class $i$ belongs to $\RP$ when job $\sigma(n_{i_{\lfloor N_i/2\rfloor}})$ is selected. This means that Algorithm~\ref{preempt-then-nonpreemptive-refined} completes all the remaining jobs in class $i$ before job $\sigma(n)$, so $E_n$ does not contain $\ell$ such that $\sigma(\ell)$ is in class $i$. Then it follows that 
\begin{align}\label{bound8''}
	\begin{aligned}
		&\sum_{n\geq n_{i_{\lfloor N_i/2\rfloor}}+1:\text{$\exists \ell\in E_n$ with $\sigma(\ell)\in\J_i$}}N_i\sqrt{\frac{12}{\sum_{s=1}^{(n-1)\scale} N_{i,s}}}\\
		&\leq \sum_{n\geq n_{i_{\lfloor N_i/2\rfloor}}+1:\sigma(n)\not\in\J_{i_{\max}}}N_i\sqrt{\frac{12}{\sum_{s=1}^{(n-1)\scale} N_{i,s}}}\\
		&\leq \barn N_i\sqrt{\frac{12}{\sum_{s=1}^{n_{i_{\lfloor N_i/2\rfloor}}\scale} N_{i,s}}}.
	\end{aligned}
\end{align}
Moreover,
\begin{align}\label{bound8'''}
	\begin{aligned}
		\frac{N_i}{2}\cdot\sqrt{\frac{12}{\sum_{s=1}^{n_{i_{\lfloor N_i/2\rfloor}}\scale} N_{i,s}}}\leq 2\sum_{k=1}^{\lfloor N_i/2\rfloor}\sqrt{\frac{12}{\sum_{s=1}^{(n_{i_{k}}-1)\scale} N_{i,s}}}\leq2\sum_{k=1}^{\lfloor N_i/2\rfloor}\sqrt{\frac{24}{(n_{i_k}-1)\scale N_i}}
	\end{aligned}
\end{align}
where the first inequality holds true because $n_{i_k}\leq n_{i_{\lfloor N_i/2\rfloor}}$ for $k\leq \lfloor N_i/2\rfloor$ and $N_i/2\leq 2\lfloor N_i/2\rfloor$ and the second inequality holds because there are at least $N_i/2$ jobs remaining until choosing the $\lfloor N_i/2\rfloor$th job is chosen. Combining~\eqref{bound8'}--\eqref{bound8'''}, we obtain
\begin{align}\label{bound8}
	\begin{aligned}
		\sum_{n\geq 2:\text{$\exists \ell\in E_n$ with $\sigma(\ell)\in\J_i$}}N_i\sqrt{\frac{12}{\sum_{s=1}^{(n-1)\scale} N_{i,s}}}\leq N_i\sum_{n=2}^{n_{i_{\lfloor N_i/2\rfloor}}}\sqrt{\frac{24}{(n-1)\scale N_i}}+4\barn\sum_{k=1}^{\lfloor N_i/2\rfloor}\sqrt{\frac{24}{(n_{i_k}-1)\scale N_i}}
	\end{aligned}
\end{align}

Now let $i$ be the class of $\sigma(1)$. If $N_i=1$, then 
\begin{equation}\label{bound9}
	\sum_{n\geq 2:\text{$\exists \ell\in E_n$ with $\sigma(\ell)\in\J_i$}}N_i\sqrt{\frac{12}{\sum_{s=1}^{(n-1)\scale} N_{i,s}}}=0.
\end{equation}
In this case, we set $n_{i_1}=\bar N+1$.
If $N_i\geq 2$, as before, for some $2\leq n_{i_1}\leq\cdots \leq n_{i_{N_i}-1}$, jobs $\sigma(n_{i_1}),\ldots, \sigma(n_{i_{N_i}-1})$ are in class $i$. Then we can similarly argue that
\begin{align}\label{bound10}
	\sum_{n\geq 2:\text{$\exists \ell\in E_n$ with $\sigma(\ell)\in\J_i$}}N_i\sqrt{\frac{12}{\sum_{s=1}^{(n-1)\scale} N_{i,s}}}
	\leq N_i\sum_{n=2}^{n_{i_{\lfloor N_i/2\rfloor}}}\sqrt{\frac{48}{(n-1)\scale N_i}}+2\barn\sum_{k=1}^{\lfloor N_i/2\rfloor}\sqrt{\frac{48}{(n_{i_k}-1)\scale N_i}}.
\end{align}
Then~\eqref{bound7}--\eqref{bound10} imply that
\begin{align}\label{bound11}
	\begin{aligned}
		&\sum_{n\geq 2}\sum_{\ell\in E_n:\sigma(\ell)\not\in\J_{i_{\max}}}\sqrt{\frac{12}{\sum_{s=1}^{(n-1)\scale} N_{\text{class of $\sigma(\ell)$},s}}\log N\scale}\\
		&\leq \sum_{i\in\I\setminus\{i_{\max}\}}\left(4N_i\sqrt{\log N\scale}\sum_{n=2}^{n_{i_{\lfloor N_i/2\rfloor}}}\sqrt{\frac{3}{(n-1)\scale N_i}}+8\barn\sqrt{\log N\scale}\sum_{k=1}^{\lfloor N_i/2\rfloor}\sqrt{\frac{6}{(n_{i_k}-1)\scale N_i}}\right).
	\end{aligned}
\end{align}
Since~\eqref{bound6} and~\eqref{bound11} provide upper bounds on the first and second terms at the rightmost side of~\eqref{bound1}, we obtain
\begin{align}\label{bound12}
	\begin{aligned}
		&\sum_{n\geq 2}\sum_{\ell\in E_n}\left(d_{\sigma(\ell)}-d_{\sigma(n)}\right)\scale\\
		&\leq \sum_{i\in\I\setminus\{i_{\max}\}}4\scale N_i\sqrt{\log N\scale}\sum_{n=2}^{n_{i_{\lfloor N_i/2\rfloor}}}\sqrt{\frac{3}{(n-1)\scale N_i}}+ \sum_{i\in\I}14\barn\scale\sqrt{\log N\scale}\sum_{k=1}^{\lfloor N_i/2\rfloor}\sqrt{\frac{6}{(n_{i_k}-1)\scale N_i}}.
	\end{aligned}
\end{align}
Consequently, it remains to bound the two terms on the right-hand side of inequality~\eqref{bound12}. We will show that both terms are at most
$$\kappa\cdot \left(\left(\sqrt{I N\bar N}+\min\left\{I\barn,\sqrt{I}\barn\sqrt{\log \barn},\frac{\barn^{3/2}}{N_{\min}^{1/2}}\right\}\right)\sqrt{\scale\log N\scale}\right)$$
for some constant $\kappa>0$, completing the proof of Lemma~\ref{lemma:uniform-ub-2}.

Let us first consider the second sum on the right-hand side of~\eqref{bound12}, for which we provide three different bounds. First, the following holds for some constants $\kappa_0,\kappa_1>0$:
\begin{align}\label{final-bound1}
	\begin{aligned}
		&\sum_{i\in\I}14\barn\scale\sqrt{\log N\scale}\sum_{k=1}^{\lfloor N_i/2\rfloor}\sqrt{\frac{6}{(n_{i_k}-1)\scale N_i}}\\
		&= 14\barn\sqrt{\scale\log N\scale}\left(\sum_{k=1}^{\lfloor N_{i_{\max}}/2\rfloor}\sqrt{\frac{6}{(n_{(i_{max})_k}-1)N_{i_{\max}}}}+\sum_{i\in\I\setminus\{i_{\max}\}}\sum_{k=1}^{\lfloor N_i/2\rfloor}\sqrt{\frac{6}{(n_{i_k}-1)N_i}}\right)\\
		&\leq 14\barn\sqrt{\scale\log N\scale}\left(\sum_{n=2}^{\lfloor N_{i_{\max}}/2\rfloor+1}\sqrt{\frac{6}{(n-1)N_{i_{\max}}}}+\sum_{i\in\I\setminus\{i_{\max}\}}\sum_{k=1}^{\lfloor N_i/2\rfloor}\sqrt{\frac{6}{(n_{i_k}-1)N_{\min}}}\right)\\
		&\leq 14\barn\sqrt{\scale\log N\scale}\left(\sum_{n=2}^{\lfloor N_{i_{\max}}/2\rfloor+1}\sqrt{\frac{6}{(n-1)N_{i_{\max}}}}+\sum_{n=2}^{\barn+1}\sqrt{\frac{6}{(n-1)N_{\min}}}\right)\\
		&\leq \kappa_0\barn\sqrt{\scale\log N\scale}\left(\sqrt{N_{i_{\max}}}\cdot\frac{1}{\sqrt{N_{i_{\max}}}}+\sqrt{\barn}\cdot\frac{1}{\sqrt{N_{\min}}}\right)\\
		&\leq \kappa_1\barn\sqrt{\scale\log N\scale}\sqrt{\barn}\cdot\frac{1}{\sqrt{N_{\min}}}\\
		&=\kappa_1\cdot \frac{\barn^{3/2}}{N_{\min}^{1/2}}\sqrt{\scale\log N\scale}
	\end{aligned}
\end{align}
where the first inequality is by $n_{(i_{max})_k}\geq k+1$ and $N_i\geq N_{\min}$, the second inequality is because each $n_{i_k}\geq2$ with $i\neq i_{\max}$ is at least $2$ and the number of such $n_{(i_{max})_k}$'s is at most $\barn$, and the third inequality is because $\sum_{n=1}^\ell 1/\sqrt{n}=O(\sqrt{\ell})$.

Second, for some constant $\kappa_2>0$, the following holds:
\begin{align}\label{final-bound2}
	\begin{aligned}
		\sum_{i\in\I}14\barn\scale\sqrt{\log N\scale}\sum_{k=1}^{\lfloor N_i/2\rfloor}\sqrt{\frac{6}{(n_{i_k}-1)\scale N_i}}
		&\leq 14\barn\sqrt{\scale\log N\scale}\sum_{i\in\I}\sum_{k=1}^{\lfloor N_i/2\rfloor}\sqrt{\frac{6}{kN_i}}\\
		&\leq 14\barn\sqrt{\scale\log N\scale}\sum_{i\in\I}\frac{1}{\sqrt{N_i}}\sum_{k=1}^{\lfloor N_i/2\rfloor}\sqrt{\frac{6}{k}}\\
		&\leq \kappa_2 \cdot \barn\sqrt{\scale\log N\scale}\sum_{i\in\I}\frac{1}{\sqrt{N_i}}\sqrt{N_i}\\
		&=\kappa_2 \cdot I\barn\sqrt{\scale\log N\scale}
	\end{aligned}
\end{align}
where the first inequality is because $n_{i_k}\geq k+1$.

Lastly, for some constant $\kappa_3,\kappa_4>0$,
\begin{align}\label{final-bound3}
	\begin{aligned}
		&\sum_{i\in\I}14\barn\scale\sqrt{\log N\scale}\sum_{k=1}^{\lfloor N_i/2\rfloor}\sqrt{\frac{6}{(n_{i_k}-1)\scale N_i}}\\
		&= 14\barn\sqrt{\scale\log N\scale}\left(\sum_{k=1}^{\lfloor N_{i_{\max}}/2\rfloor}\sqrt{\frac{6}{(n_{(i_{max})_k}-1)N_{i_{\max}}}}+\sum_{i\in\I\setminus\{i_{\max}\}}\sum_{k=1}^{\lfloor N_i/2\rfloor}\sqrt{\frac{12}{(n_{i_k}-1)N_i}}\right)\\
		&\leq \kappa_3\barn\sqrt{\scale\log N\scale}\sum_{i\in\I\setminus\{i_{\max}\}}\sum_{k=1}^{\lfloor N_i/2\rfloor}\sqrt{\frac{6}{(n_{i_k}-1)N_i}}\\
		&\leq \kappa_3\barn\sqrt{\scale\log N\scale}\sqrt{\sum_{i\in\I\setminus\{i_{\max}\}}\sum_{k=1}^{\lfloor N_i/2\rfloor}\frac{1}{n_{i_k}-1}}\sqrt{\sum_{i\in\I\setminus\{i_{\max}\}}\sum_{k=1}^{\lfloor N_i/2\rfloor}\frac{6}{N_i}}\\
		&\leq \kappa_3\barn\sqrt{\scale\log N\scale}\sqrt{\sum_{n=1}^{\barn}\frac{1}{n}}\sqrt{\sum_{i\in\I}N_i\cdot\frac{6}{N_i}}\\
		&\leq \kappa_4\cdot \sqrt{I}\barn\sqrt{\log \barn}\sqrt{\scale\log N\scale}
	\end{aligned}
\end{align}
where the first inequality is because
$$\sum_{k=1}^{\lfloor N_{i_{\max}}/2\rfloor}\sqrt{\frac{6}{(n_{(i_{max})_k}-1)N_{i_{\max}}}}\leq \sum_{n=2}^{\lfloor N_{i_{\max}}/2\rfloor+1}\sqrt{\frac{6}{(n-1)N_{i_{\max}}}}=O\left(\sqrt{N_{i_{\max}}}\cdot\frac{1}{\sqrt{N_{i_{\max}}}}\right)$$
which we observed when considering~\eqref{final-bound1}, the second inequality is given by the Cauchy-Schwarz inequality, and the last inequality is because ${\sum_{n\geq 2}1/n}=O(\log N)$. 
Hence,~\eqref{final-bound1}--\eqref{final-bound3} imply the desired bound on the second sum:
\begin{align}\label{final-bound4}
	\begin{aligned}
		&\sum_{i\in\I}14\barn\scale\sqrt{\log N\scale}\sum_{k=1}^{\lfloor N_i/2\rfloor}\sqrt{\frac{6}{(n_{i_k}-1)\scale N_i}}\\
		&\leq \max\{\kappa_1,\kappa_2,\kappa_4\}\cdot\min\left\{I\barn, \sqrt{I}\barn\sqrt{\log \barn}, \frac{\barn^{3/2}}{N_{\min}^{1/2}}\right\}\sqrt{\scale\log N\scale}
	\end{aligned}
\end{align}
Next we consider the first sum. We show that
\begin{align}\label{final-bound5}
	\begin{aligned}
		\sum_{i\in\I\setminus\{i_{\max}\}}4\scale N_i\sqrt{\log N\scale}\sum_{n=2}^{n_{i_{\lfloor N_i/2\rfloor}}}\sqrt{\frac{3}{(n-1)\scale N_i}}
		&=\sum_{i\in\I\setminus\{i_{\max}\}}4\sqrt{N_i\scale\log N\scale}\sum_{n=2}^{n_{i_{\lfloor N_i/2\rfloor}}}\sqrt{\frac{3}{(n-1)}}\\
		&\leq \sum_{i\in\I\setminus\{i_{\max}\}}2\sqrt{N_i\scale\log N \scale}\sum_{n=2}^{N}\sqrt{\frac{3}{(n-1)}}\\
		&\leq \kappa_5\cdot \sqrt{N\scale\log N\scale}\sum_{i\in\I\setminus\{i_{\max}\}} \sqrt{N_i}\\
		&\leq \kappa_5\cdot \sqrt{N\scale\log N\scale}\cdot \sqrt{I\barn}\\
		&=\kappa_5\cdot \sqrt{I N\bar N}\cdot \sqrt{\scale \log N\scale}
	\end{aligned}
\end{align}
holds for some constant $\kappa_5>0$
where the first inequality is due to $n_{i_{\lfloor N_i/2\rfloor}}\leq N$, the second inequality is because $\sum_{n=2}^N\sqrt{1/(n-1)} =O(\sqrt{N})$, and the last inequality is by the Cauchy–Schwarz inequality. 
Finally, combining~\eqref{bound12},~\eqref{final-bound4}, and~\eqref{final-bound5}, we show that the two terms on the right-hand side of~\eqref{bound12} are bounded above by
$$\kappa\cdot \left(\sqrt{I N\bar N\scale \log N\scale}+\min\left\{I\barn,\sqrt{I}\barn\sqrt{\log \barn},\frac{\barn^{3/2}}{N_{\min}^{1/2}}\right\}\sqrt{\scale\log N\scale}\right)$$
for some $\kappa>0$ as required.

Therefore, note that
\begin{align*}
	&\mathbb{E}[R^\pi]\\&= \mathbb{E}[R^\pi\mid \neg~\text{clean event}]\cdot \mathbb{P}[\neg~\text{clean event}] +\mathbb{E}[R^\pi \mid \text{clean event}]\cdot \mathbb{P}[\text{clean event}]\\
	&=O\left(\frac{2}{N^4\scale}N^2\scale\right.\\
	&\qquad+\left(1-\frac{2}{N^4\scale}\right)\left(\barn \Ts + \barn\scale\sqrt{\frac{48}{N_{\min} \Ts}\log N\scale}+\sqrt{IN\barn}\sqrt{\scale \log N\scale}\right.\\
	&\qquad\left.\left.+\min\left\{I\barn,\sqrt{I}\barn\sqrt{\log \barn},\frac{\barn^{3/2}}{N_{\min}^{1/2}}\right\}\sqrt{\scale \log N\scale}\right)\right)\\
	&=O\left(\barn \Ts + \barn\scale\sqrt{\frac{48}{N_{\min} (\Ts+1)}\log N\scale}+\left(\sqrt{IN\barn}+\min\left\{I\barn,\sqrt{I}\barn\sqrt{\log \barn},\frac{\barn^{3/2}}{N_{\min}^{1/2}}\right\}\right)\sqrt{\scale \log N\scale}\right),
\end{align*}
completing the proof of Lemma~\ref{lemma:uniform-ub-2}. 

\subsection{Completing the proof: the length of the preemption phase}\label{sec:pluuing-in-tau-second}

By Lemma~\ref{lemma:Ts-choice}, if $\Ts$ is given as in~\eqref{eq:Ts-choice}, then
$$\barn \Ts + \frac{\barn \scale(\log N\scale)^{1/2}}{N_{\min}^{1/2}(\Ts+1)^{1/2}}=O\left(\max\left\{\frac{\barn\scale^{2/3}(\log N\scale)^{1/3}}{N_{\min}^{1/3}},\ \frac{\barn\scale^{1/2}({\log N\scale})^{1/2}}{N_{\min}^{1/2}},\ \barn({\log N\scale})^{1/2}\right\}\right).$$
Then~\eqref{regret:uniform-ub-2'} is upper bounded by~\eqref{regret:uniform-ub-2''}. As~\eqref{regret:uniform-ub-2'} is an upper bound on the expected regret of Algorithm~\ref{preempt-then-nonpreemptive-refined} by Lemma~\ref{lemma:uniform-ub-2},~\eqref{regret:uniform-ub-2''} is also an upper bound on the expected regret. As we explained in Section~\ref{sec:refined}, the bound~\eqref{regret:uniform-ub-2''} implies~\eqref{regret:uniform-ub-2} because $N_{\min}\geq 1$.

\section{Proofs for the extension results in Section~\ref{sec:extensions}}

\subsection{Regret under heterogeneous service times}\label{sec:extension-tool}

Assume that $$c_1\mu_1\geq c_2\mu_2\geq \cdots \geq c_I\mu_I.$$ As in Section~\ref{sec:regret-basic}, we number the $N$ jobs from $1$ to $N$ so that jobs $1+\sum_{j\in[i-1]}N_j,\ldots, \sum_{j\in[i]}N_j$ belong to class~$i$. We use notation $d_n$ to denote the mean per-time holding cost of job $n\in\J$, so $d_n=c_i$ if $i$ is the class of job $n$. Moreover, we introduce notation $\lambda_n$ to denote the service rate of job $n\in\J$, so if $i$ is the class of job $n$, then we have $\lambda_n=\mu_i$.

Now let $\sigma:[N]\rightarrow [N]$ be the permutation of $[N]$ that corresponds to the sequence of jobs completed by an algorithm $\pi$. Let $C^\pi$ and $R^\pi$ denote the cumulative holding cost and the regret incurred up to $\Tc$, the time at which all jobs are completed under $\pi$, respectively. Let $W_n$ denote the number of time steps that the server spends to serve jobs other than $\sigma(1),\ldots, \sigma(n)$ before completing job $\sigma(n)$. Then job $\sigma(n)$ stays in the system for precisely $W_n+n\scale$ time steps. Then $C^\pi$ is given by 
$$
C^\pi=\sum_{n\in[N]}d_{\sigma(n)}W_n+\sum_{n\in[N]}d_{\sigma(n)} \sum_{\ell\in[n]}\frac{\scale}{\lambda_{\sigma(\ell)}}.$$
The minimum holding cost is $\sum_{n\in[N]} d_n \sum_{\ell\in[n]}{\scale}/{\lambda_{\ell}}$, so we have
\begin{equation}\label{prelim-regret'-hetero}
	R^\pi=\sum_{n\in[N]}d_{\sigma(n)}W_n+\sum_{n\in[N]}\left(d_{\sigma(n)}\sum_{\ell\in[n]}\frac{\scale}{\lambda_{\sigma(\ell)}}-d_n\sum_{\ell\in[n]}\frac{\scale}{\lambda_{\ell}}\right).\end{equation}
Based on the following lemma, we can rewrite the regret expression for $R^\pi$ given in~\eqref{prelim-regret'-hetero} so that $R^\pi$ can be written as a sum of some nonnegative terms only. If we define $E_n$ as in~\eqref{eq:E_n}, then we know that
$$E_n\supseteq\left\{\ell\in[N]:\ \text{$\sigma(n)$ finishes before $\sigma(\ell)$ and $d_{\sigma(\ell)}\lambda_{\sigma(\ell)}> d_{\sigma(n)}\lambda_{\sigma(n)}$}\right\}.$$
Note that for any $n\in[N]$ and $\ell\in E_n$, we know that $d_{\sigma(\ell)}\lambda_{\sigma(\ell)}-d_{\sigma(n)}\lambda_{\sigma(n)}\geq 0$. The following lemma is a generalization of Lemma~\ref{lemma:regret-1} to the case of heterogeneous service times.
\begin{lemma}\label{lemma:regret-1-hetero}
	Let $E_n$ be defined as in~\eqref{eq:E_n}. Then
	\begin{equation}\label{prelim-regret-hetero}
		R^\pi=\sum_{n\in[N]}d_{\sigma(n)}W_n+\sum_{n\in[N]}\sum_{\ell\in E_n}\left(d_{\sigma(\ell)}\lambda_{\sigma(\ell)}-d_{\sigma(n)}\lambda_{\sigma(n)}\right)\frac{\scale}{\lambda_{\sigma(\ell)}\lambda_{\sigma(n)}}.
	\end{equation}
\end{lemma}
\begin{proof}
	It is sufficient to show that the following relation holds.
	\begin{equation}\label{eq:hetero-lemma} \sum_{n\in[N]}\left(d_{\sigma(n)}\sum_{\ell\in[n]}\frac{\scale}{\lambda_{\sigma(n)}}-d_{n}\sum_{\ell\in[n]}\frac{\scale}{\lambda_\ell}\right)=\sum_{n\in[N]}\sum_{\ell\in E_n}\left(d_{\sigma(\ell)}\lambda_{\sigma(\ell)}-d_{\sigma(n)}\lambda_{\sigma(n)}\right)\frac{\scale}{\lambda_{\sigma(\ell)}\lambda_{\sigma(n)}}.
	\end{equation}
	First, by rearranging terms, we obtain
	\begin{align*}
		\sum_{n\in[N]}d_{\sigma(n)}\sum_{\ell\in[n]}\frac{\scale}{\lambda_{\sigma(\ell)}}&=\sum_{n\in[N]}\frac{\scale}{\lambda_{\sigma(n)}}\sum_{\ell\geq n}d_{\sigma(\ell)}\\
		&=\sum_{n\in[N]}d_{\sigma(n)}\frac{\scale}{\lambda_{\sigma(n)}}+\sum_{n\in[N]}\frac{\scale}{\lambda_{\sigma(n)}}\sum_{\ell\in E_n}d_{\sigma(\ell)}+\sum_{n\in[N]}\frac{\scale}{\lambda_{\sigma(n)}}\sum_{\ell>n: \sigma(\ell)>\sigma(n)}d_{\sigma(\ell)}.
	\end{align*}
	Next, we also obtain the following by rearranging terms:
	\begin{align*}
		\sum_{n\in[N]}d_{n}\sum_{\ell\in[n]}\frac{\scale}{\lambda_\ell}&=\sum_{n\in[N]}d_{\sigma(n)}\sum_{\ell\in[\sigma(n)]}\frac{\scale}{\lambda_\ell}\\
		&=\sum_{n\in[N]}d_{\sigma(n)}\left(\frac{\scale}{\lambda_{\sigma(n)}}+\sum_{\sigma(\ell)<\sigma(n):\ell<n}\frac{\scale}{\lambda_{\sigma(\ell)}}+\sum_{ \sigma(\ell)<\sigma(n):\ell>n}\frac{\scale}{\lambda_{\sigma(\ell)}}\right)\\
		&=\sum_{n\in[N]}d_{\sigma(n)}\frac{\scale}{\lambda_{\sigma(n)}}+\sum_{n\in[N]}d_{\sigma(n)}\sum_{\ell<n: \sigma(\ell)<\sigma(n)}\frac{\scale}{\lambda_{\sigma(\ell)}}+\sum_{n\in[N]}d_{\sigma(n)}\sum_{\ell\in E_n}\frac{\scale}{\lambda_{\sigma(\ell)}}.
	\end{align*}
	Therefore, the first sum in~\eqref{eq:hetero-lemma} is equal to 
	\begin{equation*}
		\sum_{n\in[N]}\sum_{\ell\in E_n}\left(\frac{d_{\sigma(\ell)}}{\lambda_{\sigma(n)}} - \frac{d_{\sigma(n)}}{\lambda_{\sigma(\ell)}}\right)\scale + \sum_{n\in[N]}\frac{\scale}{\lambda_{\sigma(n)}}\sum_{\ell>n: \sigma(\ell)>\sigma(n)}d_{\sigma(\ell)}-\sum_{n\in[N]}d_{\sigma(n)}\sum_{\ell<n: \sigma(\ell)<\sigma(n)}\frac{\scale}{\lambda_{\sigma(\ell)}}.
	\end{equation*}
	Notice that the second term in this equation can be rewritten as
	\begin{align*}
		\sum_{n\in[N]}\frac{\scale}{\lambda_{\sigma(n)}}\sum_{\ell>n: \sigma(\ell)>\sigma(n)}d_{\sigma(\ell)}&=\sum_{\ell\in[N]}d_{\sigma(\ell)}\sum_{n<\ell:\sigma(n)<\sigma(\ell)} \frac{\scale}{\lambda_{\sigma(n)}},
	\end{align*}
	and therefore, it is equivalent to the third term of the equation.
	Hence, the first sum in~\eqref{eq:hetero-lemma}  is indeed equal to the second term in~\eqref{eq:hetero-lemma}, as required.
\end{proof}

Recall that we introduced notation
$\bar \scale ={\scale}/{\mu_{\min}}$.
The following lemma is a direct consequence of Lemma~\ref{lemma:gap}.
\begin{lemma}\label{lemma:gap'}
	Under the clean event, the following statements hold.
	\begin{enumerate}[($a$)]
		\item\label{gaps'1} If $i_{\max}\in \R$ and $\RP$ is empty, then for any $i\in \R$,
		$$(c_i\mu_i -c_{i_{\max}}\mu_{i_{\max}})\frac{\scale}{\mu_i\mu_{i_{\max}}} \leq 2\bar\scale \sqrt{\log N\bar\scale}\left(\sqrt{\frac{3}{\sum_{s=1}^tN_{i,s}}}+ \sqrt{\frac{3}{\sum_{s=1}^tN_{{i_{\max}},s}}}\right).$$
		
		\item\label{gaps'2} If $i_{\max}\in \R$ and $\RP$ is nonempty, then for any $i\in \RP$, 
		$$(c_{i_{\max}}\mu_{i_{\max}}-c_i\mu_i)\frac{\scale}{\mu_{i_{\max}}\mu_i} <0.$$
		\item\label{gaps'3} If $i_{\max}\in \R$, $\RP$ is nonempty, and $i\in\arg\max_{i\in \RP}\hat c_{i,t}\mu_i$, then for any $j\in \R$,
		$$(c_j\mu_j- c_i\mu_i)\frac{\scale}{\mu_{j}\mu_i}  \leq 2\bar\scale\sqrt{\log N\bar\scale}\left(\sqrt{\frac{3}{\sum_{s=1}^tN_{j,s}}}+\sqrt{\frac{3}{\sum_{s=1}^tN_{i,s}}}\right).$$
		\item\label{gaps'4} If $\R$ is nonempty and $i\in\arg\max_{i\in \R}\hat c_{i,t}\mu_i$, then for any $j\in \R$,
		$$(c_j\mu_j-c_i\mu_i )\frac{\scale}{\mu_{j}\mu_i} \leq \bar\scale\sqrt{\log N\bar\scale}\left(\sqrt{\frac{3}{\sum_{s=1}^tN_{j,s}}}+\sqrt{\frac{3}{\sum_{s=1}^tN_{i,s}}}\right).$$
	\end{enumerate}
\end{lemma}

\subsection{Proof of Theorem~\ref{thm:hetero-ub}}

Our proof of Theorem~\ref{thm:hetero-ub} is an adaptation of the proof of Theorem~\ref{thm:uniform-ub-2} given in Appendix~\ref{sec:proof:uniform-ub-2}. There are several distinctions to consider. First, a regret bound has dependence on individual $\mu_i$'s.  We will shortly argue 
\begin{equation}\label{eq:regret-bound-hetero}
	R^\pi\leq \barn \Ts + \sum_{n\in[N]}\sum_{\ell\in E_n}\left(d_{\sigma(\ell)}\lambda_{\sigma(\ell)}-d_{\sigma(n)}\lambda_{\sigma(n)}\right)\frac{\scale}{\lambda_{\sigma(\ell)}\lambda_{\sigma(n)}},
\end{equation}
which considers different values for $\lambda_n$'s.
To bound the second sum on the right-hand side of~\eqref{eq:regret-bound-hetero}, we can use Lemma~\ref{lemma:gap'} to bound each term in the second sum. Note that on the right-hand side of inequalities given in Lemma~\ref{lemma:gap'}, we have $\bar\scale = \scale/\mu_{\min}$ on the right-hand side. Lastly, the amount of time required to complete a job is bounded below by $\scale/\mu_{\max}$. In the end, we will provide a regret upper bound which is blown up by a factor of $\sqrt{\mu_{\max}/\mu_{\min}}$ compared to the regret upper bound for the uniform case provided in Theorem~\ref{thm:uniform-ub-2}.

First, let us take care of the edge case where the clean event does not hold. As the policy is work-conserving, all jobs would be completed after $M\scale$ time steps, where $M=\sum_{n\in[N]}1/\lambda_n$, implying in turn that each job stays in the system for at most $M\scale$ time steps. Hence, the worst case holding cost is always bounded from above by $\sum_{n\in[N]}d_n\cdot M\scale$, which is less than or equal to $NM\scale$ as $d_n\leq 1$ for $n\in[N]$. That means that $R^\pi\leq MN\scale$.

Now let us assume that the clean event does hold. Note that Lemma~\ref{lemma:firstphase} holds regardless of whether $\mu_1,\ldots,\mu_I$ are heterogeneous. Then it follows from Lemma~\ref{lemma:firstphase} that
$$\sum_{n\in[N]}d_{\sigma(n)}W_n\leq \sum_{n\in[N]}W_n=\sum_{i\in [N]\setminus\{i_{\max}\}} \sum_{n:\sigma(n)\in\J_i}W_n\leq \sum_{i\in [N]\setminus\{i_{\max}\}}N_i\Ts= \barn \Ts.$$
Together with Lemma~\ref{lemma:regret-1-hetero}, this implies that~\eqref{eq:regret-bound-hetero} holds.

We bound the sum on the right-hand side of~\eqref{eq:regret-bound-hetero}. Recall that~\eqref{hetero-assumption'} implies that no job finishes until the end of $(\Ts+1)$th time slot. We can consider the terms with $n=1$ first and the other terms next. As in the uniform case, the first job $\sigma(1)$ for non-preemptive serving is determined at the beginning of $(\Ts+1)$th time slot. Hence, we can apply the same argument used in the proof of Theorem~\ref{thm:uniform-ub-2} up to~\eqref{eq:1-bound4}, thereby we argue that
$$\sum_{\ell\in E_1}\left(d_{\sigma(\ell)}\lambda_{\sigma(\ell)}-d_{\sigma(1)}\lambda_{\sigma(1)}\right)\frac{\scale}{\lambda_{\sigma(\ell)}\lambda_{\sigma(1)}}\leq 4\barn\bar\scale\sqrt{\frac{3}{N_{\min}(\Ts+1)}\log N\bar\scale},$$
which implies
\begin{equation}\label{eq:regret-bound-hetero-2}
	R^\pi\leq \barn \Ts +4\barn\bar\scale\sqrt{\frac{3}{N_{\min}(\Ts+1)}\log N\bar\scale}+ \sum_{n\geq2}\sum_{\ell\in E_n}\left(d_{\sigma(\ell)}\lambda_{\sigma(\ell)}-d_{\sigma(n)}\lambda_{\sigma(n)}\right)\frac{\scale}{\lambda_{\sigma(\ell)}\lambda_{\sigma(n)}}.
\end{equation}
Next we bound the terms on the right-hand side of~\eqref{eq:regret-bound-hetero-2} with $n\geq2$. For $n\geq 2$, let $t_n$ denote the time when job $\sigma(n)$ is selected by Algorithm~\ref{preempt-then-nonpreemptive-refined} after the preemption phase. At time $t_n$, the jobs $\sigma(1),\ldots,\sigma(n-1)$ have already been completed, we have that $t_n\geq (n-1)\scale/\mu_{\max}$. Based on this, we can argue the following holds as in the proof of Theorem~\ref{thm:uniform-ub-2} up to~\eqref{bound1}.
\begin{align}\label{eq:regret-bound-hetero-4}
	\begin{aligned}
		&\sum_{n\geq 2}\sum_{\ell\in E_n}\left(d_{\sigma(\ell)}\lambda_{\sigma(\ell)}-d_{\sigma(n)}\lambda_{\sigma(n)}\right)\frac{\scale}{\lambda_{\sigma(\ell)}\lambda_{\sigma(n)}}\\
		&\leq2\bar\scale\sqrt{\log N\bar\scale}\sum_{n\geq 2}\sum_{\ell\in E_n:\sigma(\ell)\not\in\J_{i_{\max}}}\sqrt{\frac{3}{\sum_{s=1}^{{(n-1)\scale}/{\mu_{\max}}} N_{\text{class of $\sigma(\ell)$},s}}}\\
		&\quad +2\barn\bar\scale\sqrt{\log N\bar\scale}\sum_{n\geq 2}\sqrt{\frac{3}{\sum_{s=1}^{{(n-1)\scale}/{\mu_{\max}}} N_{\text{class of $\sigma(n)$},s}}}.
	\end{aligned}
\end{align}
Following the argument in the proof of Theorem~\ref{thm:uniform-ub-2} up to~\eqref{bound6}, we can provide a bound on the second term on the right-hand side of~\eqref{eq:regret-bound-hetero-4} as follows.
\begin{align}\label{eq:regret-bound-hetero-6}
	\begin{aligned}
		2\barn\bar\scale\sqrt{\log N\bar\scale}\sum_{n\geq 2}\sqrt{\frac{3}{\sum_{s=1}^{{(n-1)\scale}/{\mu_{\max}}} N_{\text{class of $\sigma(n)$},s}}}
		&\leq \sum_{i\in\I}6\barn\bar\scale\sqrt{\log N\bar\scale}\sum_{k=1}^{\lfloor N_i/2\rfloor}\sqrt{\frac{6\mu_{\max}}{(n_{i_k}-1)\scale N_i}}\\
		&=6\barn\sqrt{\frac{\mu_{\max}}{\mu_{\min}}\bar\scale\log N\bar\scale}\sum_{i\in\I}\sum_{k=1}^{\lfloor N_i/2\rfloor}\sqrt{\frac{6}{(n_{i_k}-1) N_i}}
	\end{aligned}
\end{align}
where $n_{i_1},\ldots, n_{i_{\lfloor N_i/2\rfloor}}$ are the indices such that $\sigma(n_{i_1}),\ldots, \sigma(n_{i_{\lfloor N_i/2\rfloor}})$ are the first $\lfloor N_i/2\rfloor$ jobs in class $i\in\I$ after job $\sigma(1)$.
Next, we turn our attention to the first sum at the right-hand side of inequality~\eqref{eq:regret-bound-hetero-4}. Following the corresponding argument in Theorem~\ref{thm:uniform-ub-2} up to~\eqref{bound11}, we obtain
\begin{align}\label{eq:regret-bound-hetero-7}
	\begin{aligned}
		&2\bar\scale\sum_{n\geq 2}\sum_{\ell\in E_n:\sigma(\ell)\not\in\J_{i_{\max}}}\sqrt{\frac{3}{\sum_{s=1}^{(n-1)\scale/\mu_{\max}} N_{\text{class of $\sigma(\ell)$},s}}}\\
		&\leq \sum_{i\in\I\setminus\{i_{\max}\}}4N_i\bar\scale\sum_{n=2}^{n_{i_{\lfloor N_i/2\rfloor}}}\sqrt{\frac{3\mu_{\max}}{(n-1)\scale N_i}}+ \sum_{i\in\I\setminus\{i_{\max}\}}8\barn\bar\scale\sum_{k=1}^{\lfloor N_i/2\rfloor}\sqrt{\frac{6\mu_{\max}}{(n_{i_k}-1)\scale N_i}}\\
		&=\sqrt{\frac{\mu_{\max}}{\mu_{\min}}\bar\scale}\left(\sum_{i\in\I\setminus\{i_{\max}\}}4N_i\sum_{n=2}^{n_{i_{\lfloor N_i/2\rfloor}}}\sqrt{\frac{3}{(n-1)N_i}}+ \sum_{i\in\I\setminus\{i_{\max}\}}8\barn\sum_{k=1}^{\lfloor N_i/2\rfloor}\sqrt{\frac{6}{(n_{i_k}-1) N_i}}\right)
	\end{aligned}
\end{align}
Combining~\eqref{eq:regret-bound-hetero-4},~\eqref{eq:regret-bound-hetero-6}, and~\eqref{eq:regret-bound-hetero-7}, the third term on the right-hand side of~\eqref{eq:regret-bound-hetero-2} can be bounded above by
\begin{align}\label{eq:regret-bound-hetero-8}
	\begin{aligned}
		&\sum_{n\geq 2}\sum_{\ell\in E_n}\left(d_{\sigma(\ell)}\lambda_{\sigma(\ell)}-d_{\sigma(n)}\lambda_{\sigma(n)}\right)\frac{\scale}{\lambda_{\sigma(\ell)}\lambda_{\sigma(n)}}\\
		&\leq \sqrt{\frac{\mu_{\max}}{\mu_{\min}}\bar\scale\log N\bar \scale}\left( \sum_{i\in\I\setminus\{i_{\max}\}}4 N_i\sum_{n=2}^{n_{i_{\lfloor N_i/2\rfloor}}}\sqrt{\frac{3}{(n-1)N_i}}+ \sum_{i\in\I}14\barn\sum_{k=1}^{\lfloor N_i/2\rfloor}\sqrt{\frac{6}{(n_{i_k}-1)N_i}}\right).
	\end{aligned}    
\end{align}
Following the argument up to~\eqref{final-bound5}, we can show that the two terms on the right-hand side of~\eqref{eq:regret-bound-hetero-8} are bounded above by
\begin{equation}\label{eq:regret-bound-hetero-9}
	\kappa\cdot \sqrt{\frac{\mu_{\max}}{\mu_{\min}}}\left(\sqrt{I N\bar N\bar \scale \log N\bar\scale}+\min\left\{I\barn,\sqrt{I}\barn\sqrt{\log \barn},\frac{\barn^{3/2}}{N_{\min}^{1/2}}\right\}\sqrt{\bar \scale\log N\bar\scale}\right)
\end{equation}
for some constant $\kappa>0$. Since $I\leq \barn+1$, we have $I\leq 2\barn$, which implies that~\eqref{eq:regret-bound-hetero-9} is bounded above by
\begin{equation}\label{eq:regret-bound-hetero-10}
	2\kappa\cdot \sqrt{\frac{\mu_{\max}}{\mu_{\min}}} N^{1/2}\barn\bar\scale^{1/2}(\log N \bar\scale)^{1/2}
\end{equation}
Finally, by~\eqref{eq:regret-bound-hetero-2}, we obtain
\begin{equation}\label{eq:regret-bound-hetero-11}
	\mathbb{E}[R^\pi\mid \text{clean event}]=O\left(\barn\Ts +4\barn\bar\scale\sqrt{\frac{3}{N_{\min}(\Ts+1)}\log N\bar\scale}+\sqrt{\frac{\mu_{\max}}{\mu_{\min}}}N^{1/2}\barn\bar\scale^{1/2}(\log N \bar\scale)^{1/2}\right).
\end{equation}
As in~\eqref{eq:Ts-choice-hetero}, we set $\Ts$ as $\Ts=\lfloor N_{\min}^{-1/3} \bar\scale^{2/3}\left(\log N\bar\scale\right)^{1/3}\rfloor$. If $\Ts\geq 1$, then 
\begin{equation}\label{eq:regret-bound-hetero-12}
	\mathbb{E}[R^\pi\mid \text{clean event}]=O\left(\barn\bar\scale^{2/3}(\log N\bar\scale)^{1/3}+\sqrt{\frac{\mu_{\max}}{\mu_{\min}}}N^{1/2}\barn\bar\scale^{1/2}(\log N \bar\scale)^{1/2}\right)
\end{equation}
since $N_{\min}\geq 1$. If $\Ts=0$, then it means that $N_{\min}^{-1/3}\bar \scale^{2/3}<1$, in which case, $\bar\scale /\sqrt{N_{\min}}<1$. Then it follows from~\eqref{eq:regret-bound-hetero-11} that~\eqref{eq:regret-bound-hetero-12} holds even when $\Ts=0$.

Since $N\geq 2$
We have previously argued that
$$
\mathbb{E}[R^\pi\mid \neg \text{clean event}]= MN\scale\leq N^2\bar\scale.
$$
Then it follows from~\eqref{clean-event:bound} that
$$
\mathbb{E}[R^\pi]= O\left(\barn\bar\scale^{2/3}(\log N\bar\scale)^{1/3}+\sqrt{\frac{\mu_{\max}}{\mu_{\min}}}N^{1/2}\barn\bar\scale^{1/2}(\log N \bar\scale)^{1/2}\right),
$$
as required.

\subsection{Proof of Theorem~\ref{thm:hetero-lb}}

In this section, we prove Theorem~\ref{thm:hetero-lb}. The proof of Theorem~\ref{thm:hetero-lb} is similar to that of Theorem~\ref{thm:uniform-lb}. Let $\I_1$ and $\I_2$ be some nonempty sets partitioning $\I$, the set of all classes. Let $M_1$ and $M_2$ be defined as
\begin{equation}\label{eq:Ms-hetero}
	M_1:=\sum_{i\in \I_1}\frac{N_i}{\mu_i},\quad M_2:=\sum_{i\in \I_2}\frac{N_i}{\mu_i},
\end{equation}
and assume that $M_1\geq M_2$. Let us consider the following family of two problem instances, with parameter $\epsilon>0$ to be decided later:
\begin{equation}\label{lb-instance-1-hetero}
	\mathcal{P}_1 =\begin{cases}
		c_i=\mu_{\min}/2\mu_i&\text{for each class}\ i\in \I_1\\
		c_i = (1+\epsilon)\mu_{\min}/2\mu_i& \text{for each class}\ i\in \I_2
	\end{cases}
\end{equation}
and
\begin{equation}\label{lb-instance-2-hetero}
	\mathcal{P}_2 =\begin{cases}
		c_i=\mu_{\min}/2\mu_i&\text{for each class}\ i\in \I_1\\
		c_i = (1-\epsilon)\mu_{\min}/2\mu_i& \text{for each class}\ i\in \I_2.
	\end{cases}
\end{equation} 
Moreover, we consider an additional problem instance 
$$\mathcal{P}_0 = \left\{c_i = {\mu_{\min}}/{2\mu_i}\quad \text{for every class}\ i\in\I\right..$$ 
For each job $n\in [N]$, define the $t$-round sample space $\Omega_n^t=\{0,\lambda_n\}^t$, where each outcome corresponds to a particular realization of the random cost values $X_{n,1},\ldots, X_{n,t}$ of job $n$ for the first $t$ time steps and $\lambda_n$ is the mean holding cost of job $n$, i.e., $\lambda_n$ is the mean of $X_{n,t}$ for $t\geq 1$. We focus on
$$\Omega = \prod_{n\in[N]}\Omega_n^t$$
so that the random costs of the $N$ jobs for the first $t$ time steps can be considered. For $k\in\{0,1,2\}$, we define distribution $\mathbb{P}_k$ on $\Omega$ as
$$
\mathbb{P}_k[A]= \mathbb{P}[ A \mid \mathcal{P}_k]\quad\text{for each}\ A\subseteq \Omega.
$$
Note that, for $k\in\{0,1,2\}$, $\mathbb{P}_k$ can be expressed as
$$
\mathbb{P}_k=\prod_{i\in[N],s\in[t]}\mathbb{P}_k^{n,s}
$$
where $\mathbb{P}_k^{n,s}$ is the distribution of the random cost of job $n$ at time step $t$. Based on the notion of KL-divergence and Lemma~\ref{KL:variant}, we obtain the following for each event $A\subseteq \Omega$:
\begin{equation}\label{eq:tv-3-hetero}
	2\left(\mathbb{P}_0[A]-\mathbb{P}_k[A]\right)^2\leq \mathrm{KL}(\mathbb{P}_0,\mathbb{P}_k)\leq \sum_{n\in[N]}\sum_{s\in[t]} \mathrm{KL}(\mathbb{P}_0^{n,s},\mathbb{P}_k^{n,s})\leq \mu_{\min}{t\epsilon^2}\sum_{i\in \I_2}\frac{N_i}{\mu_i}=\mu_{\min}M_2t\epsilon^2.
\end{equation}

\begin{theorem}\label{thm:first-lb-hetero}
	Fix any (randomized) scheduling algorithm $\pi$. Choose $k$ from $\{1,2\}$ uniformly at random, and run the algorithm on instance $\mathcal{P}_k$. Assume that $\mu_{\min}^{-1/3}\mu_{\max}^{-2/3}M_2^{-1/3}\scale^{2/3}\geq1$ where $M_2=\sum_{i\in \I_2}N_i/\mu_i$.
	Then
	$$
	\mathbb{E}[R^\pi]=\Omega\left(\mu_{\min}^{4/3}\mu_{\max}^{-2/3}M_2^{2/3}\bar \scale^{2/3}\right)$$
	where the expectation is taken over the choice of $k$ and the randomness in holding costs and the algorithm.
\end{theorem}
\begin{proof}
	We set
	$$T_0=\lfloor \mu_{\min}^{-1/3}\mu_{\max}^{-2/3}M_2^{-1/3}\scale^{2/3}\rfloor\quad\text{and}\quad\epsilon=
	\frac{\mu_{\min}^{-1/3}\mu_{\max}^{1/3}M_2^{-1/3}\scale^{-1/3}}{3}.
	$$
	As $S/\mu_{\max}\geq 1$, it is clear that $\epsilon\leq 1/3$, and therefore, $\epsilon$ is sufficiently small to apply Lemma~\ref{KL:variant}.
	Since $\mu_{\min}^{-1/3}\mu_{\max}^{-2/3}M_2^{-1/3}\scale^{2/3}\geq 1$, we have
	\begin{equation}\label{eq:first-T0-hetero}
		\frac{\mu_{\min}^{-1/3}\mu_{\max}^{-2/3}M_2^{-1/3}\scale^{2/3}}{2}\leq T_0   \leq \mu_{\min}^{-1/3}\mu_{\max}^{-2/3}M_2^{-1/3}\scale^{2/3}.
	\end{equation}
	Then we consider the $T_0$-round sample space $\Omega_n^{T_0}=\{0,1\}^{T_0}$ of each job $n\in[N]$, and we define $\Omega$ as before. Then it follows from~\eqref{eq:tv-3-hetero} that for any event $A\subseteq \Omega$, 
	\begin{equation}\label{eq:KLgap-1-hetero}
		\left|\mathbb{P}[ A \mid \mathcal{P}_0]-\mathbb{P}[ A \mid \mathcal{P}_k]\right|\leq \frac{1}{3}\quad\text{for}~k\in\{1,2\}.
	\end{equation}
	Let $B\subseteq\Omega$ be the event that algorithm $\pi$ chooses a job from some class in $\I_2$ in at least $T_0/2$ time slots until the end of the $T_0$th time slot. Then under $\neg B\subseteq \Omega$, algorithm $\pi$ chooses a job from some class in $\I_1$ in at least $T_0/2$ time slots until the end of the $T_0$th time slot. 
	If $\mathbb{P}[ B \mid \mathcal{P}_0]\geq {1}/{2}$, then by~\eqref{eq:KLgap-1-hetero}, we have $\mathbb{P}[ B \mid \mathcal{P}_k]\geq {1}/{6}$ for $k\in\{1,2\}$. In this case, we obtain
	\begin{equation}\label{eq:first-lb-case1-hetero}
		\mathbb{E}[R^\pi]\geq\mathbb{P}[\mathcal{P}_2]\cdot \mathbb{P}[B\mid \mathcal{P}_2]\cdot\mathbb{E}[ R^\pi \mid B, \mathcal{P}_2]\geq \frac{1}{12}\mathbb{E}[ R^\pi \mid B, \mathcal{P}_2].
	\end{equation}
	If not, we have $\mathbb{P}[ \neg B \mid \mathcal{P}_0]\geq {1}/{2}$, and therefore, $\mathbb{P}[ \neg B \mid \mathcal{P}_k]\geq {1}/{6}$ for $k\in\{1,2\}$ by~\eqref{eq:KLgap-1-hetero}. In this case, we similarly obtain
	\begin{equation}\label{eq:first-lb-case2-hetero}
		\mathbb{E}[R^\pi]\geq\frac{1}{12}\mathbb{E}[ R^\pi \mid \neg B, \mathcal{P}_1].
	\end{equation}
	By~\eqref{eq:first-lb-case1-hetero} and~\eqref{eq:first-lb-case2-hetero}, it is sufficient to bound the terms $\mathbb{E}[ R^\pi \mid B, \mathcal{P}_2]$ and $\mathbb{E}[ R^\pi \mid \neg B, \mathcal{P}_1]$.

	Let $\sigma:[N]\to[N]$ be the permutation of $[N]$ that gives the sequence of jobs completed by the algorithm. Next, let $T_n$ denote the number of time steps where job $n$ is processed by the scheduling algorithm during the period of the first $T_0$ time steps. Notice that $T_0\leq \scale$, so no job finishes until the $T_0$th time slot. This means that $T_0-\sum_{\ell=1}^n T_{\sigma(\ell)}$ time slots are used to serve jobs other than $\sigma(1),\ldots,\sigma(n)$, and therefore, $W_n\geq T_0-\sum_{\ell=1}^n T_{\sigma(\ell)}$. Then it follows from  Lemma~\ref{lemma:regret-1-hetero} that
	\begin{equation}\label{eq:lb-first-bound1-hetero}
		R^\pi\geq\sum_{n\in[N]}d_{\sigma(n)}\left(T_0 -\sum_{\ell=1}^{n}T_{\sigma(\ell)} \right)+ \sum_{n\in[N]}\sum_{\ell\in E_n}\left(d_{\sigma(\ell)}\lambda_{\sigma(\ell)}-d_{\sigma(n)}\lambda_{\sigma(n)}\right)\frac{\scale}{\lambda_{\sigma(\ell)}\lambda_{\sigma(n)}}.
	\end{equation}
	
	Consider the case where we are under the instance $\mathcal{P}_2$ and the event $B$. Let $n_2$ be the smallest number such that $\sigma(n_2)$ belongs to a class in $\I_2$. Then $d_{\sigma(n_2)}\lambda_{\sigma(n_2)}=(1-\epsilon)\mu_{\min}/2$ and $d_{\sigma(\ell)}\lambda_{\sigma(\ell)}=\mu_{\min}/2$ for all $\ell\in E_{n_2}$. Then it follows from~\eqref{eq:lb-first-bound1-hetero} that
	\begin{equation}\label{eq:lb-first-bound2-hetero'}
		R^\pi\geq\sum_{\ell\in E_{n_2}}\left(d_{\sigma(\ell)}\lambda_{\sigma(\ell)}-d_{\sigma(n_2)}\lambda_{\sigma(n_2)}\right)\frac{\scale}{\lambda_{\sigma(\ell)}\lambda_{\sigma(n_2)}}=\frac{\mu_{\min}\epsilon S}{2\lambda_{\sigma(n_2)}}\sum_{\ell\in E_{n_2}}\frac{1}{\lambda_{\sigma(\ell)}}.
	\end{equation}
	If $\sum_{\ell \in E_{n_2}} 1/\lambda_{\sigma(\ell)}\geq M_1/2$, then as $\lambda_{\sigma(n_2)}\leq \mu_{\max}$, we obtain from~\eqref{eq:lb-first-bound2-hetero'} that
	\begin{equation}\label{eq:lb-first-bound2-hetero}
		R^\pi\geq \frac{\mu_{\min}}{4\mu_{\max}}\epsilon S M_1=\frac{\mu_{\min}^{2/3}}{12\mu_{\max}^{2/3}} M_1M_2^{-1/3}S^{2/3}\geq \frac{\mu_{\min}^{4/3}}{12\mu_{\max}^{2/3}}M_2^{2/3}\bar\scale^{2/3}.
	\end{equation}
	If $\sum_{\ell \in E_{n_2}} 1/\lambda_{\sigma(\ell)}<M_1/2$, then as we are under $\mathcal{P}_2$ and $\sigma(n)$ for $n<n_2$ belongs to a class in $\mathcal{I}_1$ by our choice of $n_2$, it follows that
	\begin{equation}\label{eq:lb-first-bound3'}
		\sum_{n<n_2}d_{\sigma(n)}=\sum_{n<n_2}\frac{\mu_{\min}}{2\lambda_{\sigma(n)}}=\frac{\mu_{\min}}{2}\left(\sum_{i\in\mathcal{I}_1}\frac{N_i}{\mu_i}-\sum_{\ell \in E_{n_2}} \frac{1}{\lambda_{\sigma(\ell)}}\right)>\frac{\mu_{\min}}{4}M_1
	\end{equation}
	Moreover, since we are under the event $B$, 
	$\sum_{i\in \I_1}\sum_{n\in\J_i}T_{n}\leq{T_0}/{2}$.
	Note that for $n<n_2$, $\sigma(n)$ belongs to a class in $\mathcal{I}_1$ by the choice of $n_2$. This implies that for $n<n_2$,
	$$T_0-\sum_{\ell=1}^nT_{\sigma(\ell)}\geq T_0-\sum_{i\in \I_1}\sum_{n\in\J_i}T_{n}\geq \frac{T_0}{2}.$$
	Hence, from~\eqref{eq:lb-first-bound1-hetero} and~\eqref{eq:lb-first-bound3'}, we obtain
	\begin{equation}\label{eq:lb-first-bound3-hetero}
		R^\pi\geq \sum_{n<n_2}d_{\sigma(n)}\left(T_0-\sum_{\ell=1}^nT_{\sigma(\ell)}\right)\geq \frac{\mu_{\min}}{8}T_0 M_1\geq \frac{\mu_{\min}^{2/3}}{16\mu_{\max}^{2/3}}M_1 M_2^{-1/3}S^{2/3}\geq \frac{\mu_{\min}^{4/3}}{16\mu_{\max}^{2/3}}M_2^{2/3}\bar \scale^{2/3}.
	\end{equation}
	where the last inequality is from~\eqref{eq:first-T0-hetero}.
	Based on~\eqref{eq:lb-first-bound2-hetero} and~\eqref{eq:lb-first-bound3-hetero}, we get
	\begin{equation}\label{eq:lb-first-bound4-hetero}
		\mathbb{E}\left[R^\pi \mid B,\mathcal{P}_2\right]\geq \frac{\mu_{\min}^{4/3}}{16\mu_{\max}^{2/3}}M_2^{2/3}\bar \scale^{2/3}.
	\end{equation}
	
	Next assume that we are under the instance $\mathcal{P}_1$ and the event $\neg B$. Let ${n_1}$ be the smallest number such that $\sigma({n_1})$ belongs to a class in $\I_1$. Then it follows from~\eqref{eq:lb-first-bound1-hetero} that
	\begin{equation}\label{eq:lb-first-bound5-hetero'}
		R^\pi\geq\sum_{\ell\in E_{n_1}}\left(d_{\sigma(\ell)}\lambda_{\sigma(\ell)}-d_{\sigma(n_1)}\lambda_{\sigma(n_1)}\right)\frac{\scale}{\lambda_{\sigma(\ell)}\lambda_{\sigma(n_1)}}=\frac{\mu_{\min}\epsilon S}{2\lambda_{\sigma(n_1)}}\sum_{\ell\in E_{n_1}}\frac{1}{\lambda_{\sigma(\ell)}}.
	\end{equation}
	If $\sum_{\ell \in E_{n_1}} 1/\lambda_{\sigma(\ell)}\geq M_2/2$,
	\begin{equation}\label{eq:lb-first-bound5-hetero}
		R^\pi\geq\frac{\mu_{\min}}{4\mu_{\max}}\epsilon S M_2=\frac{\mu_{\min}^{2/3}}{12\mu_{\max}^{2/3}} M_2^{2/3}S^{2/3}\geq \frac{\mu_{\min}^{4/3}}{12\mu_{\max}^{2/3}}M_2^{2/3}\bar\scale^{2/3}.
	\end{equation}
	If $\sum_{\ell \in E_{n_1}} 1/\lambda_{\sigma(\ell)}<M_2/2$,
	then as we are under $\mathcal{P}_1$ and $\sigma(n)$ for $n<n_1$ belongs to a class in $\mathcal{I}_2$ by our choice of $n_1$, it follows that
	\begin{equation}\label{eq:lb-first-bound6'-hetero}
		\sum_{n<n_1}d_{\sigma(n)}=\sum_{n<n_1}\frac{\mu_{\min}}{2\lambda_{\sigma(n)}}=\frac{\mu_{\min}}{2}\left(\sum_{i\in\mathcal{I}_2}\frac{N_i}{\mu_i}-\sum_{\ell \in E_{n_1}} \frac{1}{\lambda_{\sigma(\ell)}}\right)>\frac{\mu_{\min}}{4}M_2.
	\end{equation}
	Furthermore, as we are under the event $\neg B$, 
	$\sum_{i\in \I_2}\sum_{n\in\J_i}T_{n}\leq{T_0}/{2}$.
	Note that for $n<n_1$,
	$$T_0-\sum_{\ell=1}^nT_{\sigma(\ell)}\geq T_0-\sum_{i\in \I_2}\sum_{n\J_i}T_{n}\geq \frac{T_0}{2}$$
	because $\sigma(n)$ for $n<n_1$ belongs to a class in $\mathcal{I}_2$ by the choice of $n_1$.
	Therefore, we obtain from~\eqref{eq:lb-first-bound1-hetero} that
	\begin{equation}\label{eq:lb-first-bound6-hetero}
		R^\pi\geq \sum_{n<n_1}d_{\sigma(n)}\left(T_0-\sum_{\ell=1}^nT_{\sigma(\ell)}\right)\geq \frac{\mu_{\min}}{8}T_0 M_2\geq\frac{\mu_{\min}^{4/3}}{16\mu_{\max}^{2/3}}M_2^{2/3}\bar \scale^{2/3}.
	\end{equation}
	where the last inequality is from~\eqref{eq:first-T0-hetero}.
	Based on~\eqref{eq:lb-first-bound5-hetero} and~\eqref{eq:lb-first-bound6-hetero},
	\begin{equation}\label{eq:lb-first-bound7-hetero}
		\mathbb{E}\left[R^\pi \mid \neg B,\mathcal{P}_1\right]\geq \frac{\mu_{\min}^{4/3}}{16\mu_{\max}^{2/3}}M_2^{2/3}\bar \scale^{2/3}.
	\end{equation}
	By~\eqref{eq:first-lb-case1-hetero},~\eqref{eq:first-lb-case2-hetero},~\eqref{eq:lb-first-bound4-hetero}, and~\eqref{eq:lb-first-bound7-hetero}, we have finally proved that
	$\mathbb{E}[R^\pi]=\Omega\left(\mu_{\min}^{4/3}\mu_{\max}^{-2/3}M_2^{2/3}\bar \scale^{2/3}\right)$,
	as required.
\end{proof}

We next provide the second lower bound. 

\begin{theorem}\label{thm:second-lb-hetero}
	Fix any (randomized) scheduling algorithm $\pi$. Choose $k$ from $\{1,2\}$ uniformly at random, and run the algorithm on instance $\mathcal{P}_k$. Let $M_1=\sum_{i\in \I_1}N_i/\mu_i$ and $M_2=\sum_{i\in \I_2}N_i/\mu_i$.
	Then
	$$
	\mathbb{E}[R^\pi]=\Omega\left(\mu_{\min}M_1^{1/2}M_2^{1/2}\bar \scale^{1/2}\right)$$
	where the expectation is taken over the choice of $k$ and the randomness in holding costs and the algorithm.
\end{theorem}
\begin{proof}
	Without loss of generality, assume that $M_1\geq M_2$. We set 
	$$\epsilon=\frac{\mu_{\min}^{-1/2}M_1^{-1/2}M_2^{-1/2}\scale^{-1/2}}{2}.$$
	We consider
	$$T_0=\lfloor\frac{M_1\scale}{2}\rfloor$$
	and the $T_0$-round sample space $\Omega_n^{T_0}=\{0,1\}^{T_0}$ of each job $n\in[N]$, and we define $\Omega$ as before. With our choice of $\epsilon$ and $T_0$, it follows from~\eqref{eq:tv-3-hetero} that for any event $A\subseteq \Omega$, 
	\begin{equation}\label{eq:KLgap-hetero}
		\left|\mathbb{P}[ A \mid \mathcal{P}_0]-\mathbb{P}[ A \mid \mathcal{P}_k]\right|\leq \frac{1}{4}\quad\text{for}~k\in\{1,2\}
	\end{equation}
	Let $B\subseteq\Omega$ be the event that algorithm $\pi$ chooses a job from classes in $\I_2$ in at least $M_2\scale/4$ time slots until the end of the $T_0$th time slot. Then under $\neg B\subseteq \Omega$, algorithm $\pi$ chooses a job from classes in $\I_2$ in at most $M_2\scale/4$ time slots until the end of the $T_0$th time slot. 
	If $\mathbb{P}[ B \mid \mathcal{P}_0]\geq {1}/{2}$, then by~\eqref{eq:KLgap-hetero}, we have $\mathbb{P}[ B \mid \mathcal{P}_k]\geq {1}/{4}$ for $k\in\{1,2\}$. If not, we have $\mathbb{P}[ \neg B \mid \mathcal{P}_0]\geq {1}/{2}$, and therefore, $\mathbb{P}[ \neg B \mid \mathcal{P}_k]\geq {1}/{4}$ for $k\in\{1,2\}$ by~\eqref{eq:KLgap-hetero}. Therefore, we know that one of $\mathbb{P}[ B \mid \mathcal{P}_k]\geq {1}/{4}$ and $\mathbb{P}[ \neg B \mid \mathcal{P}_k]\geq {1}/{4}$ must hold, implying in turn that
	\begin{equation}\label{eq:second-lb-cases-hetero}
		\mathbb{E}[R^\pi]\geq\frac{1}{8}\mathbb{E}[ R^\pi \mid B, \mathcal{P}_2]~~\text{or}~~\mathbb{E}[R^\pi]\geq\frac{1}{8}\mathbb{E}[ R^\pi \mid \neg B, \mathcal{P}_1]
	\end{equation}
	since $\mathbb{P}[\mathcal{P}_1]=\mathbb{P}[\mathcal{P}_2]=1/2$.
	
	We first consider the case where $M_1=1$ and $\scale =1$. Since $M_1\geq M_2$, we also have $M_2=1$. In this case, there are precisely 2 jobs in the system, and the service time of each job is just 1. This means that $\mu_1=\mu_2=\mu_{\min}=1$. Under the event $B$ and instance $\mathcal{P}_2$, the algorithm serves the job of mean holding cost $(1-\epsilon)/2$ and then the job of mean holding cost $1/2$ next, but the optimal sequence is the opposite. Hence, we obtain
	$$\mathbb{E}[ R^\pi \mid B, \mathcal{P}_2]=\left(\frac{1-\epsilon}{2} + \frac{1}{2}\cdot 2\right)-\left(\frac{1}{2} + \frac{1-\epsilon}{2}\cdot 2\right)=\frac{\epsilon}{2}=\frac{1}{4}.$$
	Similarly, under the event $\neg B$ and instance $\mathcal{P}_1$, the algorithm serves the job of mean holding cost $1/2$ and then the job of mean holding cost $(1+\epsilon)/2$ next. Therefore,
	$$\mathbb{E}[ R^\pi \mid \neg B, \mathcal{P}_1]=\left(\frac{1}{2} + \frac{1+\epsilon}{2}\cdot 2\right)-\left(\frac{1+\epsilon}{2} + \frac{1}{2}\cdot 2\right)=\frac{\epsilon}{2}=\frac{1}{4}.$$
	Since $\mu_{\min}=M_1=M_2=\scale =1$, we have $\mu_{\min}M_1^{1/2}M_2^{1/2} \scale^{1/2}=1$. Thus we may assume that $M_1\geq 2$ or $\scale\geq 2$, so $M_1 \scale\geq 2$. This means that
	\begin{equation}\label{eq:second-lb-T0-hetero}
		\frac{1}{3}M_1\scale \leq T_0 \leq \frac{1}{2} M_1\scale.
	\end{equation}

	Consider the case where we are under the instance $\mathcal{P}_2$ and the event $B$. Let $n_1$ be the smallest number such that  $\sigma(n_1)$ belongs to a class in $\mathcal{I}_1$ and 
	\begin{equation}\label{eq:second-lb-condition1-hetero}
		\sum_{n\leq n_1:\sigma(n)\text{ is of a class in }\mathcal{I}_1}\frac{1}{\lambda_{\sigma(n)}}>\frac{M_1}{2}.
	\end{equation}
	By our choice of $n_1$, 
	$$
	\sum_{n<n_1:\sigma(n)\text{ is of a class in }\mathcal{I}_1}\frac{1}{\lambda_{\sigma(n)}}\leq\frac{M_1}{2},$$
	implying in turn that
	\begin{equation}\label{eq:second-lb-condition1'-hetero}
		\sum_{\ell\geq n_1:\sigma(n)\text{ is of a class in }\mathcal{I}_1}\frac{1}{\lambda_{\sigma(n)}}= M_1-\sum_{n<n_1:\sigma(n)\text{ is of a class in }\mathcal{I}_1}\frac{1}{\lambda_{\sigma(n)}}\geq \frac{M_1}{2},
	\end{equation}
	Then we obtain
	\begin{align}\label{eq:lb-second-bound2-hetero}
		\begin{aligned}
			R^{\pi}&\geq \sum_{n<n_1:\sigma(n)\text{ is of a class in }\mathcal{I}_2}\sum_{\ell\in E_n}\left(d_{\sigma(\ell)}\lambda_{\sigma(\ell)}-d_{\sigma(n)}\lambda_{\sigma(n)}\right)\frac{\scale}{\lambda_{\sigma(\ell)}\lambda_{\sigma(n)}}\\
			&\geq \sum_{n<n_1:\sigma(n)\text{ is of a class in }\mathcal{I}_2}\frac{\mu_{\min}\epsilon S}{2\lambda_{\sigma(n)}}\sum_{\ell\in E_n}\frac{1}{\lambda_{\sigma(\ell)}}\\
			&\geq \sum_{n<n_1:\sigma(n)\text{ is of a class in }\mathcal{I}_2}\frac{\mu_{\min}\epsilon S}{2\lambda_{\sigma(n)}}\sum_{\ell\geq n_1:\sigma(\ell)\text{ is of a class in }\mathcal{I}_1}\frac{1}{\lambda_{\sigma(\ell)}}\\
			&\geq \sum_{n<n_1:\sigma(n)\text{ is of a class in }\mathcal{I}_2}\frac{\mu_{\min}\epsilon S}{2\lambda_{\sigma(n)}}\cdot \frac{M_1}{2}\\
			&=\frac{\mu_{\min}\epsilon S M_1}{4}\sum_{n<n_1:\sigma(n)\text{ is of a class in }\mathcal{I}_2}\frac{1}{\lambda_{\sigma(n)}}
		\end{aligned}
	\end{align}
	where the first inequality is from Lemma~\ref{lemma:regret-1-hetero} and the last inequality is implied by~\eqref{eq:second-lb-condition1'-hetero}. If $$\sum_{n<n_1:\sigma(n)\text{ is of a class in }\mathcal{I}_2}\frac{1}{\lambda_{\sigma(n)}}\geq\frac{M_2}{8},$$
	then it follows from~\eqref{eq:lb-second-bound2-hetero} that
	\begin{equation}\label{eq:lb-second-bound3-hetero}
		R^{\pi}\geq \frac{\mu_{\min}\epsilon}{32}M_1M_2S=\frac{\mu_{\min}^{1/2}}{64}M_1^{1/2}M_2^{1/2} S^{1/2}.
	\end{equation}
	If $$\sum_{n<n_1:\sigma(n)\text{ is of a class in }\mathcal{I}_2}\frac{1}{\lambda_{\sigma(n)}}<\frac{M_2}{8},$$
	then as we are under event $B$, at least $M_2S/8$ time slots are used to serve jobs other than $\sigma(1),\ldots, \sigma(n_1)$. This means that for $n\leq n_1$, we have $W_n\geq N_2S/8$, which implies that 
	\begin{equation}\label{eq:lb-second-bound4-hetero}
		R^{\pi}\geq \sum_{n\leq n_1:\sigma(n)\text{ is of a class in } \mathcal{I}_1}d_{\sigma(n)}W_n\geq \sum_{n\leq n_1:\sigma(n)\text{ is of a class in } \mathcal{I}_1}\frac{\mu_{\min}}{2\lambda_{\sigma(n)}}\cdot \frac{M_2S}{8}\geq \frac{\mu_{\min}}{32}M_1M_2S
	\end{equation}
	where the last inequality is due to~\eqref{eq:second-lb-condition1-hetero}.
	Based on~\eqref{eq:lb-second-bound3-hetero} and~\eqref{eq:lb-second-bound4-hetero}, we obtain
	\begin{equation}\label{eq:lb-second-bound5-hetero}
		\mathbb{E}[ R^\pi \mid B, \mathcal{P}_2]\geq \frac{\mu_{\min}^{1/2}}{64}\cdot M_1^{1/2}M_2^{1/2}\scale^{1/2}=\frac{\mu_{\min}}{64}M_1^{1/2}M_2^{1/2} \bar S^{1/2}
	\end{equation}
	since $\mu_{\min},M_1,M_2,S\geq 1$.
	
	Next, assume that we are under the instance $\mathcal{P}_1$ and the event $\neg B$. Let $n_2$ be the number such that $\sigma(n_2)$ is of a class in $\I_2$ and \begin{equation}\label{eq:second-lb-condition2-hetero}
		\sum_{n\leq n_2:\sigma(n)\text{ is of a class in }\mathcal{I}_2}\frac{1}{\lambda_{\sigma(n)}}>\frac{M_2}{2}.
	\end{equation}
	By our choice of $n_2$, 
	$$
	\sum_{n<n_2:\sigma(n)\text{ is of a class in }\mathcal{I}_1}\frac{1}{\lambda_{\sigma(n)}}\leq\frac{M_2}{2},$$
	implying in turn that
	\begin{equation}\label{eq:second-lb-condition2'-hetero}
		\sum_{\ell\geq n_2:\sigma(n)\text{ is of a class in }\mathcal{I}_2}\frac{1}{\lambda_{\sigma(n)}}= M_2-\sum_{n<n_2:\sigma(n)\text{ is of a class in }\mathcal{I}_2}\frac{1}{\lambda_{\sigma(n)}}\geq \frac{M_2}{2},
	\end{equation}
	Then we deduce that
	\begin{align}\label{eq:lb-second-bound6-hetero}
		\begin{aligned}
			R^{\pi}&\geq \sum_{n<n_2:\sigma(n)\text{ is of a class in }\mathcal{I}_1}\sum_{\ell\in E_n}\left(d_{\sigma(\ell)}\lambda_{\sigma(\ell)}-d_{\sigma(n)}\lambda_{\sigma(n)}\right)\frac{\scale}{\lambda_{\sigma(\ell)}\lambda_{\sigma(n)}}\\
			&\geq \sum_{n<n_2:\sigma(n)\text{ is of a class in }\mathcal{I}_1}\frac{\mu_{\min}\epsilon S}{2\lambda_{\sigma(n)}}\sum_{\ell\in E_n}\frac{1}{\lambda_{\sigma(\ell)}}\\
			&\geq \sum_{n<n_2:\sigma(n)\text{ is of a class in }\mathcal{I}_1}\frac{\mu_{\min}\epsilon S}{2\lambda_{\sigma(n)}}\sum_{\ell\geq n_2:\sigma(\ell)\text{ is of a class in }\mathcal{I}_2}\frac{1}{\lambda_{\sigma(\ell)}}\\
			&\geq \sum_{n<n_2:\sigma(n)\text{ is of a class in }\mathcal{I}_1}\frac{\mu_{\min}\epsilon S}{2\lambda_{\sigma(n)}}\cdot \frac{M_2}{2}\\
			&=\frac{\mu_{\min}\epsilon S M_2}{4}\sum_{n<n_2:\sigma(n)\text{ is of a class in }\mathcal{I}_1}\frac{1}{\lambda_{\sigma(n)}}
		\end{aligned}
	\end{align}
	where the first inequality is from Lemma~\ref{lemma:regret-1-hetero} and the last inequality is due to~\eqref{eq:second-lb-condition2'-hetero}. If $$\sum_{n<n_2:\sigma(n)\text{ is of a class in }\mathcal{I}_1}\frac{1}{\lambda_{\sigma(n)}}\geq\frac{M_1}{24},$$
	then it follows from~\eqref{eq:lb-second-bound6-hetero} that
	\begin{equation}\label{eq:lb-second-bound7-hetero}
		R^{\pi}\geq \frac{\mu_{\min}\epsilon}{96}M_1M_2S=\frac{\mu_{\min}^{1/2}}{192}M_1^{1/2}M_2^{1/2} S^{1/2}.
	\end{equation}
	If $$\sum_{n<n_2:\sigma(n)\text{ is of a class in }\mathcal{I}_1}\frac{1}{\lambda_{\sigma(n)}}<\frac{M_1}{24},$$
	then less than $M_1\scale / 24$ time slots are used to complete the jobs from $\I_1$ that are sequenced before job $\sigma(n_2)$. However, we are under the event $\neg B$, so at least $T_0-M_2\scale /4$ time slots are allocated for serving jobs from $\I_1$. Here, we know that
	$$T_0-\frac{M_2\scale}{4}\geq \frac{M_1\scale}{3} - \frac{M_2\scale}{4}\geq \frac{M_1\scale}{12}$$
	where the first inequality is from~\eqref{eq:second-lb-T0-hetero}.
	This in turn implies that at least $M_1\scale /12-M_1\scale / 24=M_1\scale/24$ time slots are used to serve jobs other than the ones before $\sigma(n_2)$. Thus, it follows that $W_{n_2}\geq M_1\scale/24$, which implies that
	\begin{equation}\label{eq:lb-second-bound8-hetero}
		R^{\pi}\geq \sum_{n\leq n_2:\sigma(n)\text{ is of a class in } \mathcal{I}_2}d_{\sigma(n)}W_n\geq \sum_{n\leq n_2:\sigma(n)\text{ is of a class in } \mathcal{I}_2}\frac{\mu_{\min}}{2\lambda_{\sigma(n)}}\cdot \frac{M_1S}{24}\geq \frac{\mu_{\min}}{96}M_1M_2S
	\end{equation}
	where the last inequality is due to~\eqref{eq:second-lb-condition2-hetero}.
	Based on~\eqref{eq:lb-second-bound3-hetero} and~\eqref{eq:lb-second-bound4-hetero}, we obtain
	\begin{equation}\label{eq:lb-second-bound9-hetero}
		\mathbb{E}[ R^\pi \mid \neg B, \mathcal{P}_1]\geq \frac{\mu_{\min}^{1/2}}{192}\cdot M_1^{1/2}M_2^{1/2}\scale^{1/2}=\frac{\mu_{\min}}{192}M_1^{1/2}M_2^{1/2} \bar S^{1/2}
	\end{equation}
	since $\mu_{\min},M_1,M_2,S\geq 1$.
	
	Combining~\eqref{eq:second-lb-cases-hetero}, ~\eqref{eq:lb-second-bound5-hetero}, and~\eqref{eq:lb-second-bound9-hetero}, it follows that $\mathbb{E}[ R^\pi]=\Omega\left(\mu_{\min}M_1^{1/2}M_2^{1/2}\bar \scale^{1/2}\right)$, as required.
\end{proof}

We have proved that
$\Omega\left(\mu_{\min}^{4/3}\mu_{\max}^{-2/3}M_2^{2/3}\bar \scale^{2/3}\right)$
is a lower bound on the expected regret of any (randomized) scheduling algorithm, under the condition that $\mu_{\min}^{-1/3}\mu_{\max}^{-2/3}M_2^{-1/3}\scale^{2/3}\geq 1$. Moreover, $\Omega\left(\mu_{\min}M_1^{1/2}M_2^{1/2}\bar \scale^{1/2}\right)$
is a lower bound on the expected regret by Theorem~\ref{thm:second-lb-hetero}. Let us argue that the second lower bound is stronger than the first one if $\mu_{\min}^{-1/3}\mu_{\max}^{-2/3}M_2^{-1/3}\scale^{2/3}<1$ as we can check from
$$\frac{\mu_{\min}M_1^{1/2}M_2^{1/2}\bar \scale^{1/2}}{\mu_{\min}^{4/3}\mu_{\max}^{-2/3}M_2^{2/3}\bar \scale^{2/3}}= \mu_{\min}^{-1/6}\mu_{\max}^{2/3}M_1^{1/2}M_2^{-1/6}\bar\scale^{-1/6}\geq \mu_{\min}^{-1/6}\mu_{\max}^{2/3}M_2^{1/3}\scale^{-1/6}>\mu_{\min}^{-1/2}S^{1/2}=\bar S^{1/2}$$
where the first inequality is because $M_1\geq M_2$ and the second inequality follows from $\mu_{\min}^{-1/3}\mu_{\max}^{-2/3}M_2^{-1/3}\scale^{2/3}<1$. Therefore, both $\Omega\left(\mu_{\min}^{4/3}\mu_{\max}^{-2/3}M_2^{2/3}\scale^{2/3}\right)$ and $\Omega\left(\mu_{\min}M_1^{1/2}M_2^{1/2}\scale^{1/2}\right)$ are correct lower bounds on the expected regret.

By Lemma~\ref{lemma:lb-best-partition}, there always exists a partition $(\I_1,\I_2)$ such that $\sum_{i\in \I_2}N_i=\Omega(\barn)$ and thus $\sum_{i\in \I_2}N_i/\mu_i=\Omega(\barn/\mu_{\max})$ Moreover, it also implies that there is a partition $(\I_1,\I_2)$ such that $\left(\sum_{i\in \I_1}N_i\right)\left(\sum_{i\in \I_2}N_i\right)=\Omega(N\barn)$ and thus $\left(\sum_{i\in \I_1}N_i/\mu_i\right)\left(\sum_{i\in \I_2}N_i/\mu_i\right)=\Omega(N\barn/\mu_{\max}^2)$. As a result, it follows that 
$$\Omega\left(\max\left\{(\mu_{\max}/\mu_{\min})^{-4/3}\barn^{2/3}\bar\scale^{2/3},\ (\mu_{\max}/\mu_{\min})^{-1}N^{1/2}\barn^{1/2}\bar\scale^{1/2}\right\}\right)$$
is a correct lower bound on the expected regret, as required.

\subsection{Proof of Theorem~\ref{instance:N}}

As in the proof of Theorem~\ref{thm:hetero-ub}, if the clean event does not hold, then $R^\pi\leq MN\scale\leq N^2\bar\scale$. 

Now assume that the clean event holds. Let us consider the case when $\sigma(1)$ is in class 1. Let $\Ts_1$ denote the number of time slots where job $\sigma(1)$ is selected during the preemption period. Then $W_n\leq \Ts-\Ts_1$ for all $n\in[N]$. As Lemma~\ref{lemma:firstphase} holds even when $\mu_1,\ldots,\mu_I$ are heterogeneous, it follows from Lemmas~\ref{lemma:regret-1-hetero} and~\ref{lemma:firstphase} that
\begin{equation}\label{eq:instance-1}
	R^\pi\leq \sum_{i\in [N]\setminus\{i_{\max}\}}N_i(\Ts-\Ts_1)+\sum_{n\in[N]}\sum_{\ell\in E_n}\left(d_{\sigma(\ell)}\lambda_{\sigma(\ell)}-d_{\sigma(n)}\lambda_{\sigma(n)}\right)\frac{\scale}{\lambda_{\sigma(\ell)}\lambda_{\sigma(n)}}.
\end{equation}
To bound the first term on the right-hand side of~\eqref{eq:instance-1}, we will obtain an upper bound on $\Ts-\Ts_1$. Let $t$ be the last time slot where job $\sigma(1)$ is not served among the first $\Ts+1$ time slots. Since job $\sigma(1)$ is of class 1 and is chosen at time $\Ts+1$, $ \Ts-\Ts_1\leq t\leq \Ts$. Let $i$ be the class of the job chosen at time $t$, i.e., $\hat c_{1,t}\mu_1\leq\hat c_{i,t}\mu_i$ while $c_1\mu_1\geq c_i\mu_i$. Since $t\leq \Ts+1$, we have $t\leq \scale/\mu_{\max}$ which means that $N_{i,t}=N_i$ and $N_{1,t}=N_1$. Then, by~\eqref{clean-event:gap},
\begin{equation}\label{eq:instance-2}
	t\leq \frac{(\mu_1+\mu_i)^2}{(c_1\mu_1-c_i\mu_i)^2}\cdot\frac{3}{N_{\min}}\log N\bar\scale\leq \frac{3}{{N_{\min}}\Delta^2}\log N\bar\scale.
\end{equation}
As $\Ts-\Ts_1\leq t$, it follows from~\eqref{eq:instance-1} and~\eqref{eq:instance-2} that
\begin{equation}\label{eq:instance-3}
	R^\pi\leq \frac{3\barn}{{N_{\min}}\Delta^2}\log N\bar\scale+\sum_{n\in[N]}\sum_{\ell\in E_n}\left(d_{\sigma(\ell)}\lambda_{\sigma(\ell)}-d_{\sigma(n)}\lambda_{\sigma(n)}\right)\frac{\scale}{\lambda_{\sigma(\ell)}\lambda_{\sigma(n)}}.
\end{equation}

Next we consider the case when $\sigma(1)$ is not in class 1. Let $i$ be the class of job $\sigma(1)$. Then at time $\Ts_1$, job $\sigma(1)$ is chosen instead of class 1 jobs for, which means that $\hat c_{i,\Ts+1}\mu_{i}\geq \hat c_{1,\Ts+1}\mu_{1}$. Then~\eqref{clean-event:gap} implies that
\begin{equation}\label{eq:instance-4}
	\Ts+1\leq \frac{(\mu_1 +\mu_i)^2}{(c_1\mu_1-c_i\mu_i)^2}\cdot\frac{3}{N_{\min}}\log N\bar\scale\leq\frac{3}{{N_{\min}}\Delta^2}\log N\bar\scale
\end{equation}
because $\Ts+1\leq \scale/\mu_{\max}$ and thus $N_{1,\Ts+1}=N_1\geq N_{\min}$ and $N_{i,\Ts+1}=N_i\geq N_{\min}$. Since~\eqref{eq:regret-bound-hetero} holds,~\eqref{eq:instance-4} implies that~\eqref{eq:instance-3} holds even for the case when $\sigma(1)$ is not in class 1.

Next we consider the second term on the right-hand side of~\eqref{eq:instance-3}. We start by bounding the terms with $n=1$. If $\sigma(1)$ is of class 1, then $E_1$ is empty, so the corresponding sum equals 0. Thus we may assume that $\sigma(1)$ is not in class 1. For this case, we argued earlier that $\Ts+1$ can be bounded as in~\eqref{eq:instance-4}. Recall that $\Ts$ is given in~\eqref{eq:Ts-choice-hetero}. Assume that $\Ts\geq 1$. Then $\Ts=C_1 N_{\min}^{-1/3}\bar \scale^{2/3}\left(\log N\bar\scale\right)^{1/3}$ for some constant $C_1$. Moreover, note that $\hat d_{\sigma(1),\Ts+1}\lambda_{\sigma(1)}\geq\hat d_{\sigma(\ell),\Ts+1}\lambda_{\sigma(\ell)}$ for $\ell\in E_1$, so by~\eqref{clean-event:gap},
\begin{equation}\label{eq:instance-4'}
	\Ts+1\leq \frac{(\lambda_{\sigma(\ell)} +\lambda_{\sigma(1)})^2}{\left(d_{\sigma(\ell)}\lambda_{\sigma(\ell)}-d_{\sigma(1)}\lambda_{\sigma(1)}\right)^2}\cdot\frac{3}{N_{\min}}\log N\bar\scale
\end{equation}
for any $\ell\in E_1$.~\eqref{eq:instance-4'} implies that
\begin{equation}\label{eq:instance-5}
	\bar\scale\leq \frac{(\lambda_{\sigma(\ell)} +\lambda_{\sigma(1)})^3}{\left(d_{\sigma(\ell)}\lambda_{\sigma(\ell)}-d_{\sigma(1)}\lambda_{\sigma(1)}\right)^3}\cdot\frac{C_2}{N_{\min}}\log N\bar\scale
\end{equation}
for any $\ell\in E_1$ for some constant $C_2$.
In that case,
\begin{align}\label{eq:instance-6}
	\begin{aligned}
		&\sum_{\ell\in E_1}\left(d_{\sigma(\ell)}\lambda_{\sigma(\ell)}-d_{\sigma(1)}\lambda_{\sigma(1)}\right)\frac{\scale}{\lambda_{\sigma(\ell)}\lambda_{\sigma(1)}}\\
		&=\sum_{\ell\in E_1:\text{$\sigma(\ell)$ and $ \sigma(1)$ are in different classes}}\left(d_{\sigma(\ell)}\lambda_{\sigma(\ell)}-d_{\sigma(1)}\lambda_{\sigma(1)}\right)\frac{\scale}{\lambda_{\sigma(\ell)}\lambda_{\sigma(1)}}\\
		&\leq\sum_{\ell\in E_1:\text{$\sigma(\ell)$ and $ \sigma(1)$ are in different classes}}
		\frac{(\lambda_{\sigma(\ell)} +\lambda_{\sigma(1)})^2}{\left(d_{\sigma(\ell)}\lambda_{\sigma(\ell)}-d_{\sigma(1)}\lambda_{\sigma(1)}\right)^2}\cdot \frac{(\lambda_{\sigma(\ell)}+\lambda_{\sigma(1)})\mu_{\min}}{\lambda_{\sigma(\ell)}\lambda_{\sigma(1)}}\cdot\frac{C_2}{N_{\min}}\log N\bar\scale\\
		&\leq\sum_{\ell\in E_1:\text{$\sigma(\ell)$ and $ \sigma(1)$ are in different classes}} \frac{1}{\Delta^2}\cdot \frac{2C_2}{N_{\min}}\log N\bar\scale\\
		&\leq \barn\cdot \frac{1}{\Delta^2}\cdot \frac{2C_2}{N_{\min}}\log N\bar\scale
	\end{aligned}
\end{align}
where the first inequality follows from~\eqref{eq:instance-5}, the second inequality is because $(\lambda_{\sigma(\ell)}+\lambda_{\sigma(1)})/\lambda_{\sigma(\ell)}\lambda_{\sigma(1)}\leq 2/\mu_{\min}$, and the thrid inequality is due to Lemma~\ref{lemma:largest}. If $\Ts=0$, then $N_{\min}^{-1/3}\bar \scale^{2/3}\left(\log N\bar\scale\right)^{1/3}\leq 1$, in which case, 
$$N_{\min}^{-1/3}\bar \scale^{2/3}\left(\log N\bar\scale\right)^{1/3}\leq \Ts+1\leq \frac{(\lambda_{\sigma(\ell)} +\lambda_{\sigma(1)})^2}{\left(d_{\sigma(\ell)}\lambda_{\sigma(\ell)}-d_{\sigma(1)}\lambda_{\sigma(1)}\right)^2}\cdot\frac{3}{N_{\min}}\log N\bar\scale$$
by~\eqref{eq:instance-4'}. Then we can similarly argue that~\eqref{eq:instance-5} holds for some constant $C_2$. Therefore, for some sufficiently large constant $C_2$,~\eqref{eq:instance-6} holds even when $\Ts=0$.

Next we consider and bound 
$$\sum_{\ell\in E_n}\left(d_{\sigma(\ell)}\lambda_{\sigma(\ell)}-d_{\sigma(n)}\lambda_{\sigma(n)}\right)\frac{\scale}{\lambda_{\sigma(\ell)}\lambda_{\sigma(n)}}$$ 
for $n\geq 2$. If $\sigma(n)$ is of class 1Let $t$ be the moment when job $\sigma(n)$ is chosen. Since $\sigma(n)$ is selected after jobs $\sigma(1),\ldots,\sigma(n-1)$ are completed, it follows that $t\geq (i-1)\scale/\mu_{\max}$ as the service time of each job is at least $\scale/\mu_{\max}$. Hence, by~\eqref{clean-event:gap}, we have $$(n-1)\frac{\scale}{\mu_{\max}}\leq \frac{3(\lambda_{\lambda(\ell)}+\lambda_{\sigma(n)})^2}{(d_{\sigma(\ell)}\lambda_{\sigma(\ell)}-d_{\sigma(n)}\lambda_{\sigma(n)})^2}\log N\bar\scale,$$ 
implying in turn that
\begin{align*}
	&\sum_{\ell\in E_n}\left(d_{\sigma(\ell)}\lambda_{\sigma(\ell)}-d_{\sigma(n)}\lambda_{\sigma(n)}\right)\frac{\scale}{\lambda_{\sigma(\ell)}\lambda_{\sigma(n)}}\\
	&=\sum_{\ell\in E_n:\text{$\sigma(\ell)$ and $\sigma(n)$ are in different classes}}\left(d_{\sigma(\ell)}\lambda_{\sigma(\ell)}-d_{\sigma(n)}\lambda_{\sigma(n)}\right)\frac{\scale}{\lambda_{\sigma(\ell)}\lambda_{\sigma(n)}}\\
	&\leq \sum_{\ell\in E_n:\text{$\sigma(\ell)$ and $\sigma(n)$ are in different classes}}\frac{1}{n-1}\cdot \frac{\lambda_{\sigma(\ell)}+\lambda_{\sigma(n)}}{d_{\sigma(\ell)}\lambda_{\sigma(\ell)}-d_{\sigma(n)}\lambda_{\sigma(n)}}\cdot\frac{6\mu_{\max}}{\mu_{\min}}\cdot \log N\bar\scale\\
	&\leq\frac{\barn}{n-1}\cdot \frac{\lambda_{\sigma(\ell)}+\lambda_{\sigma(n)}}{d_{\sigma(\ell)}\lambda_{\sigma(\ell)}-d_{\sigma(n)}\lambda_{\sigma(n)}}\cdot\frac{6\mu_{\max}}{\mu_{\min}}\cdot \log N\bar\scale
\end{align*}
where the last inequality follows from Lemma~\ref{lemma:largest}.
Therefore,
\begin{align*}
	\begin{aligned}
		\sum_{n\geq 2}\sum_{\ell\in E_n}\left(d_{\sigma(\ell)}\lambda_{\sigma(\ell)}-d_{\sigma(n)}\lambda_{\sigma(n)}\right)\frac{\scale}{\lambda_{\sigma(\ell)}\lambda_{\sigma(n)}} &\leq \sum_{n\geq2}\frac{\barn}{n-1}\cdot\frac{1}{\Delta}\cdot\frac{6\mu_{\max}}{\mu_{\min}}\cdot \log N\bar\scale\\
		&=O\left(\frac{\mu_{\max}}{\mu_{\min}}\cdot \frac{1}{\Delta}\cdot \barn \log N\log  N\bar \scale\right).
	\end{aligned}
\end{align*}
Therefore, we have just proved the claim.

\subsection{Proof of Theorem~\ref{thm:stochastic}}

As before, we assume that $c_1\mu_1\geq c_2\mu_2\geq \cdots \geq c_I\mu_I$ and order the $N$ jobs from $1$ to $N$ so that jobs $1+\sum_{j\in[i-1]}N_j,\ldots, \sum_{j\in[i]}N_j$ belong to class $i$. Moreover, let $d_n$ denote the mean per-time holding cost of job $n\in[N]$. Then, if job $n$ is of class $i$, then we have $d_n=c_i$. Moreover, we introduce notation $\hat d_{n,t}$ for $n\in[N]$ and $t\geq1$ which is equivalent to $\hat c_{i,t}$ assuming that job $n$ is of class $i$. Let $\sigma:[N]\rightarrow [N]$ be the permutation of $[N]$ that corresponds to the sequence of jobs completed by Algorithm~\ref{preempt-then-nonpreemptive-refined}. Then the order $\sigma$ depends on the random holding costs and stochastic service times of the $N$ jobs.

Let $Y_n$ for denote the stochastic service time of job $n$ for $n\in[N]$. As before, let $W_n$ denote the number of time slots in which Algorithm~\ref{preempt-then-nonpreemptive-refined} serves a job other than $\sigma(1),\ldots,\sigma(n)$ until job $\sigma(n)$ is completed. Then the (random) completion time of job $\sigma(n)$ is given by
$W_n+\sum_{\ell\in[n]}Y_{\sigma(\ell)}$, and therefore, the total completion time equals
\begin{equation}\label{eq:stochastic-1}
	C^{\pi}=\sum_{n\in[N]}d_{\sigma(n)}W_n + \sum_{n\in[N]}d_{\sigma(n)}\sum_{\ell\in[n]}Y_{\sigma(\ell)}.
\end{equation}
Let $t$ be some time slot in which Algorithm~\ref{preempt-then-nonpreemptive-refined} selects a job other than $\sigma(1),\ldots,\sigma(n)$ while job $\sigma(n)$ still waits to be served. Let $k$ be the largest index among $1,\ldots,n$ such that job $\sigma(k)$ finishes until the preemption phase is over. Note that after the preemption phase, Algorithm~\ref{preempt-then-nonpreemptive-refined} gives service to only the jobs $\sigma(k+1),\ldots,\sigma(n)$. Therefore, the time slot $t$ is within the preemption phase, so $t\leq\Ts$. This implies that $W_n\leq \Ts$. Moreover, if $\sigma(n)$ is in class $i_{\max}$ and such time slot $t$ exists, then a job whose class is not $i_{\max}$ gets service at time $t$. However, this implies that the class of the job served at $t$ is in priority class $\mathcal{P}$, and therefore, Algorithm~\ref{preempt-then-nonpreemptive-refined} completes the job before job $\sigma(n)$. This means that if job $\sigma(n)$ is in class $i_{\max}$, no such $t$ exists and thus $W_n=0$. Hence, it follows from~\eqref{eq:stochastic-1} that
\begin{equation}\label{eq:stochastic-2}
	C^{\pi}\leq \barn \tau + \sum_{n\in[N]}d_{\sigma(n)}\sum_{\ell\in[n]}Y_{\sigma(\ell)}.
\end{equation}

Note that once permutation $\sigma$ is fixed, then the distribution of $Y_s$ is determined. In particular,
$$\mathbb{E}\left[ \sum_{n\in[N]}d_{\sigma(n)}\sum_{\ell\in[n]}Y_{\sigma(\ell)}\mid \sigma\right]=\sum_{n\in[N]}d_{\sigma(n)}\sum_{\ell\in[n]}\frac{\scale}{\lambda_{\sigma(\ell)}}.$$
Then by the law of iterated expectations, it follows that
$$\mathbb{E}\left[ \sum_{n\in[N]}d_{\sigma(n)}\sum_{\ell\in[n]}Y_{\sigma(\ell)}\right]=\mathbb{E}\left[\mathbb{E}\left[ \sum_{n\in[N]}d_{\sigma(n)}\sum_{\ell\in[n]}Y_{\sigma(\ell)}\mid \sigma \right]\right]=\mathbb{E}\left[\sum_{n\in[N]}d_{\sigma(n)}\sum_{\ell\in[n]}\frac{\scale}{\lambda_{\sigma(\ell)}}\right].$$
Then it follows from~\eqref{eq:stochastic-2} that
\begin{align}\label{eq:stochastic-3}
	\begin{aligned}
		\mathbb{E}\left[C^{\pi}\right]\leq\mathbb{E}\left[\barn \tau + \sum_{n\in[N]}d_{\sigma(n)}\sum_{\ell\in[n]}Y_{\sigma(\ell)}\right]=\mathbb{E}\left[\barn \tau + \sum_{n\in[N]}d_{\sigma(n)}\sum_{\ell\in[n]}\frac{\scale}{\lambda_{\sigma(\ell)}}\right].
	\end{aligned}
\end{align}
Since serving jobs in the order of $1,\ldots,N$ without preemption is optimal, $\sum_{n\in[N]}d_n\sum_{\ell\in[n]}\scale/\lambda_\ell$ is the minimum expected cumulative holding cost. Then~\eqref{eq:stochastic-3} implies that
\begin{align}\label{eq:stochastic-4}
	\begin{aligned}
		\mathbb{E}\left[R^{\pi}\right]&= \mathbb{E}\left[C^{\pi}\right]-\sum_{n\in[N]}d_n\sum_{\ell\in[n]}\frac{\scale}{\lambda_\ell}\\
		&\leq\mathbb{E}_{\sigma}\left[\barn \tau + \sum_{n\in[N]}d_{\sigma(n)}\sum_{\ell\in[n]}\frac{\scale}{\lambda_{\sigma(\ell)}}-\sum_{n\in[N]}d_n\sum_{\ell\in[n]}\frac{\scale}{\lambda_\ell}\right]
	\end{aligned}
\end{align}
\eqref{eq:stochastic-4} implies that to bound $\mathbb{E}\left[R^{\pi}\right]$, it suffices to consider 
$$\barn \tau + \sum_{n\in[N]}d_{\sigma(n)}\sum_{\ell\in[n]}\frac{\scale}{\lambda_{\sigma(\ell)}}-\sum_{n\in[N]}d_n\sum_{\ell\in[n]}\frac{\scale}{\lambda_\ell}$$
for a fixed permutation $\sigma$.
By Lemma~\ref{lemma:regret-1-hetero}, \begin{equation}\label{eq:stochastic-5}
	\sum_{n\in[N]}\left(d_{\sigma(n)}\sum_{\ell\in[n]}\frac{\scale}{\lambda_{\sigma(n)}}-d_{n}\sum_{\ell\in[n]}\frac{\scale}{\lambda_\ell}\right)=\sum_{n\in[N]}\sum_{\ell\in E_n}\left(d_{\sigma(\ell)}\lambda_{\sigma(\ell)}-d_{\sigma(n)}\lambda_{\sigma(n)}\right)\frac{\scale}{\lambda_{\sigma(\ell)}\lambda_{\sigma(n)}}
\end{equation}
where $E_n$ is defined as in~\eqref{eq:E_n} for $n\in[N]$. Moreover, by Lemma~\ref{lemma:largest} and~\eqref{eq:stochastic-5}, 
\begin{align}\label{eq:stochastic-6}
	\begin{aligned}
		&\sum_{n\in[N]}\left(d_{\sigma(n)}\sum_{\ell\in[n]}\frac{\scale}{\lambda_{\sigma(n)}}-d_{n}\sum_{\ell\in[n]}\frac{\scale}{\lambda_\ell}\right)\\
		&=\sum_{n\in[N]}\sum_{\ell\in E_n:\sigma(\ell)\not\in\J_{i_{\max}}}\left(d_{\sigma(\ell)}\lambda_{\sigma(\ell)}-d_{\sigma(n)}\lambda_{\sigma(n)}\right)\frac{\scale}{\lambda_{\sigma(\ell)}\lambda_{\sigma(n)}}
	\end{aligned}
\end{align}

We also slightly modify the definition of clean event so that we can focus on only the first $\Ts$ time slots. We say that the clean holds when the following condition is satisfied:
\begin{equation*}
	\left|c_i - \frac{1}{m}\sum_{s=1}^mX_{i,s}\right| \leq x_m \ \text{for} \ m\in[N\Ts]\ \text{and for} \ i\in \I,\ \text{where}\ x_m=\sqrt{\frac{3}{m}\log N\bar\scale}.
\end{equation*}
Since $X_{i,m}$ for all $m$ are sub-Gaussian with parameter 1 with mean $c_i$, by Hoeffding's inequality, 
$$\mathbb{P}\left[\left|c_i - \frac{1}{m}\sum_{s=1}^mX_{i,s}\right| > x_m\right]\leq 2\exp(-2m x_m^2)$$
for any $m\geq 1$ and $x_m>0$ since $c_i\in[0,1]$. Then we obtain the following by using the union bound:
\begin{align}\label{clean-event-stochastic}
	\begin{aligned}
		\mathbb{P}\left [\text{clean event}\right] &=\mathbb{P}\left[\left|c_i - \frac{1}{m}\sum_{s=1}^mX_{i,s}\right| \leq x_m\ \text{for} \ m\in[N\Ts]\ \text{and for} \ i\in \I\right]\\&\geq 1-\sum_{i\in \I}\sum_{m\in[N\Ts]}\mathbb{P}\left[\left|c_i - \frac{1}{m}\sum_{s=1}^mX_{i,s}\right| > x_m\right]\\
		&\geq 1-2I\sum_{m\in[N\Ts]}\exp(-2m x_m^2)\\
		&=1-2I\cdot N\Ts\cdot \frac{1}{N^6\bar\scale^6}\\
		&\geq 1-\frac{2}{{N^3\bar\scale^5}}
	\end{aligned}
\end{align}
where the last inequality follows from $I\leq N$ and $\Ts\leq N\bar\scale$.
Hence, under the clean event, we have
$$
c_i\mu_i\in\left[\hat c_{i,t}\mu_i-\mu_i\sqrt{\frac{3}{\Ts+1}\log N\bar\scale},\ \hat c_{i,t}\mu_i+\mu_i\sqrt{\frac{3}{\Ts+1}\log N\bar\scale}\right]$$
for all $i\in \I$ and $t\in[\Ts+1]$.
Now consider two classes $i$ and $j$ such that $c_i\mu_i\geq c_j\mu_j$. If clean event holds, $\hat c_{i,t}\mu_i\leq \hat c_{j,t}\mu_j$, and there is at least one remaining job in each of classes $i$ and $j$, then
\begin{equation}\label{eq:stochastic-7}
	c_i\mu_i-c_j\mu_j \leq  (\mu_i+\mu_j)\sqrt{\frac{3}{\Ts+1}\log N\bar\scale}.
\end{equation}

We first consider the case where the clean event holds. Then it follows from~\eqref{eq:stochastic-6} and~\eqref{eq:stochastic-7} that
\begin{align}\label{eq:stochastic-8}
	\begin{aligned}
		&\sum_{n\in[N]}\left(d_{\sigma(n)}\sum_{\ell\in[n]}\frac{\scale}{\lambda_{\sigma(n)}}-d_{n}\sum_{\ell\in[n]}\frac{\scale}{\lambda_\ell}\right)\\
		&=\sum_{n\in[N]}\sum_{\ell\in E_n:\sigma(\ell)\not\in\J_{i_{\max}}}\frac{(\lambda_{\sigma(\ell)}+\lambda_{\sigma(n)})\scale}{\lambda_{\sigma(\ell)}\lambda_{\sigma(n)}}\cdot \sqrt{\frac{3}{\Ts+1}\log N\bar\scale}\\
		&\leq \sum_{n\in[N]}\sum_{\ell\in E_n:\sigma(\ell)\not\in\J_{i_{\max}}}2\bar \scale \sqrt{\frac{3}{\Ts+1}\log N\bar\scale}\\
		&\leq N\cdot \barn\cdot 2\bar \scale \sqrt{\frac{3}{\Ts+1}\log N\bar\scale}.
	\end{aligned}
\end{align}
Then, by~\eqref{eq:stochastic-4} and~\eqref{eq:stochastic-8},
$$
\mathbb{E}\left[R^\pi\mid  \text{clean event}\right]\leq \barn \Ts+ 2N\barn\bar \scale \sqrt{\frac{3}{\Ts+1}\log N\bar\scale}.$$
By our choice of $\Ts=\lfloor N^{2/3} \bar\scale^{2/3}(\log N\bar\scale)^{1/3}\rfloor$,
\begin{equation}\label{eq:stochastic-9}
	\mathbb{E}\left[R^\pi\mid  \text{clean event}\right]=O(N^{2/3}\barn \bar\scale^{2/3}(\log N\bar\scale)^{1/3}).
\end{equation}

Furtherfore, it is straightforward that
$$\mathbb{E} \left[R^\pi\mid \neg \text{clean event} \right] \leq N^2\bar\scale$$
since the longest expected service time of a job is $\bar\scale$ and there are initially $N$ jobs. Therefore,
\begin{align*}
	\mathbb{E} \left[R^\pi\right] &=\mathbb{P}\left [\text{clean event}\right]\cdot \mathbb{E} \left[R^\pi\mid \text{clean event} \right]+ \mathbb{P}\left [\neg\text{clean event}\right]\cdot \mathbb{E} \left[R^\pi\mid \neg \text{clean event} \right]\\
	&\leq  \mathbb{E} \left[R^\pi\mid \text{clean event} \right]+\frac{2}{N^3\scale^5}\cdot N^2\bar\scale\\
	&=O(N^{2/3}\barn \bar\scale^{2/3}(\log N\bar\scale)^{1/3}),
\end{align*}
implying in turn that
$$\mathbb{E} \left[R^\pi \right]= O(N^{2/3}\barn \bar\scale^{2/3}(\log N\bar\scale)^{1/3}),$$
as required.

\end{document}